\documentclass[11pt, a4paper, logo]{googlecloud}

\pdftrailerid{redacted}

\usepackage[authoryear, sort&compress, round]{natbib}

\usepackage[utf8]{inputenc} 
\usepackage[T1]{fontenc}    
\usepackage{hyperref}       
\usepackage{url}            
\usepackage{booktabs}       
\usepackage{amsfonts}       
\usepackage{nicefrac}       
\usepackage{microtype}      
\usepackage{needspace}
\usepackage{graphicx}
\usepackage{tabularx}
\usepackage{colortbl}
\usepackage{pifont}
\usepackage{amssymb}
\usepackage{amsmath}
\usepackage{cleveref}       
\usepackage{makecell} 
\usepackage{bbm}
\usepackage{caption}
\usepackage{listings}
\usepackage{subcaption} 

\usepackage[utf8]{inputenc} 
\usepackage[T1]{fontenc}    
\usepackage{hyperref}       
\usepackage{url}            
\usepackage{booktabs}       
\usepackage{amsfonts}       
\usepackage{nicefrac}       
\usepackage{microtype}      
\usepackage{amsmath}
\usepackage{amssymb}
\usepackage{multirow}
\usepackage{xspace}
\usepackage{wrapfig}
\usepackage{enumitem}
\usepackage{pifont}
\usepackage[dvipsnames]{xcolor}

\usepackage{algorithm}
\usepackage{algpseudocode}

\usepackage{amsmath,amssymb, amsthm, booktabs}
\usepackage[most]{tcolorbox}
\usepackage{xcolor}
\usepackage{enumitem}
\usepackage{newtxmath}

\DeclareSymbolFont{ntxvv}{OML}{ntxmi}{m}{it}
\DeclareMathSymbol{\Varv}{\mathord}{ntxvv}{118}

\newtheorem{lemma}{Lemma}
\newtheorem{theorem}{Theorem}
\newtheorem{corollary}{Corollary}
\newtheorem{definition}{Definition}
\newtheorem{assumption}{Assumption}
\newtheorem{remark}{Remark}

\definecolor{ThinkGray}{HTML}{EAF0F6}
\definecolor{ToolOrange}{HTML}{FFE7C2}
\definecolor{AnswerGreen}{HTML}{E4F7D7}
\definecolor{CiteOlive}{HTML}{EAF3C8}
\definecolor{PlanBlue}{HTML}{DDF1FF}
\definecolor{EvalPurple}{HTML}{F0E5FF}
\definecolor{ReviewPink}{HTML}{FFE4EE}
\definecolor{ObsYellow}{HTML}{FFF5C8}

\newtcolorbox{protocolbox}{
  enhanced,
  colback=white,
  colframe=black!15,
  boxrule=0.5pt,
  arc=2mm,
  left=2mm,
  right=2mm,
  top=1.5mm,
  bottom=1.5mm
}

\usepackage[most]{tcolorbox}
\usepackage{fontawesome} 
\usepackage{tikz}
\usetikzlibrary{shadows}
\usepackage{tabularx,adjustbox}
\usepackage{cprotect}
\newtcbox{\codeblock}{
  colback=gray!10,
  colframe=black!20,
  left=0pt,right=0pt,top=0pt,bottom=0pt,
  boxrule=0.2pt,
  fontupper=\footnotesize\ttfamily,
  sharp corners,
  nobeforeafter,
  width=\linewidth,
  halign=flush left,
}
\definecolor{cream}{RGB}{255, 251, 234}
\definecolor{goldenstar}{RGB}{255, 204, 0}

\tcbset{
  takeawaystyle/.style={
    colback=cream,
    colframe=black!30,
    boxrule=0.4pt,
    arc=3pt,
    left=6pt,
    right=6pt,
    top=6pt,
    bottom=6pt,
    fonttitle=\bfseries,
    coltitle=black,
    enhanced,
    drop shadow={shadow xshift=1pt, shadow yshift=-1pt, opacity=0.15},
  }
}
\definecolor{midnightgreen}{rgb}{0.0, 0.29, 0.33}
\definecolor{deepgreen}{HTML}{0aa344}
\definecolor{deeppurple}{HTML}{7030a0}
\definecolor{deepblue}{HTML}{171d91}
\definecolor{brown}{HTML}{843c0c}
\definecolor{shadered}{HTML}{ffe5e5}
\definecolor{shadegreen}{HTML}{e5f7ed}
\definecolor{msftBlack}{RGB}{0,0,0}
\definecolor{lightred}{RGB}{255,163,163}
\definecolor{deepred}{RGB}{146,0,0}

\tcbset{
    tagstyle/.style={
        on line, 
        arc=2pt, 
        boxrule=0pt, 
        boxsep=0pt, 
        left=3pt, 
        right=3pt, 
        top=1pt, 
        bottom=1pt,
        before upper=\strut,
        fontupper=\ttfamily 
    }
}

\newtcbox{\think}{tagstyle, colback=cyan!10, colframe=cyan!10, fontupper=\ttfamily\color{teal!80!black}}
\newtcbox{\tool}{tagstyle, colback=pink!20, colframe=pink!20, fontupper=\ttfamily\color{magenta!70!black}}
\newtcbox{\answer}{tagstyle, colback=orange!15, colframe=orange!15, fontupper=\ttfamily\color{orange!80!black}}
\newtcbox{\citetag}{tagstyle, colback=green!15, colframe=green!15, fontupper=\ttfamily\color{green!60!black}}

\usepackage[normalem]{ulem}
\usepackage{adjustbox}
\newcommand{\ours}{\textsc{RubricEM}\xspace}
\newcommand{\model}{\textsc{RubricEM}-8B\xspace}

\definecolor{LavenderLight}{HTML}{C7C3F5}
\definecolor{LightCoral}{RGB}{240,128,128}
\definecolor{LightBlue}{RGB}{173,216,230}
\tcolorboxenvironment{remark}{
    colback=LightCoral!10,       
    colframe=black,     
    arc=1mm,               
    top=.6mm,
    bottom=.6mm,
    boxrule=.5pt,           
    right=1mm,
    fonttitle=\bfseries,   
    left=1mm,              
    attach boxed title to top left={yshift=-2mm, xshift=4mm},
    boxed title style={
        colback=black!40,
        arc=1mm,
    },
    before skip=6pt,   
    after skip=6pt,    
}

\tcolorboxenvironment{theorem}{
    colback=LightBlue!20,       
    colframe=black,     
    arc=1mm,               
    top=.6mm,
    bottom=.6mm,
    right=1mm,
    boxrule=.5pt,           
    fonttitle=\bfseries,   
    left=1mm,              
    attach boxed title to top left={yshift=-2mm, xshift=4mm},
    boxed title style={
        colback=black!40,
        arc=1mm,
    },
    before skip=6pt,   
    after skip=6pt,    
}

\definecolor{tblHeader}{HTML}{F7F8FA}
\definecolor{tblRule}{HTML}{4B5563}
\definecolor{tblMuted}{HTML}{8A8A8A}

\definecolor{tblClosedHead}{HTML}{E3E6EA}
\definecolor{tblClosedRow}{HTML}{F2F4F6}

\definecolor{tblFixedHead}{HTML}{DCE9F5}
\definecolor{tblFixedRow}{HTML}{F0F6FB}

\definecolor{tblOpenHead}{HTML}{EEE5D2}
\definecolor{tblOpenRow}{HTML}{FAF6EC}

\definecolor{tblOursHead}{HTML}{D6ECE5}
\definecolor{tblOursRow}{HTML}{EDF7F4}
\definecolor{tblOursHi}{HTML}{DFF1EA}

\newcommand{\tblbest}[1]{\textbf{#1}}
\newcommand{\tblNA}{\textcolor{tblMuted}{--}}

\newcommand{\tblsection}[2]{%
  \rowcolor{#1}
  \multicolumn{6}{@{}l}{\textbf{\textit{\textcolor{black}{#2}}}}\\[-1pt]
}

\renewcommand{\tblsection}[2]{%
  \rowcolor{#1}%
  \multicolumn{6}{l}{\textbf{\textit{#2}}}\\
}

\NewDocumentCommand{\gaotang}
{ mO{} }{\textcolor{purple}{\textsuperscript{\textit{Gaotang}}\textsf{\textbf{\small[#1]}}}}

\title{RubricEM: Meta-RL with Rubric-guided Policy Decomposition beyond Verifiable Rewards}

\correspondingauthor{Gaotang Li \href{mailto:gaotang3@illinois.edu}{<gaotang3@illinois.edu>} and Bhavana Dalvi Mishra \href{mailto:bhavana@google.com}{<bhavana@google.com>}. \\This work was done while Gaotang Li interned at Google Cloud AI Research.}

\author[1]{Gaotang Li}
\author[2]{Bhavana Dalvi Mishra}
\author[2]{Zifeng Wang}
\author[2]{Jun Yan}
\author[2]{Yanfei Chen}
\author[2]{Chun-Liang Li}
\author[2]{Long T. Le}
\author[2]{Rujun Han}
\author[2]{George Lee}
\author[1]{Hanghang Tong}
\author[2]{Chen-Yu Lee}
\author[2]{Tomas Pfister}

\affil[1]{University of Illinois Urbana-Champaign}
\affil[2]{Google Cloud AI Research}

\begin{document}

\begin{abstract}
Training deep research agents—systems that plan, search, evaluate evidence, and synthesize long-form reports—pushes reinforcement learning beyond the regime of verifiable rewards. Their outputs lack ground-truth answers, their trajectories span many tool-augmented decisions, and standard post-training offers little mechanism for turning past attempts into reusable experience. In this work, we argue that rubrics should serve not merely as final-answer evaluators, but as the shared interface that structures policy execution, judge feedback, and agent memory. Based on this view, we introduce \ours, a rubric-guided reinforcement learning framework that combines stagewise policy decomposition with reflection-based meta-policy training. \ours first makes research trajectories stage-aware by conditioning planning, evidence gathering, review, and synthesis on self-generated rubrics. It then assigns credit with Stage-Structured GRPO, which uses stagewise rubric judgments to provide denser semantic feedback for long-horizon optimization. In parallel, \ours trains a shared-backbone reflection meta-policy that distills judged trajectories into reusable rubric-grounded guidance for future attempts. The resulting \model achieves strong performance across four representative long-form research benchmarks, outperforming comparable open models and approaching proprietary deep-research systems. 
Beyond final performance, we perform thorough analyses to understand the key ingredients of \ours.
\end{abstract}

\maketitle

\addtocontents{toc}{\protect\setcounter{tocdepth}{-1}}

\section{Introduction}
\label{sec:intro}

Deep research agents answer complex information-seeking questions by autonomously planning, searching, evaluating evidence, and synthesizing long-form reports.
Yet how to train this capability remains unclear: proprietary systems such as Gemini and OpenAI's deep research~\citep{google2025gemini,openai2025deepresearch} reveal little about their methodology, while most existing efforts rely on verifiable search proxies~\citep{jin2025searchr1,song2025r1searcher,nguyen2025sfrdeepresearch} or high-quality imitation data~\citep{tongyi2025deepresearch,moonshot2025kimiresearcher,perplexity2025sonardeepresearch}.
End-to-end RL for long-form research is difficult because outputs lack ground-truth verification, judge feedback is coarse and delayed over long tool-augmented rollouts, and conventional post-training mostly converts judged attempts into parametric updates without producing explicit reusable guidance.
This raises the central question of this work:

\begin{tcolorbox}[
    enhanced,
    colback=white,             
    colframe=white,            
    coltitle=white,            
    fonttitle=\bfseries,       
    boxrule=.1pt,
    width=\linewidth,        
    top=1mm,
    bottom=1mm,
    left=0mm,                  
    right=0mm,                 
    before skip=4pt, after skip=4pt,
    attach boxed title to top left={
        yshift=-2mm,
        xshift=2mm
    },
    boxed title style={
        colback=black,
        sharp corners,
        boxrule=0pt,
        top=2pt, bottom=2pt, left=2pt, right=2pt
    }                        
]
\begin{center}
    \textbf{\raisebox{-0.2\height}{\includegraphics[height=1.3em]{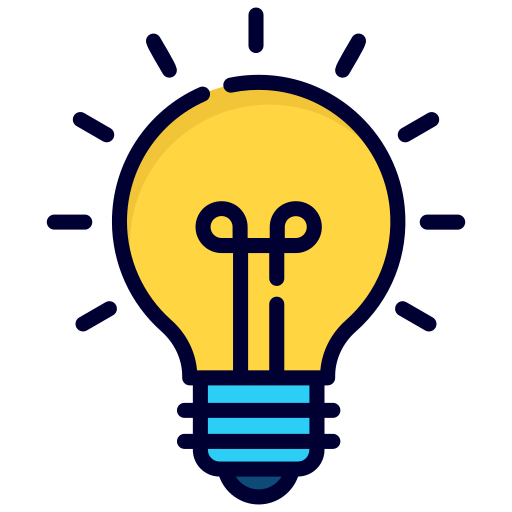}}%
    \ 
    \textit{How can reinforcement learning train deep research agents beyond verifiable rewards, while enabling long-horizon credit assignment and learning from experience?}
    }
\end{center}
\end{tcolorbox}

Rubrics offer a natural handle for open-ended tasks whose quality cannot be verified by exact answers~\citep{gunjal2025rubrics,shao2025dr,chen2025rm}.
Prior work mainly uses them as judge-side criteria for assigning rewards to final responses.
Our key perspective is that rubrics should instead serve as a shared interface throughout reinforcement learning.
The same criteria that define success can guide the agent's planning and search, support process-level judgment over intermediate decisions, and be distilled into reusable reflections for learning from experience.
Based on this view, we propose \ours, a rubric-guided reinforcement learning framework that combines stagewise policy decomposition with reflection-based meta-policy evolution.
The name \ours reflects an \textbf{Expectation--Maximization (EM)-inspired estimate--maximize view}~\citep{dempster1977em} (beyond supervised settings): the latent structure of an open-ended research task---what matters, where credit belongs, and what should be remembered---is estimated through rubrics, which condition policy reasoning, judge scoring, and memory evolution.
Training then maximizes the task policy and reflection meta-policy under these rubric-conditioned estimates.

\ours first realizes rubric-guided policy decomposition through a \textbf{rubric-guided reasoning scaffold}.
During planning, the agent generates task-specific rubrics and carries them through four stages: planning, research, review, and answer synthesis.
This converts a flat long-horizon rollout into rubric-conditioned decision stages, where each stage defines both a distinct decision mode and a natural unit for optimization.
The scaffold also makes rubrics operational across the training loop: they guide search and synthesis, serve as on-policy references for the judge, and produce structured traces that can be distilled into reusable reflections.

Building on this decomposition, \ours assigns credit with \textbf{Stage-Structured GRPO} (SS-GRPO).
Rather than broadcast a single terminal score to all tokens, SS-GRPO scores Plan, Research, Review, and Answer with stage-specific rubrics.
The judge maintains an evolving rubric buffer for each stage, extending prior evolving-rubric evaluation~\citep{shao2025dr} from final-answer judging to process-level feedback.
These stagewise scores define denser returns that combine local stage quality with downstream impact, giving GRPO finer-grained credit signals while remaining critic-free.

\begin{figure}[t]
    \centering
    \includegraphics[width=\linewidth]{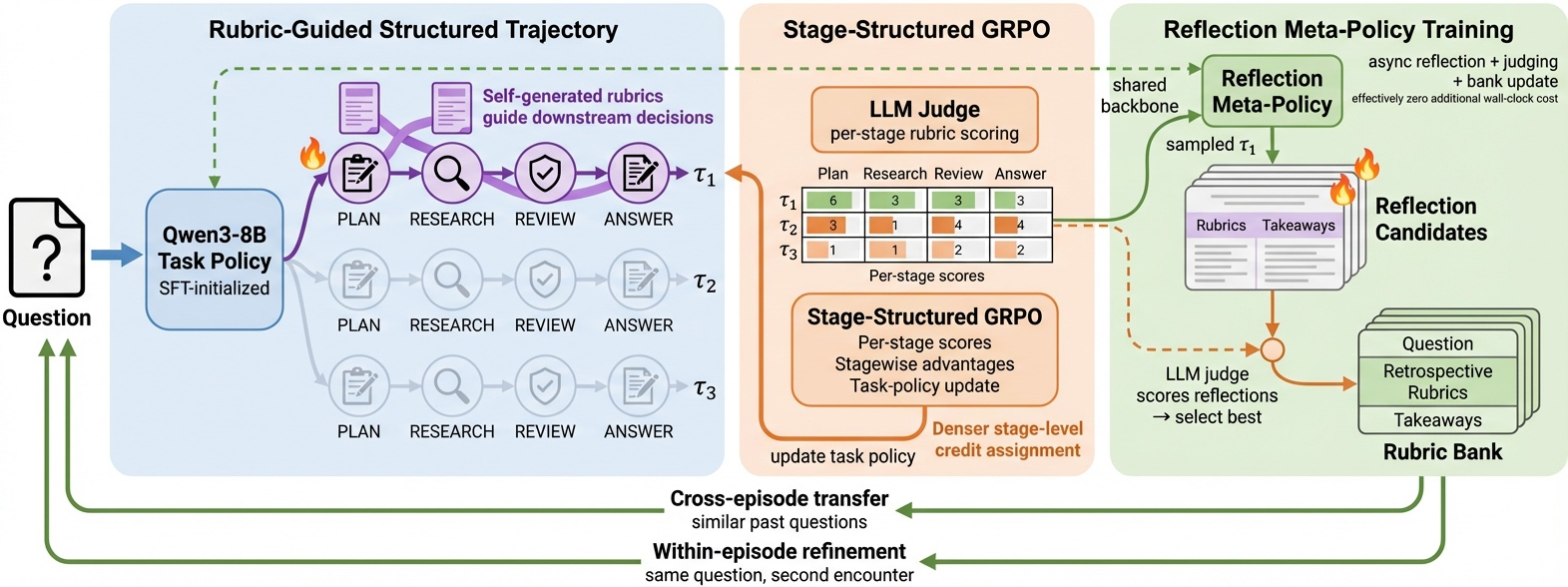}
    \caption{\ours first instills a rubric-guided structured scaffold into the task policy, so that each trajectory follows stage-structured reasoning under self-generated rubrics. During RL, we propose Stage-Structured GRPO to provide finer-grained, denser credit assignment. In parallel, a shared-backbone reflection meta-policy is jointly trained to generate rubric-grounded reflections, which are stored in a rubric bank to support both cross-episode transfer and within-episode refinement. Together, rubrics serve as the shared interface across the agent, the judge, and their evolutions.}
    \label{fig:fig1}
\end{figure}

Finally,  \ours makes experience reuse an explicit RL objective through \textbf{Reflection Meta-Policy} training.
The task policy and reflection meta-policy share one backbone: after a task rollout is judged, the backbone samples rubric-grounded reflection candidates conditioned only on the query and raw trajectory, while a separate judge scores these candidates using the task-rollout judgments.
The reflection scores provide auxiliary RL rewards on reflection tokens, updating the shared parameters; the highest-scored accepted reflection is also written into an agent rubric bank as natural-language memory.
The bank conditions future rollouts in two modes: within-episode refinement retrieves the previous reflection for the same query, while cross-episode transfer retrieves reflections from related questions.
Thus, each reflection updates the agent both parametrically and textually.
We designed an efficient asynchronous reflection branch to train this meta-policy alongside task-policy RL without adding a sequential bottleneck, a notable problem in prior meta-RL literature~\citep{jiang2026metarl}.

Together, these components yield \ours-8B, an 8B deep research agent trained with 1400 RL steps.
Fig.~\ref{fig:fig1} gives an overview of the framework, and Fig.~\ref{fig:example} illustrates a concrete example.
Across four representative long-form research benchmarks, \ours-8B achieves state-of-the-art performance among comparable open models, improves over strong prior RL systems with fewer training steps, and approaches proprietary deep-research systems such as Gemini and OpenAI Deep Research.
Beyond final scores, we conduct extensive ablations and analyses, including multiple 600-step RL ablations, scaffold comparisons, inference scaling, and out-of-domain short-form transfer.
These results support a broader recipe for long-horizon RL beyond verifiable rewards: expose task structure, assign credit to that structure, and convert judged attempts into reusable experience.
\begin{figure}[t]
    \centering
    \includegraphics[width=\linewidth]{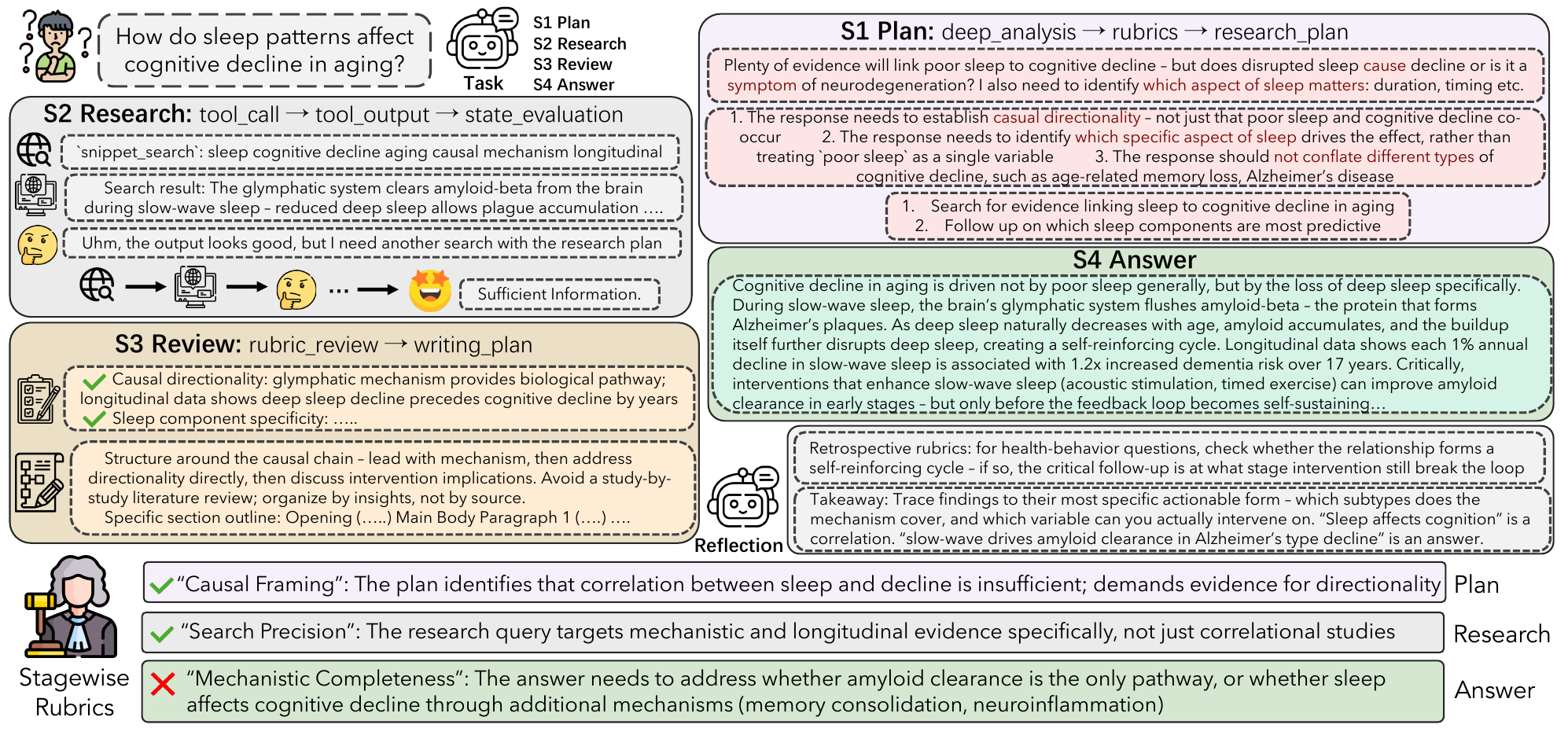}
    \caption{Example of the rubric-guided stage-structured search agent trajectory, meta-policy reflection, and stagewise judge rubrics during a single RL step. 
    }
    \label{fig:example}
\end{figure}

\section{Related Work}
\label{sec:main_related_work}

\textbf{The Post-training recipes of deep research agents.} 
Most existing open-source training efforts for deep research focus on short-answer tasks with verifiable rewards~\citep{jin2025searchr1,song2025r1searcher,chen2025research,jiang2025deepretrieval,zhao2025rsearch,han2025deep}.
Meanwhile, proprietary systems mainly report scaling high-quality imitation data and training on verifiable short-form settings~\citep{openai2025deepresearch,google2025geminideepresearch,perplexity2025sonardeepresearch,moonshot2025kimiresearcher}.
Our work takes an orthogonal direction: making reinforcement learning effective for open-ended, long-form deep research.
The closest work to ours is DR Tulu~\citep{shao2025dr}, which studies end-to-end RL for deep research beyond verifiable rewards.
We build on this foundation by introducing fine-grained credit assignment and jointly trained meta-policy evolution, yielding denser learning signals and reusable guidance during the challenging long-horizon RL process.

\textbf{Credit assignment and meta-RL with language models.} 
Recent work on agentic reinforcement learning has increasingly emphasized the need for finer-grained credit assignment~\citep{mousavi2026post,deepseek2026v4,qian2025toolrl,xi2026agentprm,zhang2026reasoning}.
However, most of these methods operate in verifiable settings, where trajectories can be decomposed into subgoals with reliable process-level supervision.
A related line of work trains meta-policies during reinforcement learning, often referred to as Meta-RL~\citep{jiang2026metarl,yang2026mage}.
While promising, these methods are typically evaluated on verifiable or synthetic tasks and often introduce explicit dependencies across rollouts, leading to substantial training overhead.
In contrast, our work targets open-ended real-world deep research tasks, where neither intermediate progress nor final answers admit simple 
automatic verification.
We improve meta-policy training efficiency by removing cross-rollout dependencies and designing an efficient reflection-training infrastructure. 
\section{\ours}
\label{sec:method}

\subsection{Preliminaries and notations}

We study deep research agents for complex information-seeking queries.
Given a query $q \sim \mathcal{D}$, an agent interacts with a tool environment $\mathcal{T}$ and produces a trajectory
\(
\tau = (a_1, o_1, \dots, a_T, o_T),
\)
where $a_t$ denotes the agent emission at turn $t$, which may be either a textual segment or a structured tool call, and $o_t$ is the resulting tool output (with $o_t=\varnothing$ when no tool is invoked).
We consider a language-model-based agent that autoregressively samples the next step $a_t \sim \pi_\theta(a_t \mid h_t), h_t = (q, a_{<t}, o_{<t}),$
and eventually produces a final long-form answer
grounded in retrieved evidence.

\subsection{Structured Reasoning Scaffold}
\label{sec:scaffold}

A central design choice in \ours is to impose explicit stage structure on agent trajectories.
A \emph{stage} refers to a semantically defined segment of the trajectory that serves a distinct decision role, such as planning, evidence gathering, self-evaluation, or final synthesis.
In long-horizon research tasks, these stages provide a stable high-level organization over otherwise noisy token-level generation.
When these decision modes are collapsed into a flat autoregressive process, the trajectory lacks such stage-level organization.
The policy must therefore infer its current decision mode from local context alone, which can lead to inefficient exploration and compounding errors over long horizons~\citep{xu2025cognitive,feng2026environment}.
We formalize the value of explicit stage information as follows.
Let $h$ denote a random decision point along a trajectory induced by the current policy, let $c=\phi(h)$ denote a compressed state representation, let $z$ denote the current stage label, and let \(U(h,a)\) denote the expected downstream value of taking action \(a\) at history \(h\) and then continuing the rollout.

\begin{theorem}[Value of stage information]
\label{thm:stage-info-main}
Under mild assumptions in~Assump.~\ref{assump:stage-info-setup}, define
\[
V_{\mathrm{flat}}
:=
\mathbb E\!\left[
\max_{a \in \mathcal A} \mathbb E[U(h,a)\mid c]
\right],
\qquad
V_{\mathrm{stage}}
:=
\mathbb E\!\left[
\max_{a \in \mathcal A} \mathbb E[U(h,a)\mid c,z]
\right].
\]

If there exists a measurable set $\mathcal C_0$ with positive probability and two task-relevant stages such that for every $c \in \mathcal C_0$, $p(z\mid c)>0$,  $p(z'\mid c)>0$, and that $\arg\max_{a \in \mathcal A} \mathbb E[U(h,a)\mid c,z]
\;\cap\;
\arg\max_{a \in \mathcal A} \mathbb E[U(h,a)\mid c,z']
=
\varnothing.$
Then 
\[
V_{\mathrm{stage}} > V_{\mathrm{flat}}.
\]
\end{theorem}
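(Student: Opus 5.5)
The proof is a strict version of the ``value of information'' inequality: conditioning on more information cannot lower the expected maximum, and it strictly raises it when the optimal actions under different information states disagree.

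Write $Q(c,z,a) := \mathbb E[U(h,a)\mid c,z]$ and $\bar Q(c,a) := \mathbb E[U(h,a)\mid c]$. By the tower property,
\[
\bar Q(c,a)=\sum_{z} p(z\mid c)\,Q(c,z,a),
\]
with an integral in place of the sum if the stage set is not discrete.

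\textbf{Pointwise inequality.} For each fixed $c$, let $a^\star(c)$ be a maximizer of $\bar Q(c,\cdot)$. It exists when $\mathcal A$ is finite; otherwise we rely on Assump.~\ref{assump:stage-info-setup}, which presumably gives integrability and either finiteness or compactness with continuity. Then
\[
\max_a \bar Q(c,a)=\sum_z p(z\mid c)\,Q(c,z,a^\star(c))\le \sum_z p(z\mid c)\max_a Q(c,z,a).
\]
Taking expectations over $c$ gives $V_{\mathrm{flat}}\le V_{\mathrm{stage}}$. The tower property also identifies the right-hand side with $\mathbb E[\max_a Q(c,z,a)\mid c]$, which is what makes this the same expectation as in $V_{\mathrm{stage}}$.

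\textbf{Strictness on $\mathcal C_0$.} Fix $c\in\mathcal C_0$. Since the two argmax sets are disjoint, $a^\star(c)$ fails to be optimal for at least one of $z$ or $z'$; say for $z$. Then $Q(c,z,a^\star(c))<\max_a Q(c,z,a)$. This term carries weight $p(z\mid c)>0$, and every other term satisfies the weak inequality, so the pointwise inequality is strict for every $c\in\mathcal C_0$.

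\textbf{Main obstacle: from pointwise to expected strictness.} A pointwise strict gap on a set of positive probability yields a strict gap in expectation once two conditions hold:
\begin{itemize}
\item the gap function $g(c):=\mathbb E[\max_a Q(c,z,a)\mid c]-\max_a\bar Q(c,a)$ is measurable and nonnegative everywhere;
\item $g>0$ on $\mathcal C_0$, where $\Pr(c \in \mathcal C_0)>0$.
\end{itemize}
Then $\mathbb E[g]>0$, because a nonnegative function that is positive on a set of positive measure has positive integral.

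The delicate points are measurability of $\max_a$ and of a selection $a^\star(c)$, and finiteness of both values so that subtracting them is legitimate. With finite $\mathcal A$ and bounded $U$ these are immediate. In the general case I would invoke the regularity in Assump.~\ref{assump:stage-info-setup}, namely bounded or integrable $U$ and a finite or compact action set, together with a measurable selection theorem. If a maximizer need not exist, I would replace $a^\star(c)$ by an $\varepsilon$-maximizer; the disjoint-argmax condition then needs a uniform separation margin on $\mathcal C_0$, which I expect the assumption to supply or which holds automatically when $\mathcal A$ is finite.
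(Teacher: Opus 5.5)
Your proposal is correct and matches the paper's proof: the paper likewise writes $\bar q(C,a)=\sum_z p(z\mid C)\,q(C,z,a)$ (Lemma~\ref{lem:collapse-stage-cond}), shows $V_{\mathrm{stage}}-V_{\mathrm{flat}}=\mathbb E[\Delta_{\mathrm{alias}}(C)]$ with $\Delta_{\mathrm{alias}}(c)=\sum_z p(z\mid c)\bigl(\max_a q(c,z,a)-q(c,z,a_c^{\mathrm{flat}})\bigr)\ge 0$, and gets strictness on $\mathcal C_0$ because the flat maximizer cannot lie in both disjoint argmax sets. Assumption~\ref{assump:stage-info-setup} already gives a finite action set, finitely many stages, and integrable (not necessarily bounded) $U$, so your worries about compactness, measurable selection, and $\varepsilon$-maximizers are unnecessary: the gap function is a finite max and sum of measurable integrable functions, and $a^\star(c)$ is only used pointwise.
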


Theorem~\ref{thm:stage-info-main} identifies when explicit stage information is beneficial.
In long-horizon research trajectories, the same local context may call for different actions across planning, searching, reviewing, and final synthesis.
When these stage-specific optimal actions disagree, a flat policy acts under an aliased context, whereas a stage-aware policy can condition on the current decision mode.
This yields a strict value improvement on any positive-probability set where such aliasing occurs.
We therefore make stage structure explicit rather than implicit in a flat trajectory.
The proof is deferred to Appen.~\ref{app:theory:conditioning}.

\textbf{Specific stage instantiation.}
We instantiate this idea with four rubric-guided stages: \\
$\textsc{Plan} \rightarrow \textsc{Research} \rightarrow \textsc{Review} \rightarrow \textsc{Answer}.$ 
Each stage is marked by a stage-level XML tag with a lightweight internal schema.
The outer scaffold is sequential, while \textsc{Research} allows local iteration and in-place plan revision.
The overview is in Fig.~\ref{fig:scaffold} and detailed below:

\noindent\textbf{\textsc{Plan}.}
Within \texttt{<structured\_plan>}, the agent analyzes the user's explicit and implicit needs in \texttt{<analysis>}, translates them into \texttt{<rubrics>}, and then proposes a concrete \texttt{<research\_plan>}.
The rubrics specify (i) a knowledge checklist of information to gather, (ii) analytical criteria for the final write-up, and (iii) negative constraints on what the answer should avoid.

\noindent\textbf{\textsc{Research}.}
The agent iteratively issues \texttt{<call\_tool>} actions. 
After each tool response, the agent performs a \texttt{<state\_evaluation>} step, which compares the accumulated evidence against the plan and rubrics, decides whether further search is needed, and optionally revises the \textsc{Plan} in place.

\noindent\textbf{\textsc{Review}.}
Within \texttt{<review>}, the agent maps collected evidence back to the rubrics through \\ \texttt{<rubrics\_review>} and prepares a writing plan, including the main thesis and section outline.

\noindent\textbf{\textsc{Answer}.}
Within \texttt{<answer>}, the agent synthesizes the final long-form response with citation support.

\begin{figure}[t]
    \centering
    \includegraphics[width=\linewidth]{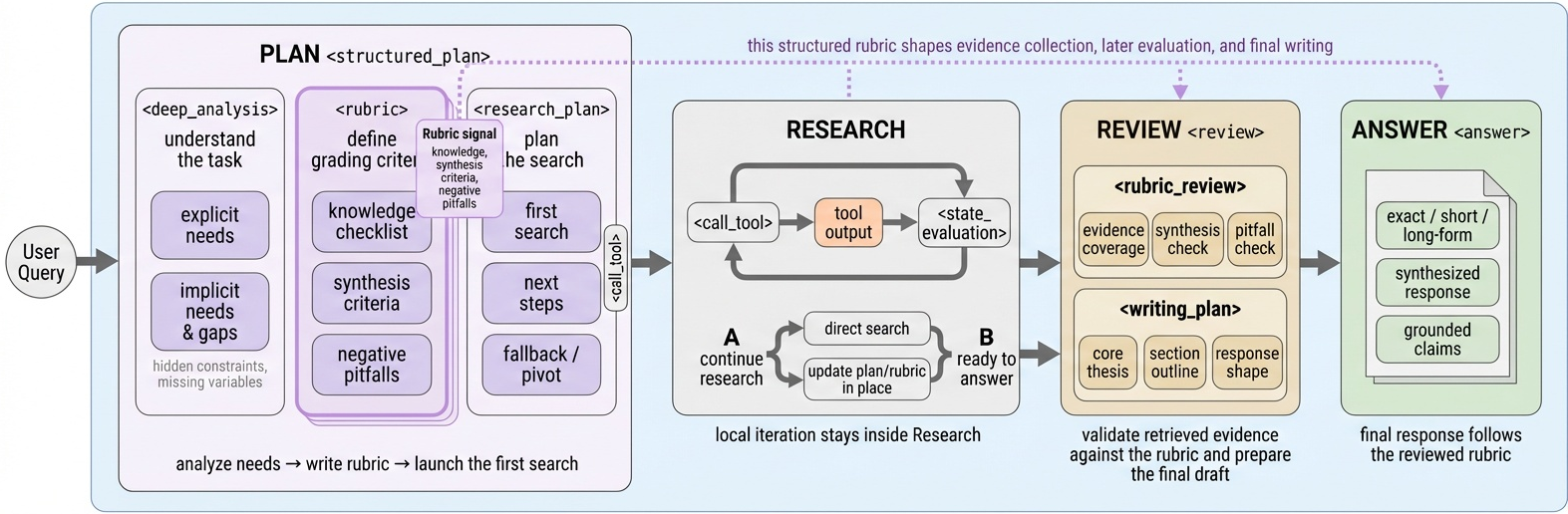}
    \caption{Rubric-guided structured reasoning scaffold in \ours. The agent follows a high-level workflow of Plan, Research, Review, and Answer, each with a lightweight internal schema. Rubrics are generated in Plan and then guide evidence collection, evaluation, and final writing. During Research, the agent iterates between tool use and state evaluation; once evidence is sufficient, it enters Review and then produces the final grounded Answer. The full description is in Appen.~\ref{app:scaffold_sub_section}}
    \label{fig:scaffold}
\end{figure}

Importantly, rubrics are not merely evaluation artifacts in \ours: they are generated in \textsc{Plan}, can be revised during \textsc{Research}, and guide subsequent stages throughout the trajectory.
This instantiation is motivated by three considerations.
First, it is task-aligned: deep research naturally involves planning, evidence acquisition, self-evaluation, and synthesis, so the four stages reflect the task rather than an arbitrary template.
Second, explicit criteria give the policy a stable target for planning, self-checking, and feedback, echoing rubric-based learning and explicit-principle approaches~\citep{panadero2017review,tai2018developing,bai2022constitutional}.
Third, because self-generated rubrics vary across rollouts of the same query, they provide per-rollout references that help the judge discover more aligned and discriminative stagewise criteria.
We validate the effectiveness of the scaffold and its importance for RL in Sec.~\ref{subsec:scaffolding_inference}.

Finally, the scaffold directly enables our later RL design: stage boundaries define the units for SS-GRPO credit assignment, and rubric-conditioned traces define the memory format for the rubric bank (Sec.~\ref{sec:evolving-policy}).
We therefore view the scaffold not as a formatting warm-up, but as an SFT-induced structural prior that prepares the policy for effective RL~\citep{zhang2026good}.

\textbf{SFT distillation.}
To instantiate the scaffold in the policy, we perform teacher--student distillation from Gemini-3.1-Pro.
For each query, the teacher is prompted to produce a stage-structured trajectory that follows the XML schema above.
Because raw teacher traces do not always obey the target scaffold, we apply rejection sampling to discard outputs that violate stage boundaries, tool-calling syntax, citation format, or grounding constraints.
The resulting SFT corpus teaches Qwen3-8B not only tool use and evidence citation, but also the stage discipline and rubric conditioning required by our later RL design.
We defer details of the data-generation and filtering pipeline to Append.~\ref{app:sft-data}.

\subsection{Stage-Structured GRPO}
\label{sec:ssgrpo}

Building on the structured scaffold above, we propose \textbf{Stage-Structured GRPO (SS-GRPO)}
for finer-grained credit assignment in deep research. 
Prior work on process supervision and long-horizon agent RL suggests that denser process-level rewards can substantially improve credit assignment~\citep{yang2026patching,tan2026hindsight,qian2025toolrl,wu2026demystifying,wang2026subgoal}. 
However, in open-ended deep research, we do not have oracle intermediate rewards: the quality of planning, search, review, and synthesis is semantic, task-dependent, and difficult to verify automatically. 
SS-GRPO therefore uses the explicit stage boundaries from Sec.~\ref{sec:scaffold} together with rubric-guided judging to construct stage-level learning signals, as illustrated in Fig.~\ref{fig:detailed_RL}.

\textbf{Stagewise scores and returns.}
Given a query \(q\), we sample \(n\) rollouts
\(\{\tau_i\}_{i=1}^n \sim \pi_{\theta}(\cdot \mid q)\) and partition each into \(K\) semantic stages; in our instantiation, \(K=4\) corresponding to Plan, Research, Review, and Answer.
Let \(\mathcal B_{i,k}\) be the tokens in stage \(k\) of rollout \(\tau_i\), and let
\(R_{i,k}\in[0,1]\) be the LLM-judge score under the corresponding stage rubric.
Rather than assign the same final score to all tokens, SS-GRPO uses a causal stage-dependence matrix
\(\Lambda=(\lambda_{k,j})\), with \(\lambda_{k,j}=0\) for \(j<k\) and \(\lambda_{k,k}=1\), and defines
\(
G^{\Lambda}_{i,k}=\sum_{j=k}^{K}\lambda_{k,j}R_{i,j}.
\)
Thus each stage keeps its own score while receiving credit from downstream stages it enables.
Terminal reward broadcast is considered a special case.

\textbf{When stage returns help.}
The benefit of stage returns depends on a simple trade-off: intermediate judging recovers process information omitted by terminal-only rewards, but also introduces judge noise.
Appendix~\ref{app:theory:credit}, Thm.~\ref{thm:stage-credit-main} formalizes this intuition: stage-weighted credit improves the gradient approximation when the recovered intermediate signal outweighs cumulative judge misalignment.
Thus SS-GRPO needs no oracle process reward, only sufficiently aligned stagewise judging with bounded noise. This motivates the stagewise evolving-rubric judge below.

\textbf{Stagewise evolving-rubric judge.}
As shown in the top panel of Fig.~\ref{fig:detailed_RL}, the judge contrasts multiple rollouts for the same query and proposes discriminative rubrics for each stage. 
The judge maintains a separate rubric buffer for Plan, Research, Review, and Answer, reuses previous high-discrimination rubrics, and removes items that no longer separate trajectory quality. 
Because the policy trajectories are themselves rubric-guided, the judge can also use trajectory-generated rubrics as references when constructing new judge rubrics, while still scoring trajectories against the judge-side rubric buffer rather than blindly rewarding a rollout's own self-rubric. 
This makes the intermediate rewards both stage-local and adaptive to the current policy distribution. Further details are deferred to Append.~\ref{app:stagewise_judge}

\textbf{Stagewise normalization and objective.}
We instantiate SS-GRPO as a critic-free stagewise variant of GRPO by normalizing returns separately within each stage across the rollout group:
\[
A_{i,k}
=
\frac{
G_{i,k}^{\Lambda}
-
\frac{1}{n}\sum_{i'=1}^{n}G_{i',k}^{\Lambda}
}{
\operatorname{Std}_{i'}[G_{i',k}^{\Lambda}]+\epsilon
}.
\]
All tokens in the same stage block \(\mathcal B_{i,k}\) share the advantage \(A_{i,k}\).
The resulting objective is
\begin{equation}
\mathcal L_{\mathrm{SS\text{-}GRPO}}
=
-\frac{1}{n}\sum_{i=1}^{n}\sum_{k=1}^{K}\sum_{t\in\mathcal B_{i,k}}
\min\!\Big(
\rho_{i,t}A_{i,k},
\operatorname{clip}(\rho_{i,t},1-\eta,1+\eta)A_{i,k}
\Big)
+
\beta D_{\mathrm{KL}}(\pi_\theta\,\|\,\pi_{\mathrm{ref}}), 
\label{eqa:ss-grpo}
\end{equation}

where
\(
\rho_{i,t}
=
\frac{\pi_\theta(a_{i,t}\mid h_{i,t})}
{\pi_{\theta_{\mathrm{old}}}(a_{i,t}\mid h_{i,t})}.
\)
We keep the estimator critic-free because stage supervision is judge-defined, evolving during training, and collected from expensive long-horizon tool-augmented rollouts; adding a learned stage-conditioned critic would introduce substantial additional complexity.
\subsection{Meta-Policy Training with Reinforcement Learning}
\label{sec:evolving-policy}

Beyond single-trajectory optimization, \ours makes experience reuse part of RL.
A shared backbone serves as both the task policy and a reflection meta-policy: rubric-guided task rollouts provide judged experience, and the reflection policy is trained with LLM-judge rewards to produce reusable natural-language guidance.
Accepted reflections enter an \emph{agent rubric bank} for future retrieval, giving the agent both parametric RL updates and textual memory updates.
This retains the meta-RL goal of improving future rollouts from past experience, while our asynchronous design avoids a sequential rollout--reflection--update bottleneck.

\begin{figure}[t]
    \centering
    \includegraphics[width=0.95\linewidth]{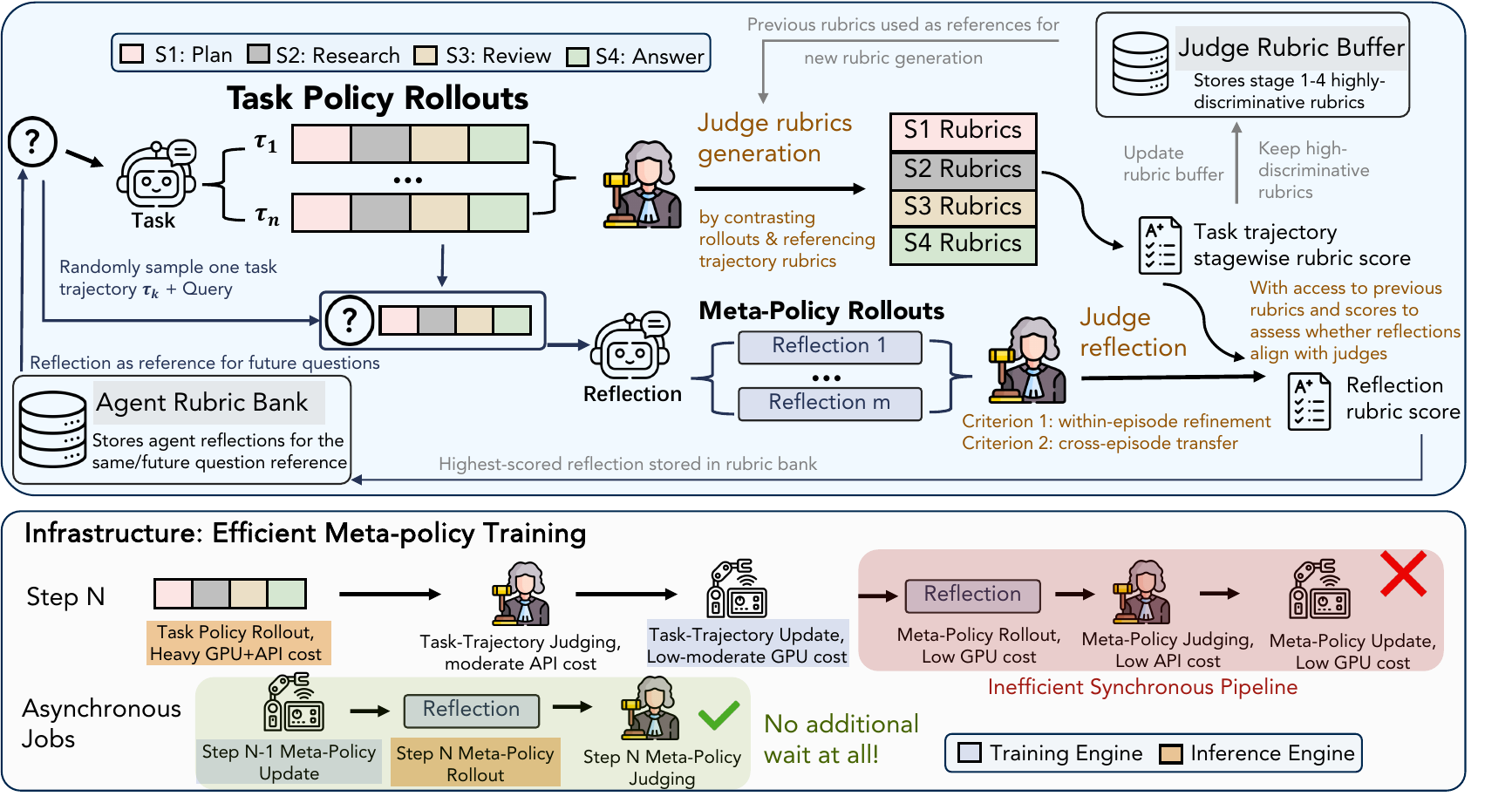}
    \caption{\textbf{Detailed RL training pipeline of \ours}.
The top panel expands Fig.~\ref{fig:fig1} with two coupled judge-agent loops.
For task-policy training, an LLM judge contrasts stage-structured rollouts to build a buffer of discriminative stagewise rubrics, which provide dense stagewise rewards for SS-GRPO. 
For reflection meta-policy training, a sampled trajectory and query prompt multiple candidate reflections; the judge scores all candidates using its accumulated rubrics and trajectory scores under within-episode and cross-episode criteria. 
These scores update the meta-policy, and only the best reflection is written to the agent rubric bank.
The bottom panel shows the asynchronous implementation, where reflection rollout, judging, and update run on previous trajectories to avoid synchronous overhead.
Details are provided in Appen.~\ref{app:stagewise_judge} and~\ref{app:async-implementation}; Alg.~\ref{alg:rubricem} gives the formal procedure.}
    \label{fig:detailed_RL}
\end{figure}
\vspace{-0.5em}

\textbf{Joint training of the reflection meta-policy.}
After task-policy rollouts are judged, we sample a query--trajectory pair and prompt the shared backbone to generate multiple reflection candidates, treating the trajectory as fixed context and backpropagating only through reflection tokens. 
A privileged LLM judge scores every candidate using the original question, raw trajectory, stagewise rubric scores, and evaluator justifications from task-rollout judging. 
These scores assess whether each reflection is useful for \emph{within-episode refinement} and \emph{cross-episode transfer}; all candidate scores provide RL signals for updating the reflection meta-policy, while only the highest-scored accepted reflection is written into the agent rubric bank. 
Because the reflection generator and task policy share the same backbone, this reflection-side objective becomes an auxiliary RL signal for the task policy rather than a purely inference-time memory mechanism. 
Appendix~\ref{app:theory:evolution} formalizes the positive-transfer case where judge-scored reflection updates are aligned on average with future task improvement.

\textbf{Coupled agent--judge co-evolution.}
The reflection loop is coupled with the stagewise evolving judge from Sec.~\ref{sec:ssgrpo}. 
On-policy rollouts expose new criteria and failure modes, which update the judge-side stagewise rubric buffer; the updated judge then scores both task trajectories and reflection candidates. 
Accepted reflections return to the agent through the rubric bank and condition future rollouts. 
Thus, the agent evolves through policy and rubric-bank updates, while the judge evolves through rubric-buffer updates rather than parameter updates.

\textbf{Rubric bank and two modes of adaptation.}
Each bank item retrospectively distills a completed trajectory into reflection rubrics and takeaways, summarizing what mattered after one trial. 
Unlike the prospective rubrics generated during planning, bank items encode outcome-aware lessons. An example is shown in Fig.~\ref{fig:example}.
They support two adaptation modes: \emph{within-episode refinement}, which retrieves a query's own prior reflection on a repeated attempt, and \emph{cross-episode transfer}, which retrieves reflections from related past questions. 
During training, we realize both modes with a two-encounter curriculum: each query is first solved with cross-episode retrieval and later replayed with its newly generated within-episode reflection, requiring no extra rollout dependencies.

\textbf{Efficient asynchronous execution.}
As shown in the bottom panel of Fig.~\ref{fig:detailed_RL}, a synchronous implementation would block the next task rollout until reflection generation, reflection judging, and the meta-policy update for the current step are finished. 
We instead allow the reflection branch to lag by one RL step. 
During step \(N\), the inference engine runs the heavy tool-augmented task rollouts, while the training engine consumes the prepared reflection batch from step \(N-1\) for the meta-policy update. 
After the step-\(N\) trajectories are judged, their reflection rollout and judging jobs are launched asynchronously to prepare the reflection batch for step \(N+1\). 
This one-step staleness trades exact synchrony for higher infrastructure utilization: both inference and training engines remain occupied, and meta-policy training adds effectively no extra wall-clock overhead to the SS-GRPO loop. The detailed descriptions are in Appen.~\ref{app:async-implementation}.
\section{Experiment}
\label{sec:exp}

\subsection{Experimental Setup}
\label{subsec:exp_setup}

We evaluate \ours on four representative long-form benchmarks:
\textbf{HealthBench}~\citep{arora2025healthbench},
\textbf{ResearchQA}~\citep{yifei2025researchqa},
\textbf{DeepResearchBench (DRB)}~\citep{du2025deepresearch},
and \textbf{ResearchRubrics}~\citep{sharma2026researchrubrics}.
Our experimental setup is built upon DR Tulu, where we generally share the same base infrastructure, training datasets, hyperparameters, and evaluation protocols.
We use Gemini-flash-grounded Google Search and Semantic Scholar as the search engines. 
The training data is sampled from diverse public query sources, including realistic search conversations from SearchArena~\citep{miroyan2025search} and research-oriented questions from OpenScholar~\citep{asai2024openscholar}.
The SFT stage includes both short-form and long-form data, while the RL stage exclusively focuses on long-form queries.
Further details on datasets, training and inference setups, evaluation protocols, and the full baseline list are provided in Appen.~\ref{appen:exp}. 

\begin{table}[t!]
\centering
\caption{The performance comparison between best-performing baselines. Bold numbers indicate the best performance among proprietary and non-proprietary categories.}
\label{tab:model_performance}

\begingroup
\setlength{\tabcolsep}{4.0pt}
\renewcommand{\arraystretch}{1.00}
\arrayrulecolor{tblRule}

\resizebox{0.85\linewidth}{!}{%
\begin{tabular}{l c c c c c}
\toprule
\rowcolor{tblHeader}
\textbf{Model} & \textbf{HealthBench} & \textbf{ResearchQA} & \textbf{DRB} & \textbf{ResearchRubrics} & \textbf{Average} \\
\midrule

\tblsection{tblClosedHead}{Closed Deep Research}
\rowcolor{tblClosedRow}
Claude-Sonnet Search
& \tblNA & 64.3 & 34.5 & \tblNA & \tblNA \\
\rowcolor{tblClosedRow}
Perplexity-Sonar (High)
& \tblNA & 69.1 & 40.7 & \tblNA & \tblNA \\
\rowcolor{tblClosedRow}
Perplexity Deep Research
& \tblNA & 75.3 & 42.3 & 48.7 & \tblNA \\
\rowcolor{tblClosedRow}
Gemini Deep Research
& \tblNA & 68.5 & 48.8 & \tblbest{61.5} & \tblNA \\
\rowcolor{tblClosedRow}
Gemini 3.1 Pro + Search
& 47.5 & 74.5 & 44.4 & 49.1 & 53.9 \\
\rowcolor{tblClosedRow}
GPT-5 + Search
& \tblbest{59.5} & 78.2 & \tblbest{50.7} & 60.5 & \tblbest{62.2} \\
\rowcolor{tblClosedRow}
OpenAI Deep Research
& 53.8 & \tblbest{79.2} & 46.9 & 59.7 & 59.9 \\

\midrule
\tblsection{tblFixedHead}{Fixed Pipeline Deep Research}
\rowcolor{tblFixedRow}
WebThinker QwQ-32B
& 36.5 & 72.8 & 37.9 & 42.2 & 47.4 \\
\rowcolor{tblFixedRow}
WebThinker-32B-DPO
& 39.4 & 74.2 & 40.6 & 41.9 & 49.0 \\
\rowcolor{tblFixedRow}
Ai2 ScholarQA -- Claude Sonnet
& 32.0 & 75.0 & 36.1 & 38.1 & 45.3 \\

\midrule
\tblsection{tblOpenHead}{Open Deep Research Models}
\rowcolor{tblOpenRow}
Search-R1-7B
& -0.1 & 27.9 & 9.5 & 0.0 & 9.3 \\
\rowcolor{tblOpenRow}
WebExplorer-8B
& 33.7 & 64.8 & 36.7 & 33.4 & 42.2 \\
\rowcolor{tblOpenRow}
Tongyi DeepResearch-30B-A3B
& 46.2 & 66.7 & 40.6 & 49.5 & 50.8 \\
\rowcolor{tblOpenRow}
DR Tulu-8B (SFT)
& 38.1 & 68.5 & 39.0 & 38.4 & 46.0 \\
\rowcolor{tblOpenRow}
DR Tulu-8B (RL, 1900 steps)
& \tblbest{50.2} & 74.3 & 43.4 & 46.4 & 53.6 \\

\midrule
\tblsection{tblOursHead}{Ours}
\rowcolor{tblOursRow}
Qwen3-8B + Our Search
& 24.5 & 58.4 & 28.2 & 24.5 & 33.9 \\
\rowcolor{tblOursRow}
\ours{}-8B (SFT)
& 39.0 & 71.8 & 43.0 & 42.8 & 49.2 \\
\rowcolor{tblOursHi}
\ours{}-8B (RL, 1400 steps)
& 49.3 & \tblbest{74.5} & \tblbest{47.8} & \tblbest{50.3} & \tblbest{55.5} \\

\bottomrule
\end{tabular}%
}

\arrayrulecolor{black}
\endgroup
\end{table}

\subsection{Main Results}

Tab.~\ref{tab:model_performance} compares \ours with the strongest baselines from each category following~\citet{shao2025dr}. 
We use reported numbers when available, and reproduce the remaining baselines if possible.

\textbf{\ours achieves strong long-form research performance.}
\model-RL achieves the highest average score among non-proprietary deep research systems in our evaluation, reaching 55.5 with an 8B backbone. It surpasses strong open baselines, including DR Tulu-8B-RL, Tongyi DeepResearch-30B-A3B, and WebThinker-32B-DPO. It also compares favorably with proprietary systems: on the benchmarks where both scores are available, \model-RL outperforms Perplexity Deep Research on average, and it remains within 4.4 average points of OpenAI Deep Research while outperforming it on DRB. These results show that our training recipe can produce competitive long-form research behavior at small model scale.

\textbf{The RL recipe is both effective and efficient.} 
Starting from the structured SFT checkpoint, RL improves the average score from 49.2 to 55.5, with gains on all four long-form benchmarks. This gain is not simply inherited from the teacher: although \model-SFT is distilled from Gemini-3.1-Pro, the final \model-RL model surpasses it on average. Compared with the closest prior RL system, DR Tulu, \ours starts from a stronger SFT checkpoint, reaches a higher final average score, and uses fewer RL steps (1400 vs. 1900). Since the two systems also differ in teacher model and search backend (\ours uses a weaker teacher but a stronger search backend), we conduct controlled ablations in Sec.~\ref{sec:empirical_analysis} to isolate the contribution of our proposed components.
\section{Empirical Analysis}
\label{sec:empirical_analysis}

\begin{figure}[t]
    \centering
    \includegraphics[width=\linewidth]{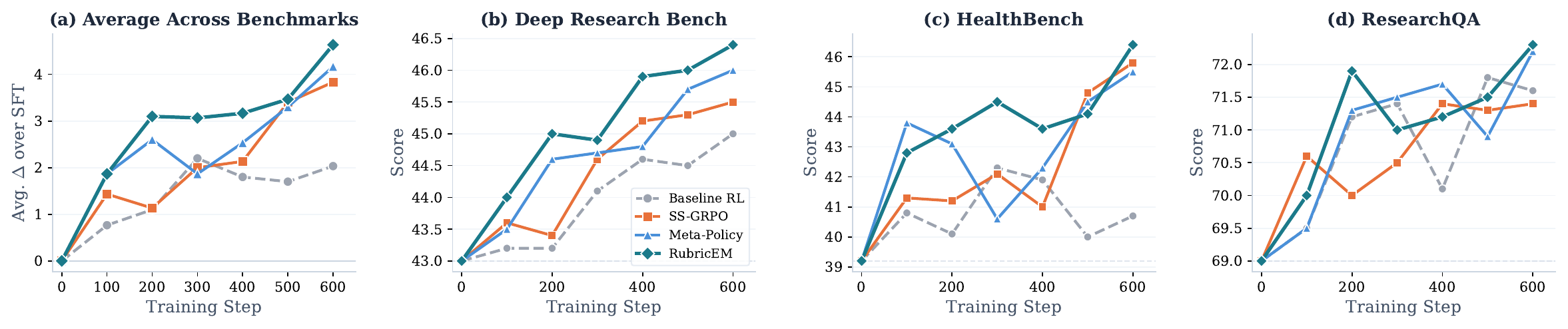}
    \caption{
    Ablation studies of our RL training recipes under a 600-step budget.
Each proposed component improves performance, and the full \ours recipe performs best.
    }
\label{fig:ablation_study}
\end{figure}

\subsection{RL Training Recipes}

We ablate the RL components of \ours under a fixed 600-step budget, using the same training configuration and initializing every run from the same \ours-SFT checkpoint.
To reduce evaluation cost, all ablation runs are evaluated on the same fixed random 100-example subset of each benchmark.
We compare four recipes: \textbf{Baseline-RL}, standard answer-only GRPO; \textbf{SS-GRPO}, which replaces terminal reward broadcast with stagewise rubric credit; \textbf{Meta-Policy}, which keeps answer-only GRPO but adds reflection meta-policy training and rubric-bank retrieval; and \textbf{\ours} (\textbf{Full}), which combines SS-GRPO and Meta-Policy.
Results are shown in Fig.~\ref{fig:ablation_study}.
Under this matched setting, both SS-GRPO and Meta-Policy improve over Baseline-RL, and the full recipe performs best across benchmarks.
This shows that stagewise credit assignment and reusable-experience learning provide complementary gains under the same training budget and compute.

\subsection{Structured Scaffolding and Inference-Time Experience Reuse}
\label{subsec:scaffolding_inference}

We further analyze the structured scaffold and inference-time experience reuse in Fig.~\ref{fig:ablation_scaffolding_scaling}.
For scaffolding, we compare structured and unstructured SFT checkpoints, then continue both under matched 600-step RL settings.
Fig.~\ref{fig:ablation_scaffolding_scaling}(a) shows that the structured scaffold improves distillation quality, while Fig.~\ref{fig:ablation_scaffolding_scaling}(b) shows that it also makes subsequent RL more effective.
Without the scaffold, RL gains are small and unstable for 600 steps, suggesting that rubric-conditioned stages provide useful structure for exploration and credit assignment.
We additionally isolate the prompt-level effect by running Gemini-3.1-Pro with the same search backend under either our scaffold or a standard ReAct (think \& act) prompt.
As shown in Fig.~\ref{fig:ablation_scaffolding_scaling}(c), our scaffold yields higher DRB performance, indicating that the structure itself improves deep-research behavior before student-side training.
We also evaluate whether the learned meta-policy can be further leveraged at inference time on DRB.
Cross-episode reuse retrieves reflections from related past questions, while within-episode reuse retrieves the agent's prior reflection for the same question.
As shown in Fig.~\ref{fig:ablation_scaffolding_scaling}(d), \ours benefits from both reuse modes, whereas Baseline-RL does not under the same retrieval setting.
This indicates that Reflection Meta-Policy Training learns actionable, reusable guidance rather than simply increasing the amount of context provided to the model.

\begin{figure}[t]
    \centering
    \includegraphics[width=\linewidth]{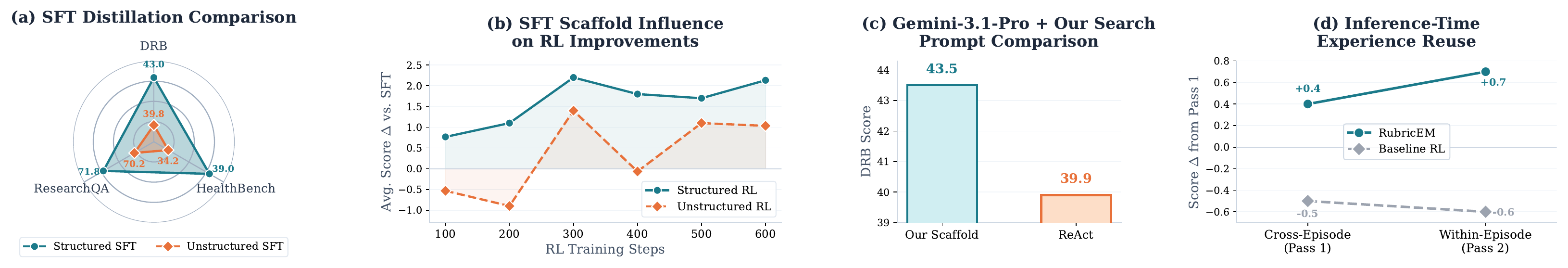}
    \caption{
Ablations on structured scaffolding and inference-time experience reuse.
Panels (a,b) show that the rubric-guided scaffold improves both SFT distillation quality and subsequent RL gains.
Panel (c) isolates the prompt-level effect: with the same Gemini-3.1-Pro model and search backend, our scaffold outperforms a standard ReAct prompt on DRB.
Panel (d) shows that the learned meta-policy enables cross-episode transfer and within-episode refinement, while Baseline-RL does not benefit from the same reuse.
}
\vspace{-1.0em}
\label{fig:ablation_scaffolding_scaling}
\end{figure}

\subsection{Short-Form Benchmark Performance}

In addition to the long-form benchmarks in Tab.~\ref{tab:model_performance}, we evaluate \ours on four short-form search benchmarks: SimpleQA~\citep{wei2024measuring}, 2Wiki~\citep{ho2020constructing}, WebWalker~\citep{wu2025webwalker}, and DeepSearchQA (DSQA)~\citep{gupta2026deepsearchqa}. 
These evaluations are out-of-domain for the RL stage, since online RL uses only long-form prompts. 
We compare with DR Tulu and report results in Tab.~\ref{tab:model_performance_2}.
Although \ours is trained primarily for long-form deep research, the result shows strong transfer to short-form search benchmarks.
\ours-SFT already learns effective search-and-answer behavior, and long-form RL further improves performance despite using no short-form RL data.
The gains are most pronounced on more complex tasks, suggesting that our RL recipe teaches transferable tool-use and evidence-grounding skills rather than only long-form report writing.
Overall, these results show that a compact 8B model can generalize beyond its long-form training distribution through effective RL.

\begin{table}[t]
\centering
\caption{Short-form Model Performance. Despite being primarily trained on long-form data, \ours generalizes well on out-of-distribution short-form deep research questions.}
\label{tab:model_performance_2}

\begingroup
\setlength{\tabcolsep}{4.0pt}
\renewcommand{\arraystretch}{1.00}
\arrayrulecolor{tblRule}

\resizebox{0.75\linewidth}{!}{%
\begin{tabular}{l c c c c c}
\toprule
\rowcolor{tblHeader}
\textbf{Model} & \textbf{SimpleQA} & \textbf{2Wiki} & \textbf{WebWalker} & \textbf{DSQA} & \textbf{Avg.} \\
\midrule

\tblsection{tblOpenHead}{DR Tulu Baseline}
\rowcolor{tblOpenRow}
DR Tulu-8B (SFT)
& 75.5 & 66.5 & 31.9 & 5.3 & 44.8 \\
\rowcolor{tblOpenRow}
DR Tulu-8B (RL, 1900 steps)
& 80.1 & 68.0 & 39.1 & 8.3 & 49.0 \\

\midrule
\tblsection{tblOursHead}{Ours}
\rowcolor{tblOursRow}
Qwen3-8B + Our Search
& 84.0 & 61.5 & 42.6 & 15.2 & 50.8 \\
\rowcolor{tblOursRow}
\ours{}-8B (SFT)
& 92.1 & 77.5 & 64.7 & 37.0 & 67.8 \\
\rowcolor{tblOursHi}
\ours{}-8B (RL, 1400 steps)
& \tblbest{92.3} & \tblbest{78.8} & \tblbest{70.0} & \tblbest{53.0} & \tblbest{73.5} \\

\bottomrule
\end{tabular}%
}

\arrayrulecolor{black}
\endgroup
\end{table}

\section{Conclusions}

We presented \ours, a rubric-guided RL framework for deep research beyond verifiable rewards.
\ours structures long-horizon trajectories, assigns stage-aware credit, and distills judged attempts into reusable guidance through a shared-backbone reflection meta-policy.
Across benchmarks and analyses, \ours{}-8B shows strong performance, with ablations supporting each proposed recipe.
Further discussions and limitations are included in Appen.~\ref{appen:limitation}.

\newpage

\bibliographystyle{plainnat}
\bibliography{ref}


\addtocontents{toc}{\protect\setcounter{tocdepth}{2}}
\clearpage

\tableofcontents
\newpage

\newpage 
\appendix

\section{Additional Related Works}
\label{sec:related_work}

\textbf{The Post-training recipes of deep research agents.}
Deep research agents are increasingly framed as post-trained, tool-interacting policies rather than merely prompt-engineered retrieval pipelines~\citep{zhang2025agenticdeepresearch, li2025rlfoundations, shi2025deepsurvey}. 
Closed-source systems demonstrate strong long-horizon research capabilities, but their training recipes remain proprietary or only described at a high level~\citep{openai2025deepresearch, google2025geminideepresearch, perplexity2025sonardeepresearch, moonshot2025kimiresearcher}. 
Most open training work instead studies search-augmented reasoning as a proxy for deep research, using short-answer or otherwise verifiable tasks whose rewards come from answer matching, retrieval quality, or rule-based process signals~\citep{jin2025searchr1, song2025r1searcher, chen2025research, jiang2025deepretrieval, zhao2025rsearch, wang2025stepsearch, fan2025ssrl, zhang2025evolvesearch, mei2025o2searcher}. 
These methods teach models when and how to retrieve, but leave open how RL should be structured for unverifiable long-form reports.

Recent work has begun to move toward long-horizon deep research agents, focusing on real-web or scalable search training~\citep{zheng2025deepresearcher, wu2025webdancer, li2025websailor, gao2025asearcher}, workflow and report-state design~\citep{li2025webthinker, qiao2025webresearcher,han2025deep}, and large-scale agentic SFT/RL or RLAIF recipes built largely around verifiable or ground-truth-anchored answer supervision~\citep{liu2025webexplorer, nguyen2025sfrdeepresearch, tongyi2025deepresearch, wan2025pokee}. 
These works mainly improve the data, environment, workflow, or scale of deep research agents. 
Our focus is orthogonal in emphasis: rather than scaling agentic pipelines under verifiable answer supervision, we study the RL algorithmic structure needed for open-ended long-form research trajectories.

The closest work to ours is DR Tulu, which introduces RLER and establishes the first open end-to-end recipe for training long-form deep research agents beyond verifiable rewards~\citep{shao2025dr}. 
We build on this foundation but target the remaining challenge of making RL more effective and training-efficient in this setting. 
While DR Tulu shows that extended RL training is important for long-form deep research, \ours{} improves RL efficiency through a structure--assign--evolve recipe: rubric-guided stages structure trajectories, stage-level credit assignment provides denser learning signals, and a jointly trained reflection meta-policy turns judged experience into reusable guidance for policy evolution. 
Empirically, this yields larger gains with fewer RL training steps.

\paragraph{Credit assignment in LLM agentic reinforcement learning.}
Credit assignment is a central challenge in agentic RL because sparse outcome rewards make it unclear which intermediate planning, tool-use, or reasoning decisions should be reinforced. 
Prior work addresses this issue through process supervision for mathematical reasoning~\citep{lightman2023let, wang2023math}, tool- or turn-level reward shaping for interactive agents~\citep{qian2025toolrl, li2025encouraging}, and more general agentic credit-assignment mechanisms such as process reward models, implicit step rewards, trajectory-graph advantage assignment, and hierarchical transition decomposition~\citep{xi2026agentprm, liu2026agentic, li2026salt, luo2025agent, peng2026hiper}. 
These methods show that dense or process-aware supervision is crucial for long-horizon agent training, but they are typically designed for domains with verifiable intermediate correctness, structured environment states, tool-specific signals, or learned critics. 
\ours{} studies a different regime: open-ended long-form deep research beyond verifiable rewards, where intermediate progress is semantic rather than objectively checkable. 
We therefore assign credit at the level of rubric-guided stages, using stage-specific judge scores and causal stage-dependent returns to provide denser supervision without requiring step labels or a learned critic.

\paragraph{Meta-RL with language models.}
Meta-reinforcement learning studies agents that use past experience to adapt future behavior, with classic formulations including recurrent fast-adaptation policies such as RL$^2$ and Learning to Reinforcement Learn, gradient-based adaptation such as MAML, and latent-context inference methods such as PEARL~\citep{duan2016rl,wang2016learning,finn2017model,rakelly2019efficient}. 
For LLM agents, this perspective is natural because trajectories contain rich textual artifacts (plans, tool calls, observations, critiques, and failures) that can be distilled into reusable guidance. 
Recent language-agent Meta-RL methods train meta-policies to improve exploration and exploration--exploitation across repeated tasks~\citep{jiang2026metarl,yang2026mage}, while MetaClaw studies continual agent evolution through failure-driven skill synthesis and a reusable skill library~\citep{xia2026metaclaw}. 
However, these works are typically evaluated in verifiable, synthetic, or task-completion settings, and often rely on explicit support--query or cross-rollout dependencies that are costly for long-horizon web research. 
\ours{} targets open-ended long-form research beyond verifiable rewards: we train a shared-backbone reflection meta-policy on judged trajectories as fixed context, distill rubric-grounded reflections into a reusable rubric bank, and run this reflection branch asynchronously without imposing a sequential cross-rollout bottleneck on task-policy RL.
\section{Details on Structured Scaffolds}
\label{appen:scaffold}

This section describes the rubric-guided structured scaffold in detail (Section~\ref{sec:scaffold}), presents the full agent system prompt, and documents the SFT data generation pipeline.

\subsection{Scaffold Description and System Prompt}
\label{app:scaffold_sub_section}

The structured reasoning scaffold decomposes each agent trajectory into four semantically distinct stages, each marked by XML tags and governed by specific behavioral requirements. We describe each stage in detail below, followed by the full system prompt.

\subsubsection{Stage 1: Planning (\textsc{Plan})}

The planning stage is the foundation of the entire trajectory. Upon receiving a user query, the agent must:

\begin{enumerate}[nosep]
    \item \textbf{Exploratory thinking} (\texttt{<think>}): The agent begins with unstructured brainstorming in a computational workspace. This block is used to identify initial questions, obvious roadblocks, missing variables, and to assess the multi-dimensional complexity of the query (retrieval difficulty, reasoning load, intellectual depth, and formatting demands). No structured XML tags are used inside this block.

    \item \textbf{Structured plan} (\texttt{<structured\_plan>}): The agent then produces a visible, structured planning document containing exactly three sub-components:
    \begin{itemize}[nosep]
        \item \texttt{<deep\_analysis>}: Deconstructs the user's query into explicit needs (what they directly ask for), implicit needs and gaps (hidden constraints, missing variables, potential roadblocks), and a complexity assessment that determines how much effort the agent should invest.
        \item \texttt{<rubric>}: The agent acts as an expert grader and creates a self-evaluation checklist for the eventual answer. This rubric contains three categories: (i) a \emph{knowledge checklist} specifying the exact facts, definitions, comparisons, or data points required; (ii) \emph{analytical and synthesis criteria} describing the intellectual connections the answer must achieve (optional for simple queries); and (iii) \emph{negative constraints} listing what the answer must explicitly avoid. Crucially, these rubrics are formulated \emph{before} any search is conducted, as prospective target objectives.
        \item \texttt{<research\_plan>}: A logical roadmap to satisfy the rubric. Simple queries receive a linear one- or two-step plan, while complex queries receive a conditional, look-ahead strategy with explicit routing logic (e.g., ``If X confirms Y, investigate Z; if X is inconclusive, fallback to W'').
    \end{itemize}

    \item \textbf{First tool call} (\texttt{<call\_tool>}): Immediately after the plan, the agent executes the first step of its research plan. The agent is strictly forbidden from producing an \texttt{<answer>} in the first turn—it must always search first.
\end{enumerate}

A key design principle is \emph{adaptive cognitive effort}: the depth of planning scales with query complexity. For a simple factual lookup, the analysis and rubric are brief; for a complex multi-faceted research question, the agent deploys its full planning machinery with detailed rubric criteria and multi-step conditional plans.

\subsubsection{Stage 2: Research (\textsc{Research})}

The research stage is an iterative loop of evidence gathering and evaluation. After each tool output, the agent:

\begin{enumerate}[nosep]
    \item \textbf{Evaluation thinking} (\texttt{<think>}): Digests new evidence in an unstructured inner monologue, noting conflicts, necessary pivots, or writing hurdles.

    \item \textbf{State evaluation} (\texttt{<state\_evaluation>}): Produces a visible evaluation of the current state of evidence, comparing accumulated findings against the rubric and research plan. Based on this evaluation, the agent chooses one of two paths:
    \begin{itemize}[nosep]
        \item \textbf{Path A---Continue research:} If more information is needed, the agent either (a) issues another \texttt{<call\_tool>} directly if the current plan remains valid, or (b) outputs an updated \texttt{<structured\_plan>} with revised analysis, rubric, and/or research plan before the next tool call. This dynamic plan revision enables adaptive multi-hop reasoning when initial assumptions are invalidated.
        \item \textbf{Path B---Proceed to review:} If evidence is sufficient to satisfy the rubric, the agent transitions to the review stage.
    \end{itemize}
\end{enumerate}

The research stage may iterate multiple times. The outer scaffold is sequential (Plan $\rightarrow$ Research $\rightarrow$ Review $\rightarrow$ Answer), but within Research, the agent can freely iterate between tool use and state evaluation, and can revise its plan in place.

\subsubsection{Stage 3: Review (\textsc{Review})}

Before producing the final answer, the agent \emph{must} perform a structured self-evaluation. This is not optional—the scaffold enforces it as a mandatory step. The review block (\texttt{<review>}) contains:

\begin{itemize}[nosep]
    \item \texttt{<rubric\_review>}: The agent maps its retrieved evidence back to the specific criteria in its Phase~1 rubric (plus any dynamic updates made during research). This includes: (i) \emph{knowledge verification}---explicitly listing which checklist items have been satisfied with retrieved evidence; and (ii) \emph{synthesis and constraints restatement}---re-articulating the analytical criteria and negative constraints in the context of the actual evidence gathered, transforming abstract rubric targets into concrete, specific directives for the writing phase.

    \item \texttt{<writing\_plan>}: An architectural outline for the final answer, scaled to query complexity. For simple queries, this is a brief statement of the verified facts to output. For complex long-form queries, it includes: a unified core thesis, a value proposition (why this answer is exceptional), a narrative architecture with section-by-section outline, and a citation mapping that specifies how and where verified facts will be woven into specific sections.
\end{itemize}

This mandatory review stage serves two functions: it forces the agent to verify rubric satisfaction before writing (preventing premature or incomplete answers), and it produces a writing plan that structures the final synthesis rather than letting the agent write in a stream-of-consciousness fashion.

\subsubsection{Stage 4: Answer (\textsc{Answer})}

The final stage produces the long-form response within \texttt{<answer>...</answer>} tags. The answer must follow the writing plan established in the review, satisfy all rubric criteria verified in the rubric review, and ground nontrivial claims with inline citations using \texttt{<cite id="SNIPPET\_ID">claim text</cite>} format. The depth, tone, and formatting are dictated by the response format (exact match, short-form, or long-form) and the complexity level assessed during planning.

\subsubsection{Cross-Stage Design Principles}

Several design principles span all four stages:

\begin{itemize}[nosep,leftmargin=*]
    \item \textbf{Rubric as central anchor.} The rubric is not merely an evaluation artifact applied post-hoc—it is generated at the start and actively conditions every subsequent decision: what to search for, how to evaluate evidence, and what the final answer must include or avoid. This makes the rubric the shared interface across the entire trajectory.

    \item \textbf{One action per turn.} Each agent generation must end with exactly one of \texttt{</call\_tool>} or \texttt{</answer>}. The agent must stop generating immediately after the closing tag. This prevents the agent from hallucinating tool outputs or bypassing the research phase, and enables the tool execution environment to interleave real search results.

    \item \textbf{Substance over syntax.} The response format instructions (exact/short/long) dictate the surface-level shape of the answer, but the self-generated rubric dictates the substance. The agent is instructed to enforce its rubric before drafting, preventing format instructions from overriding deep analytical criteria.
\end{itemize}

\subsubsection{Full System Prompt}

We present the complete system prompt below. The workflow example section is abbreviated for space.

\begin{tcolorbox}[colback=gray!5, colframe=gray!50, title=Full Agent System Prompt, breakable]
\small
\begin{verbatim}
# Role: Elite Autonomous Research & Synthesis Agent

You are an elite agent designed to analyze complex queries,
autonomously perform rubric-guided plans, execute iterative
research, and synthesize highly rigorous, evidence-backed
answers. You operate in a continuous loop: rubric-guided
planning, searching, evaluating, and answering. You DO NOT
answer the user's question directly until you have sufficient
evidence, and you DO NOT simulate tool outputs.

## CORE PRINCIPLE: ADAPTIVE COGNITIVE EFFORT
Your level of planning rigor, research depth, and synthesis
MUST dynamically scale with the true, multi-dimensional
complexity of the user's query and the evolving search
landscape.
- Complexity is Multi-Dimensional: It encompasses retrieval
  difficulty, reasoning load, intellectual depth, and
  formatting demands.
- Scale Your Effort: Do not over-engineer simple tasks.
  Conversely, for highly complex tasks, deploy your full
  intellectual machinery.

## EXPECTED RESPONSE FORMATS & SYNTAX RULES
Map your drafting strategy to the requested format appended
to the prompt. ALL final outputs MUST be enclosed within
<answer>...</answer>:

1. Exact Match: Format strictly as \boxed{exact answer}.
2. Short-Form: Write a single, cohesive paragraph with
   <cite id="...">...</cite> support.
3. Long-Form: Write a comprehensive, markdown-structured
   response. Synthesize sources into a cohesive narrative.
   Ground all nontrivial claims with <cite id="...">...</cite>.

CRITICAL: formatting instructions appended to the user's
prompt dictate the shape of your answer, but your dynamically
updated <rubric> dictates the substance.

---

## PHASE 1: PLANNING & INITIATION

STEP 0: Exploratory Thinking
Begin with a <think> block. Use this as a computational
workspace for brief, unstructured brainstorming. Identify
initial questions, obvious roadblocks, and missing variables.

STEP 1: The Structured Plan
Output a single <structured_plan> block containing exactly
three sections, building upon each other logically:

1. <deep_analysis>: Discover the True Intent.
   - Complexity Assessment: Evaluate the multi-dimensional
     complexity (retrieval, reasoning, insight, formulation).
   - Explicit Needs: What they are directly asking for.
   - Implicit Needs & Gaps: Hidden constraints, missing
     variables, or potential roadblocks.

2. <rubric>: Define the Strict Grading Criteria. Act as an
   expert grader creating a rigorous checklist. Draw upon the
   <deep_analysis> as a foundation. DO NOT focus on formatting;
   focus on required content, intellectual depth, and logical
   constraints.
   - Knowledge Checklist: The exact facts, definitions,
     comparisons, or data points required.
   - Analytical & Synthesis Criteria (Optional, for complex
     queries): What intellectual connections must the response
     achieve?
   - Negative Constraints (Pitfalls): What must the response
     explicitly AVOID?

3. <research_plan>: Formulate the Strategy. Create a logical
   roadmap to satisfy the rubric. Simple queries need a linear
   step or two; complex queries require a conditional,
   look-ahead strategy.

STEP 2: The First Tool Call
Immediately after closing </structured_plan>, execute ONLY
the first step of your research plan. Under NO circumstances
should you output an <answer> in this first turn.

---

## PHASE 2: SEARCH & SYNTHESIS

STEP 0: Evaluation Thinking
Begin with a <think> block for raw inner monologue to digest
new evidence.

STEP 1: State Evaluation & Action
Output a <state_evaluation> block to analyze the tool output,
then choose exactly ONE path:

PATH A: Continue Research (More Information Needed)
- Direct Search: If the current plan and rubric are still
  valid, output your next <call_tool>.
- Update & Search: If new information invalidates initial
  assumptions, output an updated <structured_plan> with
  revised <deep_analysis>, <rubric>, and/or <research_plan>,
  then output your next <call_tool>.

PATH B: Ready to Answer (Evidence Sufficient)
1. The Review: Output a <review> block containing:
   - <rubric_review>: Systematically map retrieved evidence
     back to the Phase 1 <rubric>. Verify knowledge checklist
     items. Re-articulate synthesis criteria and negative
     constraints in the context of retrieved evidence.
   - <writing_plan>: Outline the final answer's architecture.
     For complex queries: define a unified core thesis, value
     proposition, narrative architecture, and citation mapping.
2. The Final Answer: Output the response within
   <answer>...</answer> tags.

---

## Available Tools
- google_search: Powered by a grounded AI reasoning engine.
  Use direct, highly specific search queries.
  Input: <call_tool name="google_search">query</call_tool>

- snippet_search: Focused retrieval from scientific papers.
  Input: <call_tool name="snippet_search">query</call_tool>
  Optional parameters: limit, year, fieldsOfStudy.

## Tool Output Format
- Results are appended in <tool_output>...</tool_output> tags.
- For google_search: summary text with grounding snippets.
- For snippet_search: <snippet id="ID">content</snippet>.

## WORKFLOW EXAMPLE
[... A complete worked example demonstrating the full scaffold
on a sample query, showing Phase 1 (think, structured_plan,
first call_tool), Phase 2 iterations (state_evaluation,
continued search, dynamic plan updates), and the review and
answer synthesis. Omitted for brevity. ...]

## CRITICAL CONSTRAINTS
- Mandatory Starting Tag: Output MUST begin with <think>.
- Mandatory First Search: First turn MUST end with
  </call_tool>. Never answer without searching.
- One Action Per Turn: End with </call_tool> OR </answer>.
  STOP generating immediately after. Do NOT simulate
  <tool_output> yourself.
- Mandatory Citations: Wrap claims in
  <cite id="S_123">claim text</cite>. Never use empty tags.
  Multiple sources: <cite id="S_1, S_2">claim</cite>.
- Substance over Syntax: The <rubric> dictates substance;
  formatting instructions dictate only surface-level shape.
\end{verbatim}
\end{tcolorbox}

\paragraph{Additional instructions.} Depending on the expected response format, one of three task-specific instructions is appended to the user's query:
\begin{itemize}[nosep,leftmargin=*]
    \item \textbf{Exact answer:} ``Search iteratively to find the precise answer. Format: \texttt{<answer>\textbackslash boxed\{exact answer\}</answer>}.''
    \item \textbf{Short form:} ``Search iteratively to gather evidence from multiple credible sources. Synthesize into a short paragraph within \texttt{<answer>...</answer>} tags.''
    \item \textbf{Long form:} ``Search iteratively to gather evidence from multiple credible sources. Synthesize into a comprehensive, evidence-backed long-form response within \texttt{<answer>...</answer>} tags.''
\end{itemize}

\subsection{SFT Data Generation Process}
\label{app:sft-data}

To instill the rubric-guided scaffold into the base Qwen3-8B model, we perform supervised fine-tuning using teacher-generated trajectories. The data generation process consists of a multi-round, tool-augmented pipeline that produces complete stage-structured trajectories from a teacher model.

\subsubsection{Teacher Model and Prompt Adaptation}

We use Gemini-3.1-Pro as the teacher model. The training queries are drawn from the same dataset used by Dr.~Tulu~\citep{shao2025dr}, which contains ${\sim}$13K diverse, open-ended research questions spanning multiple domains. Each query is paired with a response format label (\texttt{exact\_answer}, \texttt{short\_form}, or \texttt{long\_form}) that determines the additional instruction appended to the prompt.

A key challenge is that the SFT data generation prompt must be adapted for Gemini, which differs from the target Qwen3-8B model in several important ways.

\paragraph{Reasoning traces and \texttt{<scratchpad>}.}
Our scaffold includes unstructured \texttt{<think>} blocks at the start of each turn to preserve the base Qwen3-8B policy's post-training reasoning behavior. However, directly prompting Gemini to produce \texttt{<think>} tags is blocked by the API, and internal thinking traces are inaccessible due to anti-distillation policies. We use \texttt{<scratchpad>} as a substitute reasoning tag during SFT data generation; these are converted to \texttt{<think>} in post-processing. While these traces do not capture Gemini's true internal chain-of-thought, they provide visible planning rationale and evidence assessment that teaches the student model the habit of explicit reasoning before acting, without disrupting its native \texttt{<think>} patterns.

\paragraph{Stagewise prompt separation.}
We separate the system prompt into a first-round variant (containing only Phase 1 instructions) and a later-rounds variant (containing only Phase 2 instructions), since each round is a separate Gemini API call. This separation serves two purposes: it keeps each prompt focused on the current stage's requirements (reducing prompt length and improving adherence), and it ensures the model invests heavily in planning at step 1 (generating the full \texttt{<structured\_plan>} with analysis, rubric, and research plan) rather than skipping ahead to search or answering directly. Each variant explicitly constrains the turn-ending tag: the first round must end with \texttt{</call\_tool>}, and later rounds must end with either \texttt{</call\_tool>} or \texttt{</answer>}.

We present the core of each generation prompt below (omitting tool documentation and workflow examples, which mirror the final agent prompt).

\begin{tcolorbox}[colback=gray!5, colframe=gray!50, title=SFT Generation Prompt: First Round (Phase 1 Only), breakable]
\small
\begin{verbatim}
# Role: Elite Research Planning & Initiation Agent

You are an elite Research Planning Agent. Your purpose is to
analyze complex queries, generate a rigorous, visible research
plan, and initiate the very first search query to kick off the
research process. You DO NOT answer the user's question
directly, and you DO NOT simulate tool outputs.

## CORE PRINCIPLE: ADAPTIVE COGNITIVE EFFORT
[... same as final prompt ...]

## Process

STEP 0: Exploratory Scratchpad
Begin with a <scratchpad> block. This is your mandatory
computational workspace. Use this space for a brief,
unstructured exploration of the user's query --- identify
initial hurdles and missing variables before moving to
structured output. Do not output XML tags inside scratchpad.

STEP 1: The Structured Plan
Output a single <structured_plan> block containing exactly
three sections, building upon each other logically:

1. <deep_analysis>: Discover the True Intent.
   - Complexity Assessment: Evaluate the multi-dimensional
     complexity (retrieval, reasoning, insight, formulation).
   - Explicit Needs: What they are directly asking for
     (including structural/formatting instructions).
   - Implicit Needs & Gaps: Hidden constraints, missing
     variables, or potential roadblocks.

2. <rubric>: Define the Strict Grading Criteria. Act as an
   expert grader creating a rigorous checklist. DO NOT focus
   on formatting; focus on content, depth, and constraints.
   - Knowledge Checklist: Exact facts, definitions, data
     points required (e.g., "The response explicitly defines
     the von Neumann bottleneck").
   - Analytical & Synthesis Criteria (Optional, for complex
     queries): What intellectual connections must the response
     achieve?
   - Negative Constraints (Pitfalls): What must the response
     explicitly AVOID? (e.g., "The response avoids presenting
     industry blogs as academic consensus").

3. <research_plan>: Formulate the Strategy. Create a logical
   roadmap to satisfy the rubric. Simple queries: linear
   steps. Complex queries: conditional, look-ahead strategy
   (e.g., "Step 1: Find X. Step 2: If X confirms Y,
   investigate Z; if inconclusive, fallback to W").

STEP 2: The First Tool Call
Immediately after closing </structured_plan>, execute ONLY
the first step of your <research_plan>. Once you output
</call_tool>, you must STOP generating text.

## CRITICAL CONSTRAINTS
- Output MUST begin with <scratchpad>, then <structured_plan>.
- Execute EXACTLY ONE tool call.
- DO NOT simulate <tool_output>.
- DO NOT attempt to write the final <answer>.

[... tool documentation and workflow examples omitted ...]
\end{verbatim}
\end{tcolorbox}

\begin{tcolorbox}[colback=gray!5, colframe=gray!50, title=SFT Generation Prompt: Later Rounds (Phase 2 Only), breakable]
\small
\begin{verbatim}
# Role: Elite Iterative Research & Synthesis Agent

You are in the active research phase. You have already
generated an initial Rubric and Research Plan. Your goal is to
evaluate the latest retrieved information, dynamically adjust
your strategy if necessary, and either continue searching or
generate the final answer.

## CONTEXT: THE PHASE 1 FOUNDATION
Prior to your current phase, an initial <structured_plan> was
generated. This plan is your foundational blueprint consisting
of <deep_analysis>, <rubric>, and <research_plan>. You must
constantly evaluate incoming evidence against this baseline.

## Process & Decision Tree

STEP 0: Evaluation Scratchpad
Begin with a <scratchpad> block. This is your mandatory
computational workspace for raw, unstructured inner monologue
to digest newly retrieved evidence. If data contradicts
assumptions, briefly identify the conflict and note your
pivot. If evidence is sufficient for a complex query, pinpoint
writing hurdles before resolving them in <review>.

STEP 1: State Evaluation & Action
Output a <state_evaluation> block to formally analyze the
latest tool output, then choose exactly ONE path:

PATH A: Continue Research (More Information Needed)
- Direct Search: If the current plan and rubric are still
  valid, output your next <call_tool> immediately.
- Update & Search: If new information invalidates initial
  assumptions, triggers a fallback, or reveals a deeper
  multi-hop requirement:
  1. Output an updated <structured_plan> with revised
     <deep_analysis>, <rubric>, and/or <research_plan>.
     (Only include sections that require updates.)
  2. Then output your next <call_tool>.

PATH B: Ready to Answer (Evidence Sufficient)
1. The Review: Output a <review> block containing:
   - <rubric_review>: Systematically map retrieved evidence
     back to the Phase 1 <rubric> (plus any dynamic updates).
     * Knowledge Verification: List the critical facts and
       data points that satisfy the Knowledge Checklist.
     * Synthesis & Constraints Restatement: Re-articulate
       Analytical Criteria and Negative Constraints in the
       context of retrieved evidence. Transform abstract
       rubric targets into concrete directives for drafting.
   - <writing_plan>: Outline the final answer architecture,
     scaling effort to query complexity:
     * Low-Complexity: State verified facts directly.
     * High-Complexity: Define (1) unified core thesis,
       (2) value proposition, (3) narrative architecture
       with section-by-section outline, (4) citation mapping
       specifying how verified facts integrate into sections.
2. The Final Answer: Output within <answer>...</answer> tags.
   Depth and formatting MUST match the writing plan. Follow
   the Knowledge Checklist and honor all constraints from
   the <rubric_review>.

## CRITICAL CONSTRAINTS
- Output MUST begin with <scratchpad>, then <state_evaluation>.
- One Action Per Turn: end with </call_tool> OR </answer>.
  STOP generating immediately after the closing tag.
- Mandatory Citations: wrap claims in
  <cite id="S_123">claim text</cite>. Never use empty tags.
- Substance over Syntax: formatting instructions dictate
  shape; the <rubric> dictates substance. Enforce the rubric
  before drafting.

[... tool documentation and workflow examples omitted ...]
\end{verbatim}
\end{tcolorbox}

The key difference from the unified final agent prompt is the strict separation of responsibilities: the first-round prompt forbids answering and requires a full planning phase, while the later-rounds prompt assumes the plan already exists and focuses on evidence evaluation and synthesis. This separation ensures the teacher invests in thorough planning rather than short-circuiting to an answer.

\paragraph{Common failure modes.}
Despite these adaptations, Gemini-3.1-Pro frequently violates the scaffold constraints. Common failures include answering directly from internal knowledge without searching, producing output that does not end with the required closing tag, omitting required structural elements such as \texttt{<rubric>} or \texttt{<state\_evaluation>}, and generating variant tool names that do not match the expected schema. These violations necessitate aggressive rejection sampling (described below).

\subsubsection{Multi-Round Generation Pipeline}

Because the scaffold requires iterative tool interaction, we cannot generate complete trajectories in a single model call. Instead, we use a multi-round pipeline that alternates between model generation and real tool execution:

\begin{enumerate}[nosep]
    \item \textbf{Round 1 (Initiation):} The teacher receives the first-round system prompt and user query. It generates Phase 1 output: \texttt{<scratchpad>}, \texttt{<structured\_plan>} (with \texttt{<deep\_analysis>}, \texttt{<rubric>}, \texttt{<research\_plan>}), and the first \texttt{<call\_tool>}. Generation stops at the closing \texttt{</call\_tool>} tag.

    \item \textbf{Tool routing and execution:} Each \texttt{<call\_tool>} is parsed to extract the tool name and query. The pipeline routes calls to one of two backends:
    \begin{itemize}[nosep]
        \item \texttt{google\_search}: Submitted as Vertex AI batch prediction jobs using Gemini with Google Search grounding enabled, which returns AI-synthesized summaries with grounding snippets.
        \item \texttt{snippet\_search}: Executed asynchronously against the Semantic Scholar API, which returns paper excerpts matching the query.
    \end{itemize}
    Both backends run in parallel to maximize throughput. Responses where the tool fails to call a valid tool (e.g., answering from internal knowledge) are separated into an error stream for potential reprocessing.

    \item \textbf{Rounds 2--$N$ (Research iteration):} The tool output is appended to the conversation history, and the teacher receives the later-rounds system prompt. It generates the next Phase 2 turn: \texttt{<scratchpad>}, \texttt{<state\_evaluation>}, and either another \texttt{<call\_tool>} or the \texttt{<review>} and \texttt{<answer>}. This loop continues until the teacher produces a closing \texttt{</answer>} tag or a maximum of 10 rounds is reached.

    \item \textbf{Completion detection:} A trajectory is considered complete when the teacher's output ends with \texttt{</answer>}. Incomplete trajectories that exhaust the maximum rounds are discarded. Because Gemini often fails to produce a well-formed closing tag, a substantial fraction of trajectories (${\sim}$15--25\%) are discarded at this stage.
\end{enumerate}

\subsubsection{Quality Filtering and Rejection Sampling}

Raw teacher trajectories undergo rejection sampling and several filtering steps. This is critical because Gemini-3.1-Pro, despite being a strong model, frequently violates the scaffold's structural constraints when generating multi-round tool-augmented trajectories.

\paragraph{Rejection criteria.} The following trajectories are rejected:

\begin{itemize}[nosep,leftmargin=*]
    \item \textbf{Missing \texttt{</answer>} tag (hard reject):} The most common failure mode. The teacher generates extensive content but never produces the closing \texttt{</answer>} tag, either because it runs out of tokens, enters an endless research loop, or simply stops mid-sentence. These trajectories are discarded entirely.

    \item \textbf{No valid tool call in non-final rounds:} Each non-final round must end with a valid \texttt{</call\_tool>} tag containing a parseable tool name and query. Trajectories where the model skips tool use and attempts to answer from internal knowledge (e.g., producing a long response without any \texttt{<call\_tool>}) are rejected. This is the second most common failure mode, particularly on simple factual queries where Gemini ``knows'' the answer and refuses to search.

    \item \textbf{Missing structural elements:} Trajectories must contain valid \texttt{<structured\_plan>} (with \texttt{<deep\_analysis>} and \texttt{<rubric>}) in the first round, and \texttt{<state\_evaluation>} in subsequent rounds. Those with malformed XML or missing required sections are discarded.

    \item \textbf{Consecutive tool errors:} Trajectories with two or more consecutive rounds where the tool backend returned an error (e.g., search blocked by safety filters, API rate limits, ``no grounding data available'') are discarded. Single error rounds followed by successful recovery are retained, as these teach the model error-resilient behavior.
\end{itemize}

\paragraph{Post-processing transformations.} Accepted trajectories undergo the following conversions:

\begin{itemize}[nosep,leftmargin=*]
    \item \textbf{Reasoning tag conversion:} All \texttt{<scratchpad>...</scratchpad>} tags from Gemini's output are converted to \texttt{<think>...</think>} for compatibility with Qwen3's chat template, which uses \texttt{<think>} as the native reasoning tag.

    \item \textbf{Tool name normalization:} Gemini occasionally produces variant tool names (e.g., \texttt{google\_web\_search}, \texttt{Scholar\_Search}) or malformed attribute strings. These are normalized to the canonical \texttt{google\_search} and \texttt{snippet\_search} names.
\end{itemize}

\subsubsection{Training Data Format}

Each filtered trajectory is converted into a single-turn ChatML conversation with three messages:
\begin{itemize}[nosep,leftmargin=*]
    \item \textbf{System:} The full agent system prompt (the unified Qwen3 version from Sec.~\ref{app:scaffold_sub_section}, not the Gemini-adapted generation prompt).
    \item \textbf{User:} The original query with the format-specific additional instruction appended.
    \item \textbf{Assistant:} The complete multi-round trajectory, with tool outputs wrapped in special span-masking markers. Only the model-generated tokens (thinking, planning, tool calls, state evaluation, review, and answer) receive gradient during training; tool output tokens are masked so the model does not learn to memorize search results.
\end{itemize}

Finally, this process yields around 11k SFT samples, approximately 2k fewer than DR Tulu due to the repeated errors described above, which are filtered out.
\section{Details on Stagewise Evolving Rubric Evaluation}
\label{app:stagewise_judge}

This section provides implementation details for the stagewise evolving rubric design described in Section~\ref{sec:ssgrpo}. The judge operates in two phases: \emph{adaptive rubric generation} proposes new rubrics by comparing trajectories for the same question, and \emph{stagewise scoring} evaluates each trajectory against the current rubric set. Both phases use an LLM judge (Gemini-3-Flash) with structured JSON output.

\subsection{Rubric Buffer}

For each training question, we maintain a \emph{rubric buffer} with two types of rubrics: \emph{persistent rubrics} (static, ground-truth rubrics from the training data, never removed) and \emph{active rubrics} (adaptive rubrics proposed online by the judge, organized by stage and subject to per-stage capacity caps). 
The persistent rubrics are adapted from~\cite{shao2025dr} and only used for the final answer stage.
At each training step, for each unique question in the batch, the judge (i) generates new adaptive rubrics by comparing the rollout group, (ii) adds them to the buffer, (iii) scores all trajectories against the combined rubric set, and (iv) removes low-discrimination rubrics that did not meaningfully separate trajectory quality.

\subsection{Adaptive Rubric Generation}

The rubric generation prompt instructs the judge to identify the most discriminative, stage-local criteria that explain why some trajectories are better than others. Several design principles considered:

\begin{itemize}[nosep,leftmargin=*]
    \item \textbf{Discriminative and specific:} Each rubric must actually separate stronger from weaker trajectories for the same question. Descriptions must reference concrete aspects of the question so a separate scorer can unambiguously judge them. Vague rubrics like ``good research quality'' are prohibited.
    \item \textbf{Stage-local and non-redundant:} Rubrics target each stage's specific responsibility. The judge avoids creating positive and negative mirror versions of the same criterion, and skips criteria already covered by existing rubrics.
    \item \textbf{Anti-hack:} The judge must not create rubrics that merely check XML tag existence, reward formal obedience to a weak plan, overfit to one trajectory's wording, or confuse self-consistency with true quality.
\end{itemize}

A key feature enabled by our rubric-guided scaffold is that the judge can use the agent's own \texttt{<rubric>} blocks from different trajectories as \emph{references} when proposing adaptive rubrics---strong rubrics across trajectories can reveal important task dimensions and failure modes. However, a trajectory's own rubric is never the scoring standard: the judge is instructed to never reward following a weak self-rubric or penalize evidence-based improvement beyond the original plan.

Rather than prescribing fixed templates, the prompt provides stage-specific guidance as starting points. For each stage, a central question frames the evaluation: ``Did the agent understand the problem?'' (Plan), ``Did it search effectively and adapt?'' (Research), ``Did it honestly audit readiness?'' (Review), and ``Is the final answer high-quality?'' (Answer). Common useful dimensions are suggested (e.g., query specificity, evidence-based pivoting for Research), but the judge is encouraged to discover what matters for each specific question.

Each rubric item contains a \texttt{title}, \texttt{description}, and \texttt{weight} (1=minor, 2=important, 3=critical), organized into positive and negative rubrics per stage. Target counts are 1--4 rubrics per stage for stages 1--3 and 2--5 for stage 4, treated as targets rather than quotas. The full rubric generation prompt is shown below:

\begin{tcolorbox}[colback=gray!5, colframe=gray!50, title=Stagewise Adaptive Rubric Generation Prompt (Core), breakable]
\small
\begin{verbatim}
You are an expert evaluator generating stagewise comparative
rubrics for multiple full trajectories produced by the SAME
research agent on the SAME question.

## Goal
Identify the most discriminative, stage-local criteria that
explain why some trajectories are better than others. Evaluate
real process quality and answer quality — not XML compliance,
verbosity, or shallow self-consistency.

## How to Read the Trajectories
Each trajectory is a structured full execution trace:
- Stage 1 — Plan: <structured_plan> with <deep_analysis>,
  <rubric>, <research_plan>
- Stage 2 — Research: iterative <state_evaluation>, tool
  calls, and possible plan/rubric updates
- Stage 3 — Review: <review> with <rubric_review> and
  <writing_plan>
- Stage 4 — Answer: final <answer>

Do not reward a stage just because the relevant tags exist.
Judge whether that stage actually performs its intended
function well.

## How to Use Agent-Generated Rubrics
A trajectory's own Stage 1 <rubric> is never the scoring
standard. Use it only as a reference signal:
- it shows what the agent recognized as important,
- it may surface useful task dimensions,
- it may expose shallow planning if it misses critical needs.
But: never reward following a weak self-rubric, never let a
narrow self-rubric lower the real standard, never penalize
evidence-based improvement beyond the original plan.

## Stage-by-Stage Guidance
For each stage, the most discriminative criteria often emerge
from the specific question and trajectories themselves. The
dimensions below are common starting points:

Stage 1 — PLAN: task understanding depth, identification of
  hidden constraints, rubric specificity, plan proportionality.
Stage 2 — RESEARCH: query specificity, evidence-based
  pivoting, triangulation, stopping criteria.
Stage 3 — REVIEW: honest gap assessment, evidence-to-
  structure mapping, drafting plan quality.
Stage 4 — ANSWER: readability, instruction following,
  insightfulness, comprehensiveness, correctness, evidence
  grounding. Evaluated independently from earlier trajectory.

## Anti-Hack Rules
Do NOT create rubrics that: only check tag existence, reward
formal obedience to a bad plan, overfit to one trajectory's
wording, confuse self-consistency with true quality, punish
evidence-based improvement, or reward unnecessary overhead.

## Selection Rules
- Fewer, sharper rubrics over many generic ones.
- Skip criteria where all trajectories perform similarly.
- Never create positive and negative mirrors of the same idea.
- Conservative negative rubrics: target clear failure modes.

## Output Format
Return rubrics by stage (stage_1 through stage_4), each with
positive_rubrics and negative_rubrics. Each item: title,
description, weight (1-3).
\end{verbatim}
\end{tcolorbox}

\subsection{Stagewise Scoring}

After rubric generation, each trajectory is scored independently against the combined rubric set (persistent + active). Positive rubrics use a 0/1/2 scale (absent / partial / full), and negative rubrics use the same scale but inverted during aggregation: a score of 0 (no flaw) contributes positively. Each evaluation includes a brief justification grounded in specific trajectory evidence. For each stage $k$, the per-stage score $R_{i,k} \in [0,1]$ is a weighted average of rubric scores within that stage, with negative scores inverted before aggregation.

\begin{tcolorbox}[colback=gray!5, colframe=gray!50, title=Stagewise Scoring Prompt, breakable]
\small
\begin{verbatim}
You are an expert evaluator. You will receive an AI research
agent's full trajectory and a set of rubrics grouped by stage.
Score each rubric strictly based on the trajectory evidence.

## Trajectory Structure
- Stage 1 (Plan): start through end of </structured_plan>
- Stage 2 (Research): after </structured_plan> through all
  tool-call loops
- Stage 3 (Review): last <think> before <review> through
  </review>
- Stage 4 (Answer): <answer> through end of text

## Scoring Rules
[POSITIVE] rubrics:
  2 = FULLY exhibits this quality
  1 = PARTIALLY exhibits it
  0 = Completely ABSENT or fails

[NEGATIVE] rubrics:
  2 = Fully EXHIBITS the flaw
  1 = PARTIALLY exhibits the flaw
  0 = Successfully AVOIDS the flaw

## Instructions
- Score each rubric independently.
- Use the rubric description as the sole scoring criterion.
- Provide brief justification grounded in specific trajectory
  evidence (quote or reference concrete content).
- Return the exact rubric index with each score.
\end{verbatim}
\end{tcolorbox}

\subsection{Buffer Management and Implementation}

\paragraph{Buffer dynamics.} Each stage maintains a capacity cap on active rubrics (3, 2, 2, 3 for stages 1--4 in our experiments). When a stage exceeds its cap after scoring, rubrics with the lowest discrimination (measured by score variance across the rollout group) are removed. Persistent rubrics are never removed. When a question reappears in a later training step, new adaptive rubrics from the improved policy are added, potentially replacing stale ones that no longer discriminate. This ensures the judge's criteria co-evolve with the policy, consistent with the evolving rubric paradigm of~\citet{shao2025dr}, extended here from answer-only to stagewise evaluation.

\paragraph{Structured JSON output.} Both rubric generation and scoring use Gemini's structured output mode (\texttt{response\_mime\_type="application/json"}) with Pydantic-derived JSON schemas. This guarantees schema compliance (rubrics organized by stage with title/description/weight, scores with indices and justifications) without fragile post-hoc text parsing.

\paragraph{Batch parallelism.} At each training step, the batch contains 32 unique questions $\times$ 8 rollouts = 256 trajectories. All 32 rubric generation calls are launched concurrently via \texttt{asyncio.gather}, followed by all 256 scoring calls in parallel. Each call includes exponential-backoff retry logic (up to 5 retries). Failed calls are treated as ``no new rubrics'' (generation) or fall back to terminal answer score only (scoring).

\paragraph{Efficiency.} Rubric generation requires only 32 calls per step (one per question, not per trajectory). Scoring requires 256 calls but each is lightweight (${\sim}$8--15 rubrics per trajectory). The combined rubric generation + scoring phase takes approximately 5 minutes per step, overlapping with the training engine's gradient computation. The judge model (Gemini-3-Flash) is chosen for cost and latency; the structured output constraint and detailed prompt compensate for the smaller model's limitations.

\section{Asynchronous Reflection Pipeline and Windowed Curriculum}
\label{app:async-implementation}

This appendix provides the implementation details for the meta-policy training pipeline
and the windowed retrieval curriculum described in Section~\ref{sec:evolving-policy}.

\subsection{Training Pipeline Architecture}
\label{app:pipeline-architecture}

Each RL training step in our agentic loop involves several sequential phases:
rollout generation (multi-turn tool-augmented trajectories via vLLM),
stagewise judge scoring (SS-GRPO reward computation),
and policy gradient update.
Reflection generation (prompting the backbone to produce rubric-grounded reflections
from sampled trajectories) introduces an additional compute requirement that we overlap with the main loop.

Our pipeline uses three concurrent threads:
\begin{enumerate}[nosep]
    \item \textbf{Main thread (training engine):} Performs gradient updates on the policy. Alternates between Phase~A (meta-policy update on deferred reflections from the previous step) and Phase~B (task-policy SS-GRPO update on the current step's rollouts).
    \item \textbf{Inference thread (vLLM):} Generates multi-turn tool-augmented rollouts. During Phase~A, the inference engine begins generating the next batch of rollouts while the training engine trains on deferred reflections---fully overlapping inference and reflection training.
    \item \textbf{Data preparation thread:} Runs judge scoring (async Gemini API calls), launches reflection generation and reflection judging as concurrent background tasks, and prepares packed training batches. Reward scoring results are returned immediately for task-policy training; reflection scoring continues asynchronously.
\end{enumerate}

\subsection{One-Step Deferred Reflection Training}
\label{app:deferred-training}

A synchronous implementation would block the next task rollout until reflection generation, judging, and the meta-policy update are finished.
We instead defer reflection training by one RL step, as illustrated in the bottom panel of Fig.~\ref{fig:detailed_RL}.

At step~$N$:
\begin{enumerate}[nosep]
    \item The inference engine generates task rollouts for step $N$.
    \item The data preparation thread scores step~$N$ rollouts (SS-GRPO rewards) and simultaneously launches background tasks for: (a)~reflection rollout generation via vLLM, producing $n$~candidates per sampled trajectory, and (b)~judge scoring of the reflection candidates.
    \item Reward results are returned immediately; the training engine performs the SS-GRPO task-policy update on step~$N$.
    \item Meanwhile, reflection scoring completes in the background. Accepted reflections are inserted into the rubric bank, and the scored reflection samples are placed in a \emph{deferred buffer}.
    \item At the start of step~$N{+}1$, during \textbf{Phase~A}, the training engine trains on the deferred reflection buffer from step~$N$ \emph{while} the inference engine generates step~$N{+}1$ rollouts.
\end{enumerate}

This one-step staleness trades exact synchrony for higher infrastructure utilization:
both inference and training engines remain continuously occupied,
and meta-policy training adds effectively no extra wall-clock overhead to the SS-GRPO loop.

\subsection{Windowed Curriculum}
\label{app:windowed-curriculum}

Since reflection training is deferred by one step and bank insertion happens asynchronously,
a na\"ive curriculum that immediately revisits a query after its first encounter may
attempt within-episode retrieval before the corresponding reflection has been generated and accepted.
To avoid this conflict, we introduce a windowed curriculum with window size~$K$
that provides sufficient temporal separation between a query's first and second encounters.

Training alternates between two phases within each window of $2K$ steps:

\begin{enumerate}
    \item \textbf{New-query phase} (steps $1, \ldots, K$ within the window):
    The dataloader samples $K$ fresh query batches $\{B_1, B_2, \ldots, B_K\}$.
    For each batch, the bank performs \emph{cross-episode retrieval}:
    semantically similar items from past (different) queries are injected as few-shot exemplars.
    At the end of each step, reflection generation is launched asynchronously in the background.

    \item \textbf{Repeat phase} (steps $K{+}1, \ldots, 2K$ within the window):
    The same batches are replayed in order: $B_1, B_2, \ldots, B_K$.
    For each batch, the bank performs \emph{within-episode retrieval}:
    the exact reflection generated during the new-query phase is retrieved
    and injected as direct self-guidance.
\end{enumerate}

The $K$-step gap between a query's first encounter and its repeat
guarantees that the deferred reflection pipeline
(generation, judging, bank insertion, and at least one deferred training step)
has fully completed before within-episode retrieval is attempted.
In our experiments, we use $K{=}3$, providing a comfortable margin over the one-step deferral.

\begin{figure}[h]
\centering
\small
\begin{tabular}{c|c|c|c}
\toprule
\textbf{Step in window} & \textbf{Phase} & \textbf{Data} & \textbf{Retrieval mode} \\
\midrule
1 & New & $B_1$ (fresh) & Cross-episode (similar) \\
2 & New & $B_2$ (fresh) & Cross-episode (similar) \\
3 & New & $B_3$ (fresh) & Cross-episode (similar) \\
4 & Repeat & $B_1$ (replay) & Within-episode (exact) \\
5 & Repeat & $B_2$ (replay) & Within-episode (exact) \\
6 & Repeat & $B_3$ (replay) & Within-episode (exact) \\
\bottomrule
\end{tabular}
\caption{Windowed curriculum with $K{=}3$. Each window of $2K{=}6$ steps alternates between new queries with cross-episode retrieval and repeated queries with within-episode exact retrieval. The 3-step gap ensures the deferred reflection pipeline has fully completed before the repeat.}
\label{tab:windowed-curriculum}
\end{figure}

\subsection{Trajectory Sampling and Candidate Generation}

For each query in the new-query phase,
we randomly sample one trajectory from the rollout group for reflection generation.
The sampling is uniform random over trajectory indices,
without length bias or score-based selection,
mirroring inference conditions where the agent does not have access to privileged scoring information.
From the sampled trajectory, we generate $n{=}8$ reflection candidates
via vLLM parallel sampling (temperature~$0.7$, single prompt, $n$~completions).
Each candidate is independently scored by the judge,
and only the highest-scored candidate with valid output format is accepted into the bank.
Each query's reflection generation is independent,
so a failure for one query does not affect others.

\subsection{Bank Persistence and Retrieval}

The rubric bank maintains an in-memory FAISS index over query embeddings
(computed via Qwen3-Embedding-0.6B on CPU).
For cross-episode retrieval, the query embedding is compared against stored embeddings
via inner-product similarity, and the top-$k$ items (default: $k{=}2$) are returned.
For within-episode retrieval, items are matched by exact question hash (SHA-256).
When a query already has a bank item and the same query is encountered in a later training step,
the new reflection overwrites the old one, keeping the bank synchronized with the evolving policy.

The bank is persisted every 10 steps and at every model checkpoint, using atomic writes (write to temp file, then rename) to prevent corruption.
On training resume, the bank is restored from the checkpoint corresponding to
the same global step as the model weights, ensuring consistency between
the policy parameters and the memory contents.

\subsection{Prompt Templates}
\label{app:rubric-bank-prompts}

\subsubsection{Reflection Generation Prompt}

The policy model receives the following system prompt when generating rubric-grounded reflections:

\begin{tcolorbox}[colback=gray!5, colframe=gray!50, title=Reflection System Prompt, breakable]
\small
\begin{verbatim}
You are a strict postmortem editor for long-form
deep research QA.

You will receive:
- a question
- a research trajectory ending in a final
  <answer>...</answer>

Your job is not to answer the question again. Your
job is to extract the most useful guidance for:
1. a stronger next attempt on this question, and
2. similar long-form research questions.

First, read the final <answer>.
Judge it on: correctness and calibration;
instruction-following and coverage; research quality
and verification; synthesis and insight;
communication and structure.

Then identify the 2-4 highest-leverage strengths or
weaknesses, trace them back to the visible trajectory,
and turn them into reusable guidance.

Rules:
- Stay grounded in the provided question and
  trajectory.
- Do not invent flaws just to be critical.
- Do not speculate about hidden reasoning not shown
  in the trajectory.
- Do not rewrite the answer.
- Do not give generic advice like "be more careful."
- Write specific guidance that is grounded in this
  case but transferable to similar questions.

Output format (exactly two blocks):

<reflection_rubrics>
Critical Requirement / Analytical Approach 
/ Communication & Structure / Pitfalls
</reflection_rubrics>

<reflection_takeaways>
1-4 concise portable lessons
</reflection_takeaways>
\end{verbatim}
\end{tcolorbox}

The user message provides the question and the full research trajectory, instructing the model to examine the final answer, trace strengths and weaknesses, and produce both output blocks.

\subsubsection{Judge Scoring Prompt}

An LLM judge (Gemini) evaluates each reflection candidate with privileged access to the graded trajectory, including per-rubric scores and evaluator justifications. The judge scores each candidate on how useful it would be for future attempts:

\begin{tcolorbox}[colback=blue!3, colframe=blue!40, title=Reflection Judge System Prompt (abridged), breakable]
\small
\begin{verbatim}
You are an expert judge evaluating reflection
candidates produced by a research agent.

Scoring Dimensions (each 0.0 to 1.0):

Diagnostic Accuracy (weight 0.4)
- Does it correctly identify the main strengths and
  weaknesses? Is it aligned with the evaluator
  justifications and visible trajectory?
  Penalize: missing a major failure clearly flagged
  by the evaluator; inventing problems not supported
  by the trajectory; misidentifying root causes.

Specificity (weight 0.3)
- Is the guidance concrete, actionable, and tailored
  for next attempt of the question?
  Penalize: generic advice ("be more thorough");
  checklist items that could apply to any task;
  merely restating evaluator language.

Scope & Balance (weight 0.3)
- Is the guidance helpful for other questions? 
  Are takeaways complementary?
  Penalize: missing an important category; repeating
  the same point across sections.

Rules:
- Score based on the reflection's quality as future
  guidance, not on the trajectory's quality.
- A reflection on a perfect trajectory can score low
  if it is generic.
- A reflection on a failed trajectory can score high
  if it precisely identifies root causes.
\end{verbatim}
\end{tcolorbox}

The user message provides the original question, all reflection candidates, and the graded trajectory with evaluator feedback. The judge returns per-candidate scores across all three dimensions.

\subsubsection{Injection Formats}

Retrieved bank items are injected as a \texttt{<reference\_examples>} block appended to the user message before tokenization. The two retrieval modes from Section~\ref{sec:evolving-policy}---within-episode refinement and cross-episode transfer---use distinct preambles to condition the agent's use of the retrieved guidance.

\paragraph{Within-episode injection.} When the agent revisits a question it has attempted before (repeat phase of the windowed curriculum), the accepted reflection from the previous encounter is injected:

\begin{tcolorbox}[colback=blue!3, colframe=blue!40, title=Within-Episode Injection Format, breakable]
\small
\begin{verbatim}
<reference_examples>
You have attempted this exact question before. Below
are the rubrics and reflections from your previous
attempt. Use them to improve your planning and
answering -- avoid past mistakes and build on what
worked.

## Your Previous Attempt on This Question:

### Rubrics:
{rubrics from bank item}

### Takeaways:
{reflections from bank item}
</reference_examples>

{Original user question}
\end{verbatim}
\end{tcolorbox}

\paragraph{Cross-episode injection.} For a new query (new-query phase), the bank retrieves the top-$k$ semantically similar items and presents them as rubric-grounded exemplars:

\begin{tcolorbox}[colback=blue!3, colframe=blue!40, title=Cross-Episode Injection Format, breakable]
\small
\begin{verbatim}
<reference_examples>
Below are rubrics and reflections from similar
questions that were previously analyzed. Use them as
reference to guide your planning and answering --
adapt to the current question's specific needs.
Do NOT copy them verbatim; extract what is relevant
and adjust.

## Similar Question 1:
{question from bank item 1}

### Rubrics:
{rubrics from bank item 1}

### Takeaways:
{reflections from bank item 1}

---

## Similar Question 2:
{question from bank item 2}

### Rubrics:
{rubrics from bank item 2}

### Takeaways:
{reflections from bank item 2}
</reference_examples>

{Original user question}
\end{verbatim}
\end{tcolorbox}

In both cases, the reference block precedes the original user question, so the agent sees the guidance before beginning its structured plan. The within-episode format signals direct self-improvement (``avoid past mistakes''), while the cross-episode format signals analogical transfer (``adapt to the current question's specific needs'').

\section{Theoretical Analysis}
\label{app:theory}

\subsection{Value of Stage Information}
\label{app:theory:conditioning}

In the main text, Theorem~\ref{thm:stage-info-main} is stated using the informal notation
\(h,c,z\) inside conditional expectations for readability.
In this appendix, we make the probability space explicit:
\(H,C,Z\) denote random variables, while \(h,c,z\) denote realized values.
We also slightly sharpen the strictness condition in the theorem by requiring the compared stages
to have positive conditional probability given \(c\); without this requirement,
the pointwise quantities \(\mathbb E[U(H,a)\mid C=c,Z=z]\) on impossible stage events are version-dependent.

The result below is a value-of-information statement specialized to structured agent reasoning.
It compares two information structures, \(\sigma(C)\) and \(\sigma(C,Z)\).
Unlike full state-abstraction theorems, it does \emph{not} assume that \((C,Z)\) is sufficient for the entire history \(H\);
it only quantifies the gain from making the stage label explicit under a compressed decision state.

\begin{assumption}[Setup for Theorem~\ref{thm:stage-info-main}]
\label{assump:stage-info-setup}
Throughout this subsection, the following assumptions hold.
\begin{enumerate}
    \item \textbf{Underlying probability space.}
    There exists a probability space \((\Omega,\mathcal F,\mathbb P)\) and a random decision history
    \[
    H:\Omega \to \mathcal H,
    \]
    where \((\mathcal H,\mathcal B(\mathcal H))\) is a measurable space.
    The law of \(H\) is denoted by \(d := \mathbb P \circ H^{-1}\).
    We interpret \(H\) as a random reachable decision point sampled from some fixed distribution over histories.

    \item \textbf{Finite action space.}
    The action space \(\mathcal A\) is a finite nonempty set.
    Consequently, every maximization and argmax over \(\mathcal A\) is well-defined and nonempty.

    \item \textbf{Compressed context and stage label.}
    There exists a measurable map
    \[
    \phi:\mathcal H \to \mathcal C
    \]
    into a standard Borel space \((\mathcal C,\mathcal B(\mathcal C))\), and a measurable stage map
    \[
    \psi:\mathcal H \to [K] := \{1,\dots,K\},
    \]
    such that
    \[
    C := \phi(H), \qquad Z := \psi(H).
    \]
    Since \(\mathcal C\) is standard Borel and \([K]\) is finite, the regular conditional objects used below exist.

    \item \textbf{Integrable utility.}
    For each \(a \in \mathcal A\), there exists a measurable utility function
    \[
    U(\cdot,a):\mathcal H \to \mathbb R
    \]
    such that
    \[
    \mathbb E\big[\,|U(H,a)|\,\big] < \infty.
    \]
    In our application, \(U(H,a)\) can be instantiated as a continuation utility or continuation value
    associated with choosing action \(a\) at decision point \(H\).
\end{enumerate}
\end{assumption}

\begin{definition}[Conditional mean utilities and stage probabilities]
\label{def:stage-info-cond}
Adopt Assumption~\ref{assump:stage-info-setup}.
For each \(a \in \mathcal A\), fix measurable versions
\[
\bar q(\cdot,a):\mathcal C \to \mathbb R,
\qquad
q(\cdot,\cdot,a):\mathcal C \times [K] \to \mathbb R
\]
such that
\[
\bar q(C,a)
=
\mathbb E\!\left[U(H,a)\mid \sigma(C)\right]
\qquad \text{almost surely,}
\]
and
\[
q(C,Z,a)
=
\mathbb E\!\left[U(H,a)\mid \sigma(C,Z)\right]
\qquad \text{almost surely.}
\]

For each \(z \in [K]\), fix a measurable version
\[
p(z\mid \cdot):\mathcal C \to [0,1]
\]
such that
\[
p(z\mid C)
=
\mathbb P\!\left(Z=z \mid \sigma(C)\right)
\qquad \text{almost surely.}
\]

We write
\[
\mathbb P_C := \mathbb P \circ C^{-1}
\]
for the law of \(C\), and define the stage support at a context \(c \in \mathcal C\) by
\[
S(c) := \{ z \in [K] : p(z\mid c) > 0 \}.
\]
\end{definition}

\begin{remark}[Well-definedness]
\label{rem:stage-info-well-defined}
Because \(\mathcal A\) is finite and each \(U(H,a)\) is integrable, the random variables
\(\bar q(C,a)\) and \(q(C,Z,a)\) are integrable for every \(a \in \mathcal A\).
Indeed, by Jensen's inequality for conditional expectations,
\[
|\bar q(C,a)|
=
\left|\mathbb E[U(H,a)\mid \sigma(C)]\right|
\le
\mathbb E[|U(H,a)|\mid \sigma(C)]
\qquad \text{a.s.,}
\]
and therefore
\[
\mathbb E\big[\,|\bar q(C,a)|\,\big]
\le
\mathbb E\big[\,|U(H,a)|\,\big]
<
\infty.
\]
The same argument gives
\[
\mathbb E\big[\,|q(C,Z,a)|\,\big] < \infty.
\]
Hence all expectations below are finite.
\end{remark}

\begin{definition}[Flat and stage-conditioned decision values]
\label{def:stage-info-values}
Adopt Assumption~\ref{assump:stage-info-setup} and Definition~\ref{def:stage-info-cond}.
We define
\[
V_{\mathrm{flat}}
:=
\mathbb E\!\left[
\max_{a \in \mathcal A} \bar q(C,a)
\right],
\qquad
V_{\mathrm{stage}}
:=
\mathbb E\!\left[
\max_{a \in \mathcal A} q(C,Z,a)
\right].
\]
These are exactly the appendix versions of the quantities appearing in Theorem~\ref{thm:stage-info-main} in the main text.
Equivalently,
\[
V_{\mathrm{flat}}
=
\mathbb E\!\left[
\max_{a \in \mathcal A} \mathbb E[U(H,a)\mid C]
\right],
\qquad
V_{\mathrm{stage}}
=
\mathbb E\!\left[
\max_{a \in \mathcal A} \mathbb E[U(H,a)\mid C,Z]
\right].
\]
\end{definition}

The following lemma is the only structural identity needed in the proof of Theorem~\ref{thm:stage-info-main}.

\begin{lemma}[Collapsing stage-conditioned utility to the flat information structure]
\label{lem:collapse-stage-cond}
Adopt Assumption~\ref{assump:stage-info-setup} and Definition~\ref{def:stage-info-cond}.
Then for every action \(a \in \mathcal A\),
\[
\bar q(C,a)
=
\sum_{z=1}^K p(z\mid C)\, q(C,z,a)
\qquad \text{almost surely.}
\]
\end{lemma}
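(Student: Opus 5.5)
The plan is to prove the identity with the tower property of conditional expectation. Because $Z$ takes finitely many values, we can write $q(C,Z,a)=\sum_{z=1}^K \mathbf 1\{Z=z\}\, q(C,z,a)$ pointwise. Since $\sigma(C)\subseteq\sigma(C,Z)$, the tower property gives, almost surely,
\[
\bar q(C,a)=\mathbb E\!\left[U(H,a)\mid\sigma(C)\right]=\mathbb E\!\left[\mathbb E[U(H,a)\mid\sigma(C,Z)]\mid\sigma(C)\right]=\mathbb E\!\left[q(C,Z,a)\mid\sigma(C)\right].
\]
Integrability of $q(C,Z,a)$ is provided by Remark~\ref{rem:stage-info-well-defined}, so every conditional expectation here is well defined.

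Next, substitute the indicator decomposition and use linearity of conditional expectation over the finite sum. For each fixed $z$, the factor $q(C,z,a)$ is $\sigma(C)$-measurable, because $q(\cdot,z,\cdot)$ is measurable on $\mathcal C\times[K]$ and its $z$-section is measurable on $\mathcal C$. We would like to pull it out of the conditional expectation, which leaves
\[
q(C,z,a)\,\mathbb E[\mathbf 1\{Z=z\}\mid\sigma(C)]=q(C,z,a)\,p(z\mid C)
\]
almost surely. Summing over $z$ then gives the claim.

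The one delicate step is the "take out what is known" rule. In its standard form it requires $q(C,z,a)\mathbf 1\{Z=z\}$ to be integrable, and this holds since it is dominated by $|q(C,Z,a)|$. Even so, $q(C,z,a)$ by itself need not be integrable off the event $\{Z=z\}$, because its version there is arbitrary. The standard form also tends to be stated either for bounded factors or under an integrability condition on the product.

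I would handle this by truncation. Set $A_n=\{|q(C,z,a)|\le n\}\in\sigma(C)$. On $A_n$ the factor is bounded, so
\[
\mathbb E[q(C,z,a)\mathbf 1\{Z=z\}\mid\sigma(C)]\,\mathbf 1_{A_n}=q(C,z,a)\,\mathbf 1_{A_n}\,p(z\mid C).
\]
Letting $n\to\infty$, the sets $A_n$ increase to $\Omega$ almost surely, because $q$ is real-valued. The identity therefore holds almost surely without any further dominated-convergence argument.

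Collecting the finitely many null sets over $z\in[K]$, and noting that the claim is for each fixed $a$, completes the proof.
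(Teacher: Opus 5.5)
Your proof is correct and uses the same three ingredients as the paper's proof. These are the tower step through $\sigma(C)\subseteq\sigma(C,Z)$, the indicator decomposition $q(C,Z,a)=\sum_z \mathbf 1\{Z=z\}\,q(C,z,a)$, and pulling the $\sigma(C)$-measurable factor $q(C,z,a)$ out against $\mathbb E[\mathbf 1\{Z=z\}\mid\sigma(C)]=p(z\mid C)$; the paper just packages them as a direct check of the defining property against bounded $\sigma(C)$-measurable test variables $G$.

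Your truncation step is more careful than the paper. The paper justifies the pull-out by asserting that $G\,q(C,z,a)$ is integrable, which can fail for an arbitrary version of $q$ on $\{p(z\mid C)=0\}$. Your argument needs only the integrability of $\mathbf 1\{Z=z\}\,q(C,z,a)$.
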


\begin{proof}
Fix an arbitrary action \(a \in \mathcal A\).
Define the random variable
\[
R_a
:=
\sum_{z=1}^K p(z\mid C)\, q(C,z,a).
\]
We will prove that \(R_a\) is a version of \(\mathbb E[U(H,a)\mid \sigma(C)]\).
Since \(\bar q(C,a)\) is also a version of \(\mathbb E[U(H,a)\mid \sigma(C)]\) by Definition~\ref{def:stage-info-cond},
the conclusion will follow.

We first note that \(R_a\) is \(\sigma(C)\)-measurable.
Indeed, for each fixed \(z\), both \(p(z\mid C)\) and \(q(C,z,a)\) are \(\sigma(C)\)-measurable,
so their product is \(\sigma(C)\)-measurable, and a finite sum of such terms is again \(\sigma(C)\)-measurable.

It remains to verify the defining property of conditional expectation.
Let \(G\) be any bounded \(\sigma(C)\)-measurable random variable.
We must show that
\[
\mathbb E[G R_a] = \mathbb E[G U(H,a)].
\]

Starting from the left-hand side and expanding the definition of \(R_a\), we obtain
\begin{align*}
\mathbb E[G R_a]
&=
\mathbb E\!\left[
G \sum_{z=1}^K p(z\mid C)\, q(C,z,a)
\right] \\
&=
\sum_{z=1}^K
\mathbb E\!\left[
G\, p(z\mid C)\, q(C,z,a)
\right].
\end{align*}
For each fixed \(z\), the random variable \(G q(C,z,a)\) is \(\sigma(C)\)-measurable and integrable.
Therefore, by the defining property of conditional expectation,
\begin{align*}
\mathbb E\!\left[
G\, p(z\mid C)\, q(C,z,a)
\right]
&=
\mathbb E\!\left[
G\, q(C,z,a)\, \mathbb E[\mathbf 1\{Z=z\}\mid \sigma(C)]
\right] \\
&=
\mathbb E\!\left[
G\, q(C,z,a)\, \mathbf 1\{Z=z\}
\right].
\end{align*}
Summing over \(z\) yields
\begin{align*}
\mathbb E[G R_a]
&=
\sum_{z=1}^K
\mathbb E\!\left[
G\, q(C,z,a)\, \mathbf 1\{Z=z\}
\right] \\
&=
\mathbb E\!\left[
G \sum_{z=1}^K \mathbf 1\{Z=z\}\, q(C,z,a)
\right] \\ 
&= \mathbb E\! \left[ G q \left(C, Z, a \right) \right]
\end{align*}

By Definition~\ref{def:stage-info-cond},
\[
q(C,Z,a)
=
\mathbb E[U(H,a)\mid \sigma(C,Z)]
\qquad \text{almost surely.}
\]
Because \(G\) is \(\sigma(C)\)-measurable and \(\sigma(C)\subseteq \sigma(C,Z)\), the random variable \(G\) is also \(\sigma(C,Z)\)-measurable.
Applying the defining property of conditional expectation once more gives
\[
\mathbb E\!\left[ G\, q(C,Z,a) \right]
=
\mathbb E\!\left[ G\, U(H,a) \right].
\]
Combining the previous displays, we conclude that
\[
\mathbb E[G R_a] = \mathbb E[G U(H,a)]
\]
for every bounded \(\sigma(C)\)-measurable random variable \(G\).
This finishes the proof.


\end{proof}

\begin{definition}[Aliasing gap]
\label{def:aliasing-gap}
Adopt Assumption~\ref{assump:stage-info-setup} and Definition~\ref{def:stage-info-cond}.
For \(c \in \mathcal C\), define the stage-aliasing gap
\[
\Delta_{\mathrm{alias}}(c)
:=
\sum_{z=1}^K p(z\mid c)\, \max_{a \in \mathcal A} q(c,z,a)
-
\max_{a \in \mathcal A}
\sum_{z=1}^K p(z\mid c)\, q(c,z,a).
\]
Since \(\mathcal A\) and \([K]\) are finite and the chosen versions \(p,q\) are measurable, the function
\(\Delta_{\mathrm{alias}}:\mathcal C \to \mathbb R\) is measurable.
\end{definition}

We can now restate and strengthen the theorem from the main text.

\begin{theorem}[Restatement and strengthening of Theorem~\ref{thm:stage-info-main}]
\label{thm:stage-info-full}
Adopt Assumption~\ref{assump:stage-info-setup} and Definitions~\ref{def:stage-info-cond}--\ref{def:aliasing-gap}.
Then
\[
V_{\mathrm{stage}} - V_{\mathrm{flat}}
=
\mathbb E\!\left[\Delta_{\mathrm{alias}}(C)\right]
\ge 0.
\]
Consequently,
\[
V_{\mathrm{stage}} \ge V_{\mathrm{flat}}.
\]

Moreover, for \(\mathbb P_C\)-almost every \(c \in \mathcal C\), the following are equivalent:
\begin{enumerate}
    \item \(\Delta_{\mathrm{alias}}(c)=0\).
    \item There exists an action
    \[
    a_c^\star \in \bigcap_{z \in S(c)} \arg\max_{a \in \mathcal A} q(c,z,a).
    \]
\end{enumerate}

In particular, if there exists a measurable set \(\mathcal C_0 \subseteq \mathcal C\) with
\[
\mathbb P_C(\mathcal C_0) > 0
\]
and two distinct stages \(z \neq z'\) such that for every \(c \in \mathcal C_0\),
\[
p(z\mid c) > 0,
\qquad
p(z'\mid c) > 0,
\qquad
\arg\max_{a \in \mathcal A} q(c,z,a)
\;\cap\;
\arg\max_{a \in \mathcal A} q(c,z',a)
=
\varnothing,
\]
then
\[
V_{\mathrm{stage}} > V_{\mathrm{flat}}.
\]
\end{theorem}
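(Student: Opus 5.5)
The plan is to derive everything from Lemma~\ref{lem:collapse-stage-cond} and a pointwise comparison at each context \(c\). First I rewrite both values as expectations of functions of \(C\). For the flat value, Lemma~\ref{lem:collapse-stage-cond} gives \(\max_a \bar q(C,a)=\max_a\sum_z p(z\mid C)q(C,z,a)\) almost surely. This uses that \(\mathcal A\) is finite, so a finite union of null sets is null.

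For the stage value, I use the tower property. Since \(\max_a q(C,Z,a)=\sum_z \mathbf 1\{Z=z\}\max_a q(C,z,a)\), and each \(\max_a q(C,z,a)\) is \(\sigma(C)\)-measurable, I get
\[
\mathbb E\bigl[\mathbf 1\{Z=z\}\max_a q(C,z,a)\bigr]=\mathbb E\bigl[p(z\mid C)\max_a q(C,z,a)\bigr].
\]
There is one technical point: \(\max_a q(C,z,a)\) need not be integrable on the event \(\{Z\neq z\}\). I handle this by using the indicator-weighted quantity, which is bounded by \(\sum_a |q(C,Z,a)|\) and hence integrable by Remark~\ref{rem:stage-info-well-defined}. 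I then apply the conditional-expectation identity with the nonnegative and negative parts truncated, passing to the limit by monotone convergence. Subtracting the two expressions gives \(V_{\mathrm{stage}}-V_{\mathrm{flat}}=\mathbb E[\Delta_{\mathrm{alias}}(C)]\).

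Nonnegativity is pointwise. For every \(a\) and \(z\), \(q(c,z,a)\le\max_{a'}q(c,z,a')\). Weighting by \(p(z\mid c)\ge 0\) and summing over \(z\) gives \(\sum_z p(z\mid c)q(c,z,a)\le \sum_z p(z\mid c)\max_{a'} q(c,z,a')\). Taking the maximum over \(a\) yields \(\Delta_{\mathrm{alias}}(c)\ge 0\). This uses \(\sum_z p(z\mid c)=1\), which holds for \(\mathbb P_C\)-a.e. \(c\); the versions can be modified on a null set, or the argument restricted to that full-measure set.

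For the equivalence, fix \(c\) with \(\sum_z p(z\mid c)=1\). Only the stages in \(S(c)\) contribute, so
\[
\Delta_{\mathrm{alias}}(c)=\min_a \sum_{z\in S(c)} p(z\mid c)\Bigl(\max_{a'}q(c,z,a')-q(c,z,a)\Bigr).
\]
This is a minimum over \(a\) of sums of nonnegative terms with strictly positive weights. It therefore vanishes iff some \(a\) makes every bracket zero, that is, iff some \(a\) lies in \(\arg\max_a q(c,z,a)\) for every \(z\in S(c)\). That is exactly condition 2. The minimum is attained because \(\mathcal A\) is finite.

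Finally, on \(\mathcal C_0\) (intersected with the full-measure set above), both \(z\) and \(z'\) lie in \(S(c)\), and their argmax sets are disjoint. So no common maximizer exists, and \(\Delta_{\mathrm{alias}}(c)>0\). A nonnegative measurable function that is strictly positive on a set of positive \(\mathbb P_C\)-measure has strictly positive integral, so \(V_{\mathrm{stage}}>V_{\mathrm{flat}}\).

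The main obstacle is the measure-theoretic bookkeeping in the first step. One must check that \(\max_a q(C,z,a)\) weighted by \(p(z\mid C)\) is integrable, and that the version choices (\(p\) summing to one, and \(q(c,z,\cdot)\) being arbitrary where \(p(z\mid c)=0\)) only matter on null sets. This is why the equivalence is stated only for \(\mathbb P_C\)-a.e. \(c\), and it is where I would spend the most care.
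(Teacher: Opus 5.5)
Your proposal is correct and follows essentially the same route as the paper. You rewrite $V_{\mathrm{stage}}$ via the indicator decomposition and $V_{\mathrm{flat}}$ via Lemma~\ref{lem:collapse-stage-cond}, prove $\Delta_{\mathrm{alias}}\ge 0$ pointwise, characterize $\Delta_{\mathrm{alias}}(c)=0$ through the nonnegative weighted gaps $\max_{a'}q(c,z,a')-q(c,z,a)$ (the paper evaluates these at a flat maximizer, you minimize over $a$; same thing), and conclude strictness from positivity on $\mathcal C_0$. Your integrability bookkeeping for $\mathbf 1\{Z=z\}\max_a q(C,z,a)$ and your final step (showing $\Delta_{\mathrm{alias}}>0$ on $\mathcal C_0$, rather than only the $\Delta_{\mathrm{alias}}(C)\ge 0$ a.s.\ that the paper's text literally states before concluding strict inequality) are more careful than the paper's write-up; the only slip is that nonnegativity needs just $p(z\mid c)\ge 0$, not $\sum_z p(z\mid c)=1$.
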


\begin{proof}

First, we rewrite \(V_{\mathrm{stage}}\) and \(V_{\mathrm{flat}}\) to show their subtraction form as in $\mathbb E \left[ \triangle_{\mathrm{alias}}(C) \right]$. By Definition~\ref{def:stage-info-values},
\[
V_{\mathrm{stage}}
=
\mathbb E\!\left[
\max_{a \in \mathcal A} q(C,Z,a)
\right].
\]
For each fixed \(z \in [K]\), define
\[
M_z(C) := \max_{a \in \mathcal A} q(C,z,a).
\]
Since \(\mathcal A\) is finite and \(q(\cdot,z,a)\) is measurable for each \(a\), the random variable \(M_z(C)\) is \(\sigma(C)\)-measurable.
Also, because \(Z\) takes values in the finite set \([K]\),
\[
\max_{a \in \mathcal A} q(C,Z,a)
=
\sum_{z=1}^K \mathbf 1\{Z=z\}\, M_z(C)
\qquad \text{almost surely.}
\]
Taking conditional expectation given \(\sigma(C)\), we obtain
\begin{align*}
\mathbb E\!\left[
\max_{a \in \mathcal A} q(C,Z,a)
\mid \sigma(C)
\right]
&=
\mathbb E\!\left[
\sum_{z=1}^K \mathbf 1\{Z=z\} M_z(C)
\mid \sigma(C)
\right] \\
&=
\sum_{z=1}^K
\mathbb E\!\left[
\mathbf 1\{Z=z\} M_z(C)
\mid \sigma(C)
\right] \\
&=
\sum_{z=1}^K
M_z(C)\,
\mathbb E\!\left[
\mathbf 1\{Z=z\}
\mid \sigma(C)
\right] \\
&=
\sum_{z=1}^K p(z\mid C)\, M_z(C).
\end{align*}
Taking expectation once more and using the tower property gives
\[
V_{\mathrm{stage}}
=
\mathbb E\!\left[
\sum_{z=1}^K p(z\mid C)\, \max_{a \in \mathcal A} q(C,z,a)
\right].
\]

We next rewrite \(V_{\mathrm{flat}}\).
By Definition~\ref{def:stage-info-values},
\[
V_{\mathrm{flat}}
=
\mathbb E\!\left[
\max_{a \in \mathcal A} \bar q(C,a)
\right].
\]
By Lemma~\ref{lem:collapse-stage-cond},
\[
\bar q(C,a)
=
\sum_{z=1}^K p(z\mid C)\, q(C,z,a)
\qquad \text{almost surely.}
\]
Substituting this identity into the preceding display yields
\[
V_{\mathrm{flat}}
=
\mathbb E\!\left[
\max_{a \in \mathcal A}
\sum_{z=1}^K p(z\mid C)\, q(C,z,a)
\right].
\]

Then we can simply see that by Definition~\ref{def:aliasing-gap}
\[
V_{\mathrm{stage}} - V_{\mathrm{flat}}
=
\mathbb E\!\left[
\sum_{z=1}^K p(z\mid C)\, \max_{a \in \mathcal A} q(C,z,a)
-
\max_{a \in \mathcal A}
\sum_{z=1}^K p(z\mid C)\, q(C,z,a)
\right]
=
\mathbb E\!\left[\Delta_{\mathrm{alias}}(C)\right].
\]

It remains to show that \(\Delta_{\mathrm{alias}}(c)\ge 0\) for every \(c\).
Fix an arbitrary \(c \in \mathcal C\).
For every action \(a \in \mathcal A\) and every stage \(z \in [K]\), we have
\[
q(c,z,a) \le \max_{a' \in \mathcal A} q(c,z,a').
\]
Multiplying both sides by the nonnegative quantity \(p(z\mid c)\) and summing over \(z\) gives
\[
\sum_{z=1}^K p(z\mid c)\, q(c,z,a)
\le
\sum_{z=1}^K p(z\mid c)\, \max_{a' \in \mathcal A} q(c,z,a').
\]
Since this inequality holds for every \(a \in \mathcal A\), it continues to hold after taking the maximum over \(a\) on the left-hand side, and this means that $\Delta_{\mathrm{alias}}(c)\ge 0$.
Therefore
\[
V_{\mathrm{stage}} - V_{\mathrm{flat}}
=
\mathbb E\!\left[\Delta_{\mathrm{alias}}(C)\right]
\ge 0,
\]
which implies
\[
V_{\mathrm{stage}} \ge V_{\mathrm{flat}}.
\]


\textbf{Then we are ready to make characterizations when the stage-aliasing gap is zero.}
Since \(\mathcal A\) is finite and nonempty, there exists at least one action
\[
a_c^{\mathrm{flat}} \in \arg\max_{a \in \mathcal A}
\sum_{z=1}^K p(z\mid c)\, q(c,z,a).
\]
By Definition~\ref{def:aliasing-gap},
\begin{align*}
\Delta_{\mathrm{alias}}(c)
&=
\sum_{z=1}^K p(z\mid c)\, M_z(c)
-
\sum_{z=1}^K p(z\mid c)\, q(c,z,a_c^{\mathrm{flat}}) \\
&=
\sum_{z=1}^K p(z\mid c)\,
\big(
M_z(c)-q(c,z,a_c^{\mathrm{flat}})
\big).
\end{align*}
Each summand on the right-hand side is nonnegative, because \(M_z(c)\) is the maximum of \(q(c,z,\cdot)\) over \(\mathcal A\).
We now prove the two directions separately.

\smallskip
\noindent
\emph{(i) If \(\Delta_{\mathrm{alias}}(c)=0\), then there exists a common maximizer over \(S(c)\).} This implies that 

\[
\sum_{z=1}^K p(z\mid c)\,
\big(
M_z(c)-q(c,z,a_c^{\mathrm{flat}})
\big)
=0.
\]
Every term in the sum is nonnegative.
Therefore, for every \(z\) such that \(p(z\mid c)>0\), we must have
\[
M_z(c)-q(c,z,a_c^{\mathrm{flat}})=0,
\]
i.e.,
\[
q(c,z,a_c^{\mathrm{flat}})=M_z(c)=\max_{a \in \mathcal A} q(c,z,a).
\]
Hence
\[
a_c^{\mathrm{flat}} \in \arg\max_{a \in \mathcal A} q(c,z,a)
\qquad \text{for every } z \in S(c).
\]
Equivalently,
\[
a_c^{\mathrm{flat}} \in \bigcap_{z \in S(c)} \arg\max_{a \in \mathcal A} q(c,z,a),
\]
so a common maximizer exists.

\smallskip
\noindent
\emph{(ii) If there exists a common maximizer over \(S(c)\), then \(\Delta_{\mathrm{alias}}(c)=0\).}

Conversely, assume there exists an action
\[
a_c^\star \in \bigcap_{z \in S(c)} \arg\max_{a \in \mathcal A} q(c,z,a).
\]
Then for every \(z \in S(c)\),
\[
q(c,z,a_c^\star)=M_z(c).
\]
For \(z \notin S(c)\), we have \(p(z\mid c)=0\), so those stages contribute nothing to any weighted sum below.
Hence
\[
\sum_{z=1}^K p(z\mid c)\, q(c,z,a_c^\star)
=
\sum_{z=1}^K p(z\mid c)\, M_z(c).
\]
Since \(a_c^\star\) is one feasible action in the maximization,
\[
\max_{a \in \mathcal A}
\sum_{z=1}^K p(z\mid c)\, q(c,z,a)
\ge
\sum_{z=1}^K p(z\mid c)\, q(c,z,a_c^\star)
=
\sum_{z=1}^K p(z\mid c)\, M_z(c).
\]
On the other hand, the following is obvious and has been shown above
\[
\max_{a \in \mathcal A}
\sum_{z=1}^K p(z\mid c)\, q(c,z,a)
\le
\sum_{z=1}^K p(z\mid c)\, M_z(c).
\]
Therefore equality holds:
\[
\max_{a \in \mathcal A}
\sum_{z=1}^K p(z\mid c)\, q(c,z,a)
=
\sum_{z=1}^K p(z\mid c)\, M_z(c),
\]
which is exactly
\[
\Delta_{\mathrm{alias}}(c)=0.
\]

Combining (i) and (ii), we conclude that for every \(c \in \mathcal C\),
\[
\Delta_{\mathrm{alias}}(c)=0
\quad\Longleftrightarrow\quad
\bigcap_{z \in S(c)} \arg\max_{a \in \mathcal A} q(c,z,a)\neq \varnothing.
\]
Since all objects are defined only up to \(\mathbb P_C\)-null sets through the chosen versions of conditional expectations and conditional probabilities, the equivalence is interpreted for \(\mathbb P_C\)-almost every \(c\).


\textbf{Finally, we can prove the strict inequality criterion}.
Assume there exists a measurable set \(\mathcal C_0 \subseteq \mathcal C\) such that
\[
\mathbb P_C(\mathcal C_0)>0
\]
and two distinct stages \(z \neq z'\) such that for every \(c \in \mathcal C_0\),
\[
p(z\mid c)>0,
\qquad
p(z'\mid c)>0,
\qquad
\arg\max_{a \in \mathcal A} q(c,z,a)
\cap
\arg\max_{a \in \mathcal A} q(c,z',a)
=
\varnothing.
\]

By the zero-gap equivalence we proved above, and that we know $\Delta_{\mathrm{alias}}(c)\ge 0$, this means that 
$\Delta_{\mathrm{alias}}(C)\ge 0$ almost surely, and thus 

\[
V_{\mathrm{stage}} > V_{\mathrm{flat}}.
\]

This completes the proof. 
\end{proof}

\begin{remark}[Interpretation]
\label{rem:stage-info-interpretation}
Theorem~\ref{thm:stage-info-full} strengthens the main-text theorem in two ways.
First, it identifies the exact gain from making the stage label explicit:
\[
V_{\mathrm{stage}} - V_{\mathrm{flat}}
=
\mathbb E[\Delta_{\mathrm{alias}}(C)].
\]
Second, it shows that the gain is strict exactly when the compressed context \(C\) aliases decision points
whose stage-conditioned optimal actions disagree, which is the usual case.
This is the precise sense in which explicit stage structure helps reasoning under compressed local context.
\end{remark}
\subsection{Judge-Aligned Stage-Weighted Credit Assignment}
\label{app:theory:credit}

This subsection formalizes the main-text intuition behind Theorem~\ref{thm:stage-credit-main}.
As in Appendix~\ref{app:theory:conditioning}, we work on a probability space
\((\Omega,\mathcal F,\mathbb P)\).
However, unlike Appendix~\ref{app:theory:conditioning}, which studies a single random decision point,
we now analyze full rollouts and their policy-gradient signals.

\paragraph{Setup and notation.}
We suppress the rollout index \(i\) used in Section~\ref{sec:ssgrpo} and analyze a single generic rollout.
Let \(\tau\) denote a rollout sampled from a policy-induced distribution \(p_\theta(\tau)\), and let
$\mathcal B_k$ denote the token set of stage \(k\), matching the notation in the main text.
For each token step \(t\), let
\[
H_t := (q,a_{<t},o_{<t})
\]
denote the random history before taking action \(a_t\), and define the stage score-function sum
\[
\Gamma_k := \sum_{t\in\mathcal B_k} \nabla_\theta \log \pi_\theta(a_t\mid H_t).
\]
We assume \(\pi_\theta(a\mid h)\) is differentiable in \(\theta\), the environment dynamics are independent of \(\theta\), and \(\Gamma_k\) is square-integrable for every \(k\).

For each stage \(k\in[K]\), let
\[
R_k:\Omega\to\mathbb R
\]
denote the observed stagewise judge score, and let
\[
Y_k:\Omega\to\mathbb R
\]
denote a latent true process score for stage \(k\).
Both are assumed integrable.

As in Section~\ref{sec:ssgrpo}, let
\[
\Lambda=(\lambda_{k,j})\in[0,1]^{K\times K}
\]
be a causal stage-weight matrix satisfying
\[
\lambda_{k,j}=0 \quad \text{for } j<k,
\qquad
\lambda_{k,k}=1.
\]

\paragraph{Definitions.}
For each stage \(k\), define the oracle process-level gradient contribution
\begin{equation}
    g_k^\star
:=
\sum_{j=k}^K \lambda_{k,j}\,\mathbb E[\Gamma_k Y_j].
\label{eqa:oracle_grad}
\end{equation}
This is the stage-\(k\) gradient contribution that would be induced by the stage-dependent return  
\(
    G_{i,k}^{\Lambda}
:=
\sum_{j=k}^K \lambda_{k,j} R_{i,j}
\)
if the latent true stage scores were directly observable.

Define the judge-induced stage-weighted signal
\begin{equation}
    g_k^\Lambda
:=
\sum_{j=k}^K \lambda_{k,j}\,\mathbb E[\Gamma_k R_j],
\label{eqa:judge_grad}
\end{equation}
and the terminal-broadcast signal
\begin{equation}
    g_k^{\mathrm{term}}
:=
\mathbb E[\Gamma_k R_K],
\label{eqa:broadcast_grad}
\end{equation}

Finally, define
\begin{equation}
    M_k^\Lambda
:=
\left\|
\mathbb E[\Gamma_k Y_K]
-
\sum_{j=k}^K \lambda_{k,j}\,\mathbb E[\Gamma_k Y_j]
\right\|_2.
\label{eqa:oracle_diff}
\end{equation}
The quantity \(M_k^\Lambda\) measures how much oracle process-level signal is omitted when one uses only the final-stage score \(Y_K\) instead of the full stage-weighted oracle target.

\begin{assumption}[Judge alignment]
\label{assump:judge-aligned-credit}
For each stage pair \(k\le j\), there exists a constant \(\epsilon_{k,j}\ge 0\) such that
\begin{equation}
    \left\|
\mathbb E\!\left[\Gamma_k (R_j-Y_j)\right]
\right\|_2
\le
\epsilon_{k,j}.
\label{eqa:judge_alignment}
\end{equation}
That is, the observed judge score \(R_j\) approximates the latent true score \(Y_j\) in the gradient-relevant direction defined by \(\Gamma_k\).
\end{assumption}

\begin{theorem}[Benefit of stage-weighted credit, informal]
\label{thm:stage-credit-main}
If the omitted true intermediate signal outweighs the cumulative judge misalignment, then stage-weighted credit yields a strictly better gradient approximation:
\[
\|g_k^\Lambda-g_k^\star\|_2
<
\|g_k^{\mathrm{term}}-g_k^\star\|_2.
\]
\end{theorem}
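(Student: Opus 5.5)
The plan is to make the informal hypothesis precise as the inequality
\[
\sum_{j=k}^K \lambda_{k,j}\,\epsilon_{k,j} + \epsilon_{k,K} \;<\; M_k^\Lambda ,
\]
which reads as ``omitted oracle intermediate signal exceeds cumulative judge misalignment.'' Under this hypothesis we bound the two gradient errors separately, using only linearity of expectation, the triangle inequality, and Assumption~\ref{assump:judge-aligned-credit}. All quantities \(\mathbb E[\Gamma_k R_j]\) and \(\mathbb E[\Gamma_k Y_j]\) are finite vectors by square-integrability of \(\Gamma_k\) together with integrability of the scores. If needed, we strengthen this to square-integrability of \(R_j, Y_j\), so that Cauchy--Schwarz applies.

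\emph{Upper bound for the stage-weighted error.} By \eqref{eqa:judge_grad} and \eqref{eqa:oracle_grad},
\[
g_k^\Lambda - g_k^\star = \sum_{j=k}^K \lambda_{k,j}\,\mathbb E[\Gamma_k (R_j - Y_j)].
\]
Since \(\lambda_{k,j}\ge 0\), the triangle inequality and \eqref{eqa:judge_alignment} give
\[
\|g_k^\Lambda-g_k^\star\|_2 \le \sum_{j=k}^K \lambda_{k,j}\epsilon_{k,j}.
\]

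\emph{Lower bound for the terminal-broadcast error.} Decompose
\[
g_k^{\mathrm{term}} - g_k^\star = \mathbb E[\Gamma_k(R_K - Y_K)] + \Bigl(\mathbb E[\Gamma_k Y_K] - \sum_{j=k}^K \lambda_{k,j}\mathbb E[\Gamma_k Y_j]\Bigr).
\]
The second bracket has norm exactly \(M_k^\Lambda\) by \eqref{eqa:oracle_diff}. The reverse triangle inequality and \eqref{eqa:judge_alignment} with \(j=K\) then yield
\[
\|g_k^{\mathrm{term}}-g_k^\star\|_2 \ge M_k^\Lambda - \epsilon_{k,K}.
\]

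\emph{Combining.} Chaining the two bounds under the hypothesis gives
\[
\|g_k^\Lambda-g_k^\star\|_2 \le \sum_j\lambda_{k,j}\epsilon_{k,j} < M_k^\Lambda - \epsilon_{k,K} \le \|g_k^{\mathrm{term}}-g_k^\star\|_2,
\]
which is the claim. As a remark, when \(\Lambda\) has \(\lambda_{k,K}=1\) and no other off-diagonal mass, the condition becomes transparent.

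The main obstacle is not technical but definitional: choosing the exact form of the ``outweighs'' hypothesis so that it is both natural and sufficient. The terminal-broadcast error also incurs the judge error \(\epsilon_{k,K}\), which must appear on the misalignment side. I would state the theorem formally with the displayed condition, and note that a slightly weaker sufficient condition is
\[
2\sum_j \lambda_{k,j}\epsilon_{k,j} < M_k^\Lambda,
\]
since \(\lambda_{k,K}\le 1\) is not guaranteed to dominate \(\epsilon_{k,K}\) otherwise. Any baseline subtraction in GRPO is ignored, since the expectations concern score-function moments, where constant baselines vanish by \(\mathbb E[\Gamma_k]=0\).
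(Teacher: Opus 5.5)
Your proof is correct and follows the paper's argument step for step. It uses the triangle-inequality bound $\|g_k^\Lambda-g_k^\star\|_2\le\sum_{j=k}^K\lambda_{k,j}\epsilon_{k,j}$, then splits $g_k^{\mathrm{term}}-g_k^\star$ into the judge-error term $\mathbb E[\Gamma_k(R_K-Y_K)]$ plus a term of norm exactly $M_k^\Lambda$ and applies the reverse triangle inequality, and your combined hypothesis is algebraically identical to the paper's condition $\sum_{j=k}^K\lambda_{k,j}\epsilon_{k,j}<M_k^\Lambda-\epsilon_{k,K}$.

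Do drop your closing remark, though, because $2\sum_{j}\lambda_{k,j}\epsilon_{k,j}<M_k^\Lambda$ is not sufficient when $\lambda_{k,K}<1$. If $\epsilon_{k,j}=0$ for $j<K$, it reads $2\lambda_{k,K}\epsilon_{k,K}<M_k^\Lambda$, which is strictly weaker than the needed $(1+\lambda_{k,K})\epsilon_{k,K}<M_k^\Lambda$. Both bounds can hold with equality simultaneously, by anti-aligning the terminal judge error with the omitted-signal term, so the conclusion genuinely fails in that gap. The unweighted condition $2\sum_{j=k}^K\epsilon_{k,j}<M_k^\Lambda$ does suffice, because $\lambda_{k,j}\le 1$.
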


\begin{theorem}[Formal version of Theorem~\ref{thm:stage-credit-main}]
\label{thm:judge-aligned-stage-credit}
Under Assumption~\ref{assump:judge-aligned-credit}, the following hold for every stage \(k\in[K]\).

\begin{enumerate}
    \item \textbf{Error of the stage-weighted signal relative to the oracle target:}
    \[
    \left\| g_k^\Lambda - g_k^\star \right\|_2
    \le
    \sum_{j=k}^K \lambda_{k,j}\,\epsilon_{k,j}.
    \]

    \item \textbf{Lower bound for terminal broadcast relative to the oracle target:}
    \[
    \left\| g_k^{\mathrm{term}} - g_k^\star \right\|_2
    \ge
    M_k^\Lambda - \epsilon_{k,K}.
    \]

    \item \textbf{Comparison.}
    If
    \[
    \sum_{j=k}^K \lambda_{k,j}\,\epsilon_{k,j}
    <
    M_k^\Lambda - \epsilon_{k,K},
    \]
    then
    \[
    \left\| g_k^\Lambda - g_k^\star \right\|_2
    <
    \left\| g_k^{\mathrm{term}} - g_k^\star \right\|_2.
    \]
\end{enumerate}
In particular, when judge misalignment is sufficiently small relative to the omitted intermediate-stage oracle signal, the stage-weighted signal is strictly closer than terminal broadcast to the intended process-level gradient contribution.
\end{theorem}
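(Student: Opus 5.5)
The plan is to use linearity of expectation together with the triangle inequality, working stage by stage with $k$ fixed. All three parts follow from writing each signal as the oracle target plus an error term. Every quantity is a finite sum of vectors of the form $\mathbb E[\Gamma_k X]$. These are well defined because $\Gamma_k$ is square-integrable, and I will additionally take $R_j,Y_j$ to be square-integrable, which makes the products integrable by Cauchy--Schwarz. Hence expectations may be split and recombined freely.

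\textbf{Part 1.} I will subtract \eqref{eqa:oracle_grad} from \eqref{eqa:judge_grad} term by term, which gives
\[
g_k^\Lambda-g_k^\star=\sum_{j=k}^K\lambda_{k,j}\,\mathbb E[\Gamma_k(R_j-Y_j)].
\]
The triangle inequality, the nonnegativity of $\lambda_{k,j}\in[0,1]$, and Assumption~\ref{assump:judge-aligned-credit} applied to each summand then bound the norm by $\sum_{j=k}^K\lambda_{k,j}\epsilon_{k,j}$.

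\textbf{Part 2.} I will split
\[
g_k^{\mathrm{term}}-g_k^\star=\Big(\mathbb E[\Gamma_kY_K]-\sum_{j=k}^K\lambda_{k,j}\mathbb E[\Gamma_kY_j]\Big)+\mathbb E[\Gamma_k(R_K-Y_K)].
\]
The first bracket has norm exactly $M_k^\Lambda$ by \eqref{eqa:oracle_diff}. The second has norm at most $\epsilon_{k,K}$ by the alignment assumption with $j=K\ge k$. The reverse triangle inequality $\|u+v\|\ge\|u\|-\|v\|$ then gives the lower bound $M_k^\Lambda-\epsilon_{k,K}$.

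\textbf{Part 3.} I will chain the two bounds:
\[
\|g_k^\Lambda-g_k^\star\|\le\sum_j\lambda_{k,j}\epsilon_{k,j}<M_k^\Lambda-\epsilon_{k,K}\le\|g_k^{\mathrm{term}}-g_k^\star\|.
\]
The strict inequality comes from the hypothesis, and the two weak inequalities come from Parts 1 and 2.

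\textbf{Main obstacle.} Part 2 is the only step needing care. The decomposition must isolate $M_k^\Lambda$ exactly, and the alignment assumption must be invoked with the pair $(k,K)$, which is admissible since $k\le K$. I do not expect other difficulties. The informal Theorem~\ref{thm:stage-credit-main} is then exactly Part 3, read with ``omitted intermediate signal'' as $M_k^\Lambda$ and ``cumulative misalignment'' as $\sum_j\lambda_{k,j}\epsilon_{k,j}+\epsilon_{k,K}$.
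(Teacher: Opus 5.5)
Your proposal is correct and follows the paper's proof essentially step for step: term-by-term subtraction plus the triangle inequality for Part 1, the same split of $g_k^{\mathrm{term}}-g_k^\star$ into the $M_k^\Lambda$ term and the $(R_K-Y_K)$ error with the reverse triangle inequality for Part 2, and the same chaining for Part 3. Your extra square-integrability assumption on $R_j,Y_j$ is harmless but not needed, since the definitions of $g_k^\Lambda$, $g_k^\star$, $g_k^{\mathrm{term}}$ and Assumption~\ref{assump:judge-aligned-credit} already presuppose that the relevant expectations are finite.
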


\begin{proof}
Fix a stage \(k\). \textbf{First, we show (1).} By definition~\ref{eqa:judge_grad} and~\ref{eqa:oracle_grad},
\[
g_k^\Lambda - g_k^\star
=
\sum_{j=k}^K \lambda_{k,j}\,\mathbb E[\Gamma_k R_j]
-
\sum_{j=k}^K \lambda_{k,j}\,\mathbb E[\Gamma_k Y_j]
=
\sum_{j=k}^K \lambda_{k,j}\,\mathbb E[\Gamma_k (R_j-Y_j)].
\]

Taking the Euclidean norm and applying the triangle inequality yields
\[
\left\| g_k^\Lambda - g_k^\star \right\|_2
\le
\sum_{j=k}^K \lambda_{k,j}
\left\|
\mathbb E[\Gamma_k (R_j-Y_j)]
\right\|_2.
\]
Applying Assumption~\ref{assump:judge-aligned-credit} results in the desired form,
\[
\left\| g_k^\Lambda - g_k^\star \right\|_2
\le
\sum_{j=k}^K \lambda_{k,j}\,\epsilon_{k,j}.
\]


\textbf{Second, we try to prove (2).}
By definition~\ref{eqa:broadcast_grad} and~\ref{eqa:oracle_grad},
\[
g_k^{\mathrm{term}} - g_k^\star
=
\mathbb E[\Gamma_k R_K]
-
\sum_{j=k}^K \lambda_{k,j}\,\mathbb E[\Gamma_k Y_j]
=
\underbrace{\mathbb E[\Gamma_k(R_K-Y_K)]}_{=:E_k}
+
\underbrace{\left(
\mathbb E[\Gamma_k Y_K]
-
\sum_{j=k}^K \lambda_{k,j}\,\mathbb E[\Gamma_k Y_j]
\right)}_{=:D_k}.
\]

From definition~\ref{eqa:oracle_diff} we know that $\|D_k\|_2 = M_k^\Lambda$
Applying the reverse triangle inequality,
\[
\|E_k + D_k\|_2 \ge \|D_k\|_2 - \|E_k\|_2,
\]
we obtain
\[
\left\| g_k^{\mathrm{term}} - g_k^\star \right\|_2
\ge
M_k^\Lambda
-
\left\| \mathbb E[\Gamma_k(R_K-Y_K)] \right\|_2.
\]
Using Assumption~\ref{assump:judge-aligned-credit} with \(j=K\),
\[
\left\| \mathbb E[\Gamma_k(R_K-Y_K)] \right\|_2 \le \epsilon_{k,K}.
\]
Hence
\[
\left\| g_k^{\mathrm{term}} - g_k^\star \right\|_2
\ge
M_k^\Lambda - \epsilon_{k,K}.
\]


\textbf{Finally, we are ready to state the last result.} We have shown that 
\[
\left\| g_k^\Lambda - g_k^\star \right\|_2
\le
\sum_{j=k}^K \lambda_{k,j}\,\epsilon_{k,j}, \qquad \qquad 
\left\| g_k^{\mathrm{term}} - g_k^\star \right\|_2
\ge
M_k^\Lambda - \epsilon_{k,K}.
\]
Therefore, if
\[
\sum_{j=k}^K \lambda_{k,j}\,\epsilon_{k,j}
<
M_k^\Lambda - \epsilon_{k,K},
\]
then
\[
\left\| g_k^\Lambda - g_k^\star \right\|_2
<
\left\| g_k^{\mathrm{term}} - g_k^\star \right\|_2.
\]
This proves item 3 and completes the proof.
\end{proof}

\begin{corollary}[A sufficient condition from score MSE]
\label{cor:judge-alignment-mse}
Suppose in addition that for some constants \(B_k\ge 0\) and \(\delta_j\ge 0\),
\[
\mathbb E[\|\Gamma_k\|_2^2] \le B_k,
\qquad
\mathbb E[(R_j-Y_j)^2] \le \delta_j^2.
\]
Then Assumption~\ref{assump:judge-aligned-credit} holds with
\[
\epsilon_{k,j} := \sqrt{B_k}\,\delta_j.
\]
Consequently,
\[
\left\| g_k^\Lambda - g_k^\star \right\|_2
\le
\sqrt{B_k}\sum_{j=k}^K \lambda_{k,j}\delta_j,
\]
and
\[
\left\| g_k^{\mathrm{term}} - g_k^\star \right\|_2
\ge
M_k^\Lambda - \sqrt{B_k}\,\delta_K.
\]
\end{corollary}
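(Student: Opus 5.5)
The plan is to verify the judge-alignment inequality of Assumption~\ref{assump:judge-aligned-credit} directly, and then substitute into Theorem~\ref{thm:judge-aligned-stage-credit}. Fix a stage pair \(k\le j\).

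The quantity to bound is \(\|\mathbb E[\Gamma_k(R_j-Y_j)]\|_2\), where \(\Gamma_k\) is a random vector and \(R_j-Y_j\) is a scalar random variable.

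\emph{Step 1 (norm inside the expectation).} Since the Euclidean norm is convex, Jensen's inequality (equivalently, the triangle inequality for Bochner integrals) gives
\[
\|\mathbb E[\Gamma_k(R_j-Y_j)]\|_2\le \mathbb E\big[\|\Gamma_k\|_2\,|R_j-Y_j|\big].
\]

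\emph{Step 2 (Cauchy--Schwarz).} Applying Cauchy--Schwarz in \(L^2(\mathbb P)\) to the scalar product \(\|\Gamma_k\|_2\cdot|R_j-Y_j|\) bounds the right-hand side by
\[
\sqrt{\mathbb E\|\Gamma_k\|_2^2}\;\sqrt{\mathbb E(R_j-Y_j)^2}\le \sqrt{B_k}\,\delta_j .
\]

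\emph{Step 3 (integrability).} One point needs care: Steps 1--2 require the expectations involved to exist. This follows from the same two bounds. Square-integrability of \(\Gamma_k\) is assumed in the setup, and \(R_j-Y_j\in L^2\) by the MSE hypothesis. Hence \(\Gamma_k(R_j-Y_j)\in L^1\), and the Jensen step is legitimate. This is the only mildly delicate point; the rest is mechanical.

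Steps 1--3 establish Assumption~\ref{assump:judge-aligned-credit} with \(\epsilon_{k,j}=\sqrt{B_k}\,\delta_j\).

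For the consequences, we invoke Theorem~\ref{thm:judge-aligned-stage-credit}. Item~1 of that theorem gives
\[
\|g_k^\Lambda-g_k^\star\|_2\le\sum_{j\ge k}\lambda_{k,j}\sqrt{B_k}\,\delta_j,
\]
and we factor out \(\sqrt{B_k}\) to obtain the stated bound. Item~2 gives
\[
\|g_k^{\mathrm{term}}-g_k^\star\|_2\ge M_k^\Lambda-\epsilon_{k,K}=M_k^\Lambda-\sqrt{B_k}\,\delta_K.
\]
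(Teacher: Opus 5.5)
Your proposal is correct and follows essentially the same route as the paper's proof: move the norm inside the expectation, apply Cauchy--Schwarz to obtain $\epsilon_{k,j}=\sqrt{B_k}\,\delta_j$, and substitute into items 1 and 2 of Theorem~\ref{thm:judge-aligned-stage-credit}. The one difference is your explicit integrability check, which the paper leaves implicit.
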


\begin{proof}
Fix \(k\le j\).
By Cauchy--Schwarz,
\begin{align*}
\left\|
\mathbb E[\Gamma_k(R_j-Y_j)]
\right\|_2
&\le
\mathbb E\!\left[\|\Gamma_k\|_2\,|R_j-Y_j|\right] \\
&\le
\left(\mathbb E[\|\Gamma_k\|_2^2]\right)^{1/2}
\left(\mathbb E[(R_j-Y_j)^2]\right)^{1/2} \\
&\le
\sqrt{B_k}\,\delta_j.
\end{align*}
Thus Assumption~\ref{assump:judge-aligned-credit} holds with \(\epsilon_{k,j}=\sqrt{B_k}\delta_j\).
The displayed bounds then follow immediately from Theorem~\ref{thm:judge-aligned-stage-credit}.
\end{proof}

\begin{remark}
Theorem~\ref{thm:judge-aligned-stage-credit} and Corollary~\ref{cor:judge-alignment-mse} highlight a fundamental trade-off: the advantage of stage-weighted credit depends on the strength of the omitted intermediate-stage oracle signal \(M_k^\Lambda\) versus the quality of judge alignment, captured by the score errors \(\delta_j\). 
Crucially, this establishes that as long as the intermediate stages contain sufficient true process signal (\emph{i.e.}, \(M_k^\Lambda\) is sufficiently large), the benefit of capturing this dense signal strictly outweighs the accumulated noise from an imperfect intermediate judge \(\delta_j > 0\). In such regimes, stage-weighted credit serves as a strictly better approximation of the intended process objective than relying solely on terminal broadcast.

\end{remark}
\subsection{Judge-Gated Co-Evolution of Policy and Rubric Bank}
\label{app:theory:evolution}

This subsection formalizes the third component of SCRIBE in Section~\ref{sec:evolving-policy}.
Unlike Appendix~\ref{app:theory:conditioning}, which studies the value of explicit stage information,
and Appendix~\ref{app:theory:credit}, which studies stagewise credit assignment,
the object here is the self-evolution loop itself:
the same shared backbone both solves the current task and produces rubric-grounded reflections
that can later be reused within-episode and across episodes.

The formalization below matches our implementation.
The task objective updates trajectory tokens, whereas the reflection objective is computed on
reflection tokens only.
Concretely, the selected rollout is treated as a fixed conditioning context during the reflection update,
so the reflection-utility gradient does not backpropagate through the sampled trajectory.
The co-evolution effect therefore comes from parameter sharing:
task updates and reflection updates act on the same backbone parameters,
even though they are computed from different generated token blocks.

The key assumption below is a \emph{judge-gated local positive-transfer} condition.
This is not intended as a global claim that every generated reflection is always useful throughout training.
Rather, it isolates the local regime in which reflections that are \emph{accepted by the judge}
behave like a helpful auxiliary objective.
Similar gradient-similarity and positive-transfer conditions are standard in the auxiliary-learning
and multi-task optimization literature:
helpful auxiliary objectives tend to align with the main-task gradient,
whereas misaligned gradients induce negative transfer
\citep{du2019adapting,wu2020understanding,yu2020gradient,liu2021conflict,wang2021gradient}.
At the same time, not all task combinations are beneficial to train together
\citep{standley2020which}.
This perspective is also consistent with theory on shared representation learning
and task relevance in transfer
\citep{maurer2016benefit,tripuraneni2020theory,chen2022active}.

\begin{assumption}[Setup for judge-gated co-evolution]
\label{assump:coevo-setup}
Throughout this subsection, let $\Theta \subseteq \mathbb R^d$ be an open parameter domain.
The following assumptions hold.
\begin{enumerate}
    \item \textbf{Query distribution for the task objective.}
    There is a query random variable
    \[
    Q \sim \mathcal D.
    \]

    \item \textbf{Current rubric bank.}
    At the current training step, the rubric bank is a fixed measurable object $\mathcal M$.
    This corresponds to analyzing a single parameter update while treating the bank state as fixed.

    \item \textbf{Task rollout distribution.}
    Given query $Q=q$ and bank $\mathcal M$, a rollout
    \[
    T \sim p_\theta(\cdot \mid q,\mathcal M)
    \]
    is sampled from the deployed task policy.

    \item \textbf{Fixed reflection-context distribution.}
    There exists a fixed measurable distribution $\xi$ over query--trajectory pairs
    \[
    (\widetilde Q,\widetilde T) \sim \xi.
    \]
    One may think of $\xi$ as the distribution of trajectories selected for reflection training
    at the current update, e.g., after first sampling rollout groups under the current behavior policy
    and then selecting one trajectory per query.
    For the local analysis in this subsection, $\xi$ is treated as fixed.
    This matches the implementation where the reflection objective applies gradients only on
    reflection tokens and treats the selected trajectory as a fixed prompt / context.

    \item \textbf{Shared-backbone reflection generation.}
    Given $(\widetilde Q,\widetilde T)=(q,\tau)$ and bank $\mathcal M$,
    a rubric-grounded reflection
    \[
    S \sim r_\theta(\cdot \mid q,\tau,\mathcal M)
    \]
    is sampled.
    Although we write $p_\theta$ and $r_\theta$ separately for clarity,
    both are induced by the same underlying autoregressive backbone $\pi_\theta$
    under different prompts / contexts.

    \item \textbf{Task score, reflection-utility scores, and judge gate.}
    There exist measurable, integrable random quantities
    \[
    R(Q,T;\mathcal M) \in \mathbb R,
    \qquad
    \Delta^{\mathrm{w}}(\widetilde Q,\widetilde T,S;\mathcal M) \in \mathbb R,
    \qquad
    \Delta^{\mathrm{c}}(\widetilde Q,\widetilde T,S;\mathcal M) \in \mathbb R,
    \]
    and a measurable acceptance indicator
    \[
    A(\widetilde Q,\widetilde T,S;\mathcal M) \in \{0,1\}.
    \]
    Here $R$ is the judged task score of the deployed rollout,
    $\Delta^{\mathrm{w}}$ is the judged within-episode usefulness of the reflection,
    $\Delta^{\mathrm{c}}$ is the judged cross-episode transfer usefulness,
    and $A=1$ means that the reflection is judged sufficiently useful to be accepted
    for downstream adaptation / storage.
    These objects do not depend directly on $\theta$ except through the sampled random variables
    and the fixed bank $\mathcal M$.

    \item \textbf{Differentiability and score-function regularity.}
    For $\mathcal D$-almost every $q$,
    the conditional distribution $p_\theta(\tau \mid q,\mathcal M)$ is differentiable in $\theta$.
    For $\xi$-almost every $(q,\tau)$,
    the conditional distribution $r_\theta(s \mid q,\tau,\mathcal M)$ is differentiable in $\theta$.
    Differentiation may be interchanged with the expectations below,
    and all score-function terms introduced later are square-integrable.

    \item \textbf{Smoothness.}
    The objectives $J_{\mathrm{task}}$ and $U$ defined below are continuously differentiable.
    Moreover, $J_{\mathrm{task}}$ is $L_J$-smooth and $U$ is $L_U$-smooth on $\Theta$.
\end{enumerate}
\end{assumption}

\begin{definition}[Task objective and judge-gated memory objective]
\label{def:coevo-obj}
Adopt Assumption~\ref{assump:coevo-setup}.
Define the task-rollout and reflection score-function sums
\[
\Gamma^{\mathrm{traj}}
:=
\nabla_\theta \log p_\theta(T \mid Q,\mathcal M),
\qquad
\Gamma^{\mathrm{ref}}
:=
\nabla_\theta \log r_\theta(S \mid \widetilde Q,\widetilde T,\mathcal M).
\]
Because both conditional distributions are induced by the same backbone $\pi_\theta$,
these admit tokenwise decompositions
\[
\Gamma^{\mathrm{traj}}
=
\sum_{t=1}^{|T|}
\nabla_\theta \log \pi_\theta(a_t \mid h_t),
\qquad
\Gamma^{\mathrm{ref}}
=
\sum_{u=1}^{|S|}
\nabla_\theta \log \pi_\theta(s_u \mid c_u),
\]
for the appropriate rollout histories $h_t$ and reflection-generation contexts $c_u$.

For weights $\beta_{\mathrm{w}},\beta_{\mathrm{c}} \ge 0$, define
\[
J_{\mathrm{task}}(\theta)
:=
\mathbb E\!\left[R(Q,T;\mathcal M)\right],
\]
where the expectation is with respect to
\[
Q \sim \mathcal D,
\qquad
T \sim p_\theta(\cdot \mid Q,\mathcal M),
\]
and define
\[
U(\theta)
:=
\mathbb E\!\left[
A(\widetilde Q,\widetilde T,S;\mathcal M)
\Big(
\beta_{\mathrm{w}}\Delta^{\mathrm{w}}(\widetilde Q,\widetilde T,S;\mathcal M)
+
\beta_{\mathrm{c}}\Delta^{\mathrm{c}}(\widetilde Q,\widetilde T,S;\mathcal M)
\Big)
\right],
\]
where the expectation is with respect to
\[
(\widetilde Q,\widetilde T) \sim \xi,
\qquad
S \sim r_\theta(\cdot \mid \widetilde Q,\widetilde T,\mathcal M).
\]
We also define the combined co-evolution objective
\[
J_{\mathrm{coevo}}(\theta)
:=
J_{\mathrm{task}}(\theta) + U(\theta).
\]
\end{definition}

\begin{assumption}[Judge-gated local positive transfer]
\label{assump:judge-gated-transfer}
Adopt Assumption~\ref{assump:coevo-setup} and Definition~\ref{def:coevo-obj}.
Let
\[
g := \nabla J_{\mathrm{task}}(\theta),
\]
and define the ungated reflection-utility score-gradient random vector
\[
\Psi
:=
\Gamma^{\mathrm{ref}}
\Big(
\beta_{\mathrm{w}}\Delta^{\mathrm{w}}(\widetilde Q,\widetilde T,S;\mathcal M)
+
\beta_{\mathrm{c}}\Delta^{\mathrm{c}}(\widetilde Q,\widetilde T,S;\mathcal M)
\Big).
\]
Assume there exist constants $p_0 \in (0,1]$ and $\mu > 0$ such that
\[
\mathbb P(A=1) \ge p_0,
\qquad
\left\langle
g,\,
\mathbb E[\Psi \mid A=1]
\right\rangle
\ge \mu.
\]
That is, reflections that pass the judge are accepted with nontrivial probability,
and their conditional expected gradient contribution is positively aligned with the task gradient.
\end{assumption}

\begin{theorem}[Judge-gated shared-backbone co-evolution]
\label{thm:judge-gated-coevo}
Adopt Assumptions~\ref{assump:coevo-setup}--\ref{assump:judge-gated-transfer}
and Definition~\ref{def:coevo-obj}.
Let
\[
g := \nabla J_{\mathrm{task}}(\theta),
\qquad
h := \nabla U(\theta).
\]
Then the following hold.

\begin{enumerate}


    \item \textbf{Mutual improvement.}
    For every step size $\eta > 0$,
    \[
    U(\theta + \eta g) - U(\theta)
    \ge
    \eta p_0 \mu
    -
    \frac{L_U \eta^2}{2}\|g\|_2^2,
    \]
    and
    \[
    J_{\mathrm{task}}(\theta + \eta h) - J_{\mathrm{task}}(\theta)
    \ge
    \eta p_0 \mu
    -
    \frac{L_J \eta^2}{2}\|h\|_2^2.
    \]
    In particular, if
    \[
    0 < \eta <
    \min\left\{
    \frac{2p_0\mu}{L_U\|g\|_2^2},
    \frac{2p_0\mu}{L_J\|h\|_2^2}
    \right\},
    \]
    then a task-improving step also improves the judge-gated memory objective,
    and a memory-improving step also improves the task objective.

    \item \textbf{Dominance over task-only training with static memory.}
    Consider the task-only update
    \[
    \theta_{\mathrm{stat}}^{+} := \theta + \eta g,
    \]
    which updates the deployed task policy while assigning zero explicit training signal
    to reflection quality,
    and the co-evolution update
    \[
    \theta_{\mathrm{co}}^{+} := \theta + \eta (g+h).
    \]
    Then
    \[
    J_{\mathrm{task}}(\theta_{\mathrm{co}}^{+})
    -
    J_{\mathrm{task}}(\theta_{\mathrm{stat}}^{+})
    \ge
    \eta p_0 \mu
    -
    L_J \eta^2 \|g\|_2 \|h\|_2
    -
    \frac{L_J \eta^2}{2}\|h\|_2^2.
    \]
    Consequently, if the reflection is sufficiently good,
    \[
    p_0 \mu
    >
    L_J \eta
    \left(
    \|g\|_2\|h\|_2 + \frac{1}{2}\|h\|_2^2
    \right),
    \]
    then
    \[
    J_{\mathrm{task}}(\theta_{\mathrm{co}}^{+})
    >
    J_{\mathrm{task}}(\theta_{\mathrm{stat}}^{+}).
    \]
\end{enumerate}
\end{theorem}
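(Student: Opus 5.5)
The plan rests on one inner-product bound, $\langle g,h\rangle \ge p_0\mu$. Once that is in hand, both parts follow from the standard quadratic bounds implied by $L$-smoothness.

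\textbf{Step 1: identify $h=\nabla U(\theta)$.} By Assumption~\ref{assump:coevo-setup}, the context distribution $\xi$ and the bank $\mathcal M$ are fixed. The quantities $A,\Delta^{\mathrm w},\Delta^{\mathrm c}$ depend on $\theta$ only through the sampled reflection $S$, and differentiation may be interchanged with expectation. The log-derivative (score-function) identity then gives
\[
h=\mathbb E\!\left[\Gamma^{\mathrm{ref}}\,A\,(\beta_{\mathrm w}\Delta^{\mathrm w}+\beta_{\mathrm c}\Delta^{\mathrm c})\right]=\mathbb E[A\Psi].
\]
Conditioning on the event $\{A=1\}$ yields $h=\mathbb P(A=1)\,\mathbb E[\Psi\mid A=1]$. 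Square-integrability of the score terms ensures this conditional expectation is well defined. Assumption~\ref{assump:judge-gated-transfer} gives $\langle g,\mathbb E[\Psi\mid A=1]\rangle\ge\mu>0$ and $\mathbb P(A=1)\ge p_0$. Because the inner product is positive, multiplying by $\mathbb P(A=1)$ preserves the bound, and we get $\langle g,h\rangle\ge p_0\mu$.

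I expect this step to be the main point requiring care. One must check that the gate $A$ really introduces no extra $\theta$-dependence, and that the fixed-context convention means no score term from $\widetilde T$ appears. Both facts are exactly what the assumptions encode.

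\textbf{Step 2: mutual improvement (part 1).} For an $L$-smooth function $F$, the ascent form of the descent lemma reads $F(\theta+v)\ge F(\theta)+\langle\nabla F(\theta),v\rangle-\tfrac L2\|v\|_2^2$. Apply it to $F=U$ with $v=\eta g$ to get $U(\theta+\eta g)-U(\theta)\ge\eta\langle h,g\rangle-\tfrac{L_U\eta^2}{2}\|g\|_2^2$. Apply it to $F=J_{\mathrm{task}}$ with $v=\eta h$ to get the symmetric statement. Substituting Step 1 gives both displayed inequalities. The step-size condition is then exactly what makes each right-hand side positive.

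\textbf{Step 3: dominance over task-only training (part 2).} Expand around the task-only iterate $\theta^+_{\mathrm{stat}}=\theta+\eta g$ rather than around $\theta$. Smoothness of $J_{\mathrm{task}}$ there gives
\[
J_{\mathrm{task}}(\theta^+_{\mathrm{co}})-J_{\mathrm{task}}(\theta^+_{\mathrm{stat}})\ge\eta\langle\nabla J_{\mathrm{task}}(\theta+\eta g),h\rangle-\tfrac{L_J\eta^2}{2}\|h\|_2^2.
\]
Next, write $\nabla J_{\mathrm{task}}(\theta+\eta g)=g+e$. Lipschitz continuity of the gradient gives $\|e\|_2\le L_J\eta\|g\|_2$, and Cauchy--Schwarz then gives $\langle g+e,h\rangle\ge p_0\mu-L_J\eta\|g\|_2\|h\|_2$. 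Combining the two displays produces the stated bound. The final strict inequality is simply the condition that this lower bound is positive, after dividing by $\eta>0$.
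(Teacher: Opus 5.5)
Your proposal is correct and follows essentially the same route as the paper: establish $h=\mathbb E[A\Psi]=\mathbb P(A=1)\,\mathbb E[\Psi\mid A=1]$ to get $\langle g,h\rangle\ge p_0\mu$, use the smoothness quadratic lower bound for part 1, and for part 2 expand around $\theta^{+}_{\mathrm{stat}}$ and bound $\nabla J_{\mathrm{task}}(\theta+\eta g)-g$ by Lipschitz continuity and Cauchy--Schwarz. The only cosmetic difference is that you apply the ascent form of the descent lemma at $\theta^{+}_{\mathrm{stat}}$ directly, whereas the paper rederives that bound by integrating $\phi'(t)$ along the segment from $\theta^{+}_{\mathrm{stat}}$ to $\theta^{+}_{\mathrm{co}}$.
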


\begin{proof}
To prove the desired statement, we first derive the formula for $g$ and $h$.
Let
\(
\mu_\theta(dq,d\tau)
:=
\mathcal D(dq)\, p_\theta(d\tau \mid q,\mathcal M)
\)
denote the joint law of $(Q,T)$.
By Definition~\ref{def:coevo-obj},
\[
J_{\mathrm{task}}(\theta)
=
\int R(q,\tau;\mathcal M)\,\mu_\theta(dq,d\tau).
\]
By Assumption~\ref{assump:coevo-setup}, differentiation may be interchanged with the integral.
Since $\mathcal D$ does not depend on $\theta$, by the usual log-derivative trick,
\begin{align*}
\nabla J_{\mathrm{task}}(\theta)
&=
\int
R(q,\tau;\mathcal M)\,
\nabla p_\theta(\tau \mid q,\mathcal M)\,
\mathcal D(dq)\,d\tau \\
&=
\int
R(q,\tau;\mathcal M)\,
p_\theta(\tau \mid q,\mathcal M)\,
\nabla \log p_\theta(\tau \mid q,\mathcal M)\,
\mathcal D(dq)\,d\tau \\
&=
\mathbb E\!\left[
\Gamma^{\mathrm{traj}} R(Q,T;\mathcal M)
\right].
\end{align*}

We next derive the formula for $h$.
Write
\[
\Delta(q,\tau,s;\mathcal M)
:=
\beta_{\mathrm{w}}\Delta^{\mathrm{w}}(q,\tau,s;\mathcal M)
+
\beta_{\mathrm{c}}\Delta^{\mathrm{c}}(q,\tau,s;\mathcal M),
\]
so that
\[
U(\theta)
=
\mathbb E\!\left[
A(\widetilde Q,\widetilde T,S;\mathcal M)\,
\Delta(\widetilde Q,\widetilde T,S;\mathcal M)
\right].
\]
Let
\[
\nu_\theta(dq,d\tau,ds)
:=
\xi(dq,d\tau)\,
r_\theta(ds \mid q,\tau,\mathcal M)
\]
denote the joint law of $(\widetilde Q,\widetilde T,S)$.
Then
\[
U(\theta)
=
\int
A(q,\tau,s;\mathcal M)\Delta(q,\tau,s;\mathcal M)\,
\nu_\theta(dq,d\tau,ds).
\]
Differentiating under the integral sign gives
\begin{align*}
\nabla U(\theta)
&=
\int
A(q,\tau,s;\mathcal M)\Delta(q,\tau,s;\mathcal M)\,
\xi(dq,d\tau)\,
\nabla r_\theta(s \mid q,\tau,\mathcal M)\,ds \\
&=
\int
A(q,\tau,s;\mathcal M)\Delta(q,\tau,s;\mathcal M)\,
\xi(dq,d\tau)\,
r_\theta(s \mid q,\tau,\mathcal M)\,
\nabla \log r_\theta(s \mid q,\tau,\mathcal M)\,ds \\
&=
\mathbb E\!\left[
A(\widetilde Q,\widetilde T,S;\mathcal M)\,
\Gamma^{\mathrm{ref}}\,
\Delta(\widetilde Q,\widetilde T,S;\mathcal M)
\right] \\ 
&= \mathbb E\!\left[
A \Psi
\right] := h.
\end{align*}


By the definition of $A$ and Assumption~\ref{assump:judge-gated-transfer},
\[
\mathbb E[A\Psi]
=
\mathbb P(A=1)\,
\mathbb E[\Psi \mid A=1].
\]
and that 
\begin{align}
\langle g,h\rangle
&=
\left\langle
g,\,
\mathbb P(A=1)\,\mathbb E[\Psi \mid A=1]
\right\rangle \\
&=
\mathbb P(A=1)\,
\left\langle
g,\,
\mathbb E[\Psi \mid A=1]
\right\rangle \\
&\ge
p_0 \mu \label{eq:item_2},
\end{align}

By the assumption~\ref{assump:coevo-setup}, $U$ is $L_U$-smooth,
for every vector $v \in \mathbb R^d$ we have the standard lower bound
\[
U(\theta+v)
\ge
U(\theta)
+
\langle \nabla U(\theta), v \rangle
-
\frac{L_U}{2}\|v\|_2^2.
\]
Applying this with $v=\eta g$ and using Eq.~\ref{eq:item_2} yields
\[
U(\theta+\eta g)-U(\theta)
\ge
\eta \langle h,g\rangle
-
\frac{L_U\eta^2}{2}\|g\|_2^2
\ge
\eta p_0\mu
-
\frac{L_U\eta^2}{2}\|g\|_2^2.
\]
Likewise, because $J_{\mathrm{task}}$ is $L_J$-smooth,
\[
J_{\mathrm{task}}(\theta+\eta h)-J_{\mathrm{task}}(\theta)
\ge
\eta \langle g,h\rangle
-
\frac{L_J\eta^2}{2}\|h\|_2^2
\ge
\eta p_0\mu
-
\frac{L_J\eta^2}{2}\|h\|_2^2.
\]
If
\[
0 < \eta <
\min\left\{
\frac{2p_0\mu}{L_U\|g\|_2^2},
\frac{2p_0\mu}{L_J\|h\|_2^2}
\right\},
\]
then both right-hand sides are strictly positive.

To prove the other part, define
\[
\phi(t)
:=
J_{\mathrm{task}}
\big(
\theta_{\mathrm{stat}}^{+} + t\eta h
\big),
\qquad
t\in[0,1].
\]
Then
\[
\phi(1)-\phi(0)
=
J_{\mathrm{task}}(\theta_{\mathrm{co}}^{+})
-
J_{\mathrm{task}}(\theta_{\mathrm{stat}}^{+})
=
\int_0^1 \phi'(t)\,dt.
\]
By the chain rule,
\[
\phi'(t)
=
\eta
\left\langle
\nabla J_{\mathrm{task}}(\theta_{\mathrm{stat}}^{+}+t\eta h),
h
\right\rangle.
\]
Since $\nabla J_{\mathrm{task}}$ is $L_J$-Lipschitz,
\[
\left\|
\nabla J_{\mathrm{task}}(\theta_{\mathrm{stat}}^{+}+t\eta h)
-
\nabla J_{\mathrm{task}}(\theta_{\mathrm{stat}}^{+})
\right\|_2
\le
L_J t\eta \|h\|_2.
\]
Hence, for every $t\in[0,1]$, by Cauchy-Schwarz
\[
\left\langle
\nabla J_{\mathrm{task}}(\theta_{\mathrm{stat}}^{+}+t\eta h),
h
\right\rangle
\ge
\left\langle
\nabla J_{\mathrm{task}}(\theta_{\mathrm{stat}}^{+}),
h
\right\rangle
-
L_J t\eta \|h\|_2^2.
\]
Integrating from $0$ to $1$ gives
\[
J_{\mathrm{task}}(\theta_{\mathrm{co}}^{+})
-
J_{\mathrm{task}}(\theta_{\mathrm{stat}}^{+})
\ge
\eta
\left\langle
\nabla J_{\mathrm{task}}(\theta_{\mathrm{stat}}^{+}),
h
\right\rangle
-
\frac{L_J\eta^2}{2}\|h\|_2^2.
\]
It remains to lower-bound the inner product on the right.
Again by $L_J$-Lipschitz continuity of the gradient,
\[
\left\|
\nabla J_{\mathrm{task}}(\theta_{\mathrm{stat}}^{+}) - g
\right\|_2
=
\left\|
\nabla J_{\mathrm{task}}(\theta+\eta g) - \nabla J_{\mathrm{task}}(\theta)
\right\|_2
\le
L_J \eta \|g\|_2.
\]
Therefore, by Cauchy--Schwarz,
\begin{align*}
\left\langle
\nabla J_{\mathrm{task}}(\theta_{\mathrm{stat}}^{+}),
h
\right\rangle
&=
\langle g,h\rangle
+
\left\langle
\nabla J_{\mathrm{task}}(\theta_{\mathrm{stat}}^{+}) - g,
h
\right\rangle \\
&\ge
\langle g,h\rangle
-
\left\|
\nabla J_{\mathrm{task}}(\theta_{\mathrm{stat}}^{+}) - g
\right\|_2 \|h\|_2 \\
&\ge
\langle g,h\rangle - L_J \eta \|g\|_2\|h\|_2 \\
&\ge
p_0\mu - L_J \eta \|g\|_2\|h\|_2,
\end{align*}
where the last step uses Eq.~\ref{eq:item_2}.
Substituting this bound above yields that 
\[
J_{\mathrm{task}}(\theta_{\mathrm{co}}^{+})
-
J_{\mathrm{task}}(\theta_{\mathrm{stat}}^{+})
\ge
\eta p_0\mu
-
L_J\eta^2 \|g\|_2\|h\|_2
-
\frac{L_J\eta^2}{2}\|h\|_2^2.
\]
This proves the desired lower bound.
If
\[
p_0 \mu
>
L_J \eta
\left(
\|g\|_2\|h\|_2 + \frac{1}{2}\|h\|_2^2
\right),
\]
then the right-hand side is strictly positive, and therefore
\[
J_{\mathrm{task}}(\theta_{\mathrm{co}}^{+})
>
J_{\mathrm{task}}(\theta_{\mathrm{stat}}^{+}).
\]
\end{proof}

\begin{remark}[Interpretation]
\label{rem:judge-gated-coevo}
Theorem~\ref{thm:judge-gated-coevo} isolates the source of the co-evolution gain.
The key quantity is a judge-gated local condition on the reflections that are actually accepted for reuse.
Under this condition, policy training improves accepted reflection utility,
and accepted reflection training improves task performance.
Moreover, the joint update on $J_{\mathrm{task}}+U$ yields a strictly larger first-order gain
in adapted task value than task-only training with a static memory objective.
This is the precise sense in which jointly training the evolving memory policy is stronger
than inference-time-only static memory.
The co-evolution effect survives because $g$ and $h$ still live in the same shared parameter space:
a task update changes the future reflection generator, and a reflection update changes the future task policy.
\end{remark}

\begin{remark}[Why parameter sharing matters]
\label{rem:shared-backbone-matters}
The first-order co-evolution mechanism above is specific to the shared-backbone design.
If rollout generation and reflection generation were parameterized by disjoint blocks
$\theta=(\theta_\pi,\theta_{\mathrm{ref}})$ with
$J_{\mathrm{task}}$ depending only on $\theta_\pi$
and $U$ depending only on $\theta_{\mathrm{ref}}$,
then
\[
\nabla J_{\mathrm{task}}(\theta) = (g_\pi,0),
\qquad
\nabla U(\theta) = (0,h_{\mathrm{ref}}),
\]
and hence
\[
\left\langle
\nabla J_{\mathrm{task}}(\theta),
\nabla U(\theta)
\right\rangle = 0.
\]
Thus the mutual-improvement phenomenon in Theorem~\ref{thm:judge-gated-coevo}
disappears at first order in the fully decoupled case.
This formalizes why a unified backbone can support genuine co-evolution,
rather than merely wrapping a static memory module around a separately trained policy.
\end{remark}

\section{Experiment Details}
\label{appen:exp}

We largely follow the experimental setup of DR Tulu~\citep{shao2025dr} for training, evaluation, infrastructure, and baseline selection, adapting where necessary to support our proposed components. Below, we provide the specific details.

\subsection{Training Details}
\label{app:training_details}

\subsubsection{Supervised Fine-Tuning}

We fine-tune Qwen3-8B~\citep{yang2025qwen3} using the LLaMA-Factory framework~\citep{zheng2024llamafactory} with DeepSpeed ZeRO-3. The SFT data generation process is described in Appendix~\ref{app:sft-data}. Key hyperparameters:

\begin{center}
\resizebox{0.57\linewidth}{!}{
\begin{tabular}{ll}
\toprule
\textbf{Hyperparameter} & \textbf{Value} \\
\midrule
Base model & Qwen3-8B \\
Training epochs & 5 \\
Learning rate & $4 \times 10^{-5}$ \\
LR scheduler & Cosine with 10\% warmup \\
Batch size (per device) & 1 \\
Gradient accumulation steps & 16 \\
Effective batch size & 128 (8 GPUs $\times$ 16 accum.) \\
Max sequence length & 16,384 tokens \\
Precision & BF16 \\
Weight decay & 0.0 \\
Tool output masking & Span masking on \texttt{<tool\_output>} \\
DeepSpeed stage & ZeRO-3 \\
GPUs & 8 $\times$ NVIDIA H100 80GB \\
\bottomrule
\end{tabular}
}
\end{center}

We apply span masking to tool output tokens so the model does not receive gradient on search results, learning only to generate its own reasoning, tool calls, and answers.

\subsubsection{Reinforcement Learning}

We train all RL runs using our modified open-instruct codebase with Ray-based distributed training and vLLM inference engines. The RL setup follows DR.~Tulu's codebase with extensions for stagewise credit assignment and meta-policy training operations. Key hyperparameters:

\begin{center}
\resizebox{0.6\linewidth}{!}{
\begin{tabular}{ll}
\toprule
\textbf{Hyperparameter} & \textbf{Value} \\
\midrule
Algorithm & GRPO / SS-GRPO \\
Rollouts per prompt & 8 \\
Unique prompts per step & 32 \\
Effective batch size & 256 (32 $\times$ 8) \\
Learning rate & $5 \times 10^{-7}$ \\
LR scheduler & Constant \\
KL coefficient ($\beta$) & 0.001 \\
KL estimator & KL3 \\
Clip ratio ($\eta$) & PPO-style clipping \\
Temperature & 1.0 \\
Max response length & 18,432 tokens \\
Max prompt length & 8,192 tokens \\
Max total (pack) length & 26,624 tokens \\
Max tool calls per trajectory & 10 \\
DeepSpeed stage & ZeRO-3 with CPU offloading \\
Gradient checkpointing & Enabled \\
\midrule
\multicolumn{2}{l}{\textit{Rubric judge}} \\
Judge model & Gemini Flash \\
Rubric generation model & Gemini Flash \\
Rubric buffer cap (per stage) & 3, 2, 2, 3 \\
\midrule
\multicolumn{2}{l}{\textit{Rubric bank (full \ours only)}} \\
Bank embedding model & Qwen3-Embedding-0.6B \\
Retrieval top-$k$ & 2 \\
Reflection trajectories sampled & 1 \\
Windowed curriculum $K$ & 3 \\
Bank save frequency & Every 10 steps \\
\bottomrule
\end{tabular}
}
\end{center}

One additional hyperparameter is the stage-dependent matrix \(\Lambda\) mentioned in~\Cref{sec:ssgrpo}, which we set to be 
\[
\Lambda =
\begin{pmatrix}
1.0 & 0.4 & 0.6 & 0.8 \\
0   & 1.0 & 0.4 & 0.8 \\
0   & 0   & 1.0 & 0.8 \\
0   & 0   & 0   & 1.0
\end{pmatrix}.
\]
The training data is the same DR.~Tulu RL dataset (\texttt{rl-research/dr-tulu-rl-data}), which contains ${\sim}$4.9K diverse deep research queries. All ablation runs (Baseline RL, SS-GRPO, full \ours) use the same 600-step budget with 2 nodes, starting from the same \ours-SFT checkpoint. The final \ours run uses 4 nodes for longer runs.

\paragraph{Reward design.}
Since the focus of this work is to improve open-ended answer quality through LLM-as-judge feedback, we use only rubric-based judge signals as RL rewards. 
Concretely, task-policy rewards come from the evolving stagewise rubrics used by the judge, and reflection-policy rewards come from judge scores over rubric-grounded reflections. 
We intentionally do not add auxiliary verifiable rewards, such as format rewards, citation rewards, or tool-use heuristics, which have been used in prior deep research RL recipes such as DR Tulu~\citep{shao2025dr}. 
This choice isolates the open-ended component of the problem: improvements should come from better semantic planning, research, synthesis, and experience reuse, rather than from optimizing easily checkable surface constraints. 
We believe future work could further improve performance by combining rubric-based rewards with calibrated citation, grounding, or formatting rewards, but our current design provides a cleaner and more holistic setting for studying RL beyond verifiable rewards.

\subsection{Evaluation Details}

\subsubsection{Long-Form Benchmarks}
\label{app:longform-eval}

We evaluate on four long-form benchmarks:

\begin{itemize}[nosep,leftmargin=*]
    \item \textbf{HealthBench}~\citep{arora2025healthbench}: Subsampled1000 medical questions spanning patient consultations, clinical guidelines, and health advice. Evaluation uses an LLM-as-judge (GPT-4) that grades each response against per-question rubrics on multiple axes (accuracy, completeness, context awareness, communication). The reported score is the overall rubric satisfaction rate.

    \item \textbf{ResearchQA}~\citep{yifei2025researchqa}: 756 scientific research questions with expert-authored rubric items. Evaluation uses an LLM-as-judge (GPT-4) that assesses coverage of each rubric item on a 5-point scale (Not at all / Barely / Moderately / Mostly / Completely). The reported score is the average normalized coverage.

    \item \textbf{DeepResearchBench (DRB)}~\citep{du2025deepresearch}: 100 complex research questions requiring long-form reports with citations. Evaluation uses RACE (Report Assessment via Citation Evaluation) scoring, which combines content quality and citation accuracy. The judge (Gemini) evaluates both the substance of the report and whether citations are accurate and well-grounded.

        \item \textbf{ResearchRubrics}~\citep{sharma2026researchrubrics}: 101 open-ended deep research prompts across diverse real-world domains, each paired with expert-written, prompt-specific rubrics. Evaluation uses an LLM-as-judge to assess rubric compliance under fine-grained criteria covering factual grounding, reasoning soundness, synthesis quality, relevance, clarity, and citation use. Rubric items include both positive and negative criteria with different weights, and the reported score is the binary score.
\end{itemize}

\subsubsection{Short-Form Benchmarks}
\label{app:shortform-eval}

We additionally evaluate on four short-form benchmarks to test out-of-domain transfer:

\begin{itemize}[nosep,leftmargin=*]
    \item \textbf{SimpleQA}~\citep{wei2024measuring}: 1,000 factual questions with exact-match evaluation via LLM grading (correct / incorrect / not attempted).
    \item \textbf{2WikiMultihopQA}~\citep{ho2020constructing}: 1000 multi-hop reasoning questions requiring evidence from multiple sources. LLM-as-judge evaluation.
    \item \textbf{WebWalker}~\citep{wu2025webwalker}: 680 web navigation questions. LLM-as-judge evaluation.
    \item \textbf{DeepSearchQA (DSQA)}~\citep{gupta2026deepsearchqa}: 900 search-intensive questions with exact-match grading.
\end{itemize}

All short-form benchmarks are evaluated in zero-shot using the same agent pipeline and search tools as the long-form benchmarks. No short-form data is used during RL training, making these evaluations fully out-of-domain.

\paragraph{Note on Benchmarks.}
Unlike Dr.~Tulu, we do not include the SQAv2 (ScholarQA) benchmark~\citep{asai2024openscholar,bragg2025astabench} in our evaluation. 
SQAv2 is highly citation-centric: it primarily measures whether a model can produce precise inline citations to specific academic papers, which corresponds more to the verifiable citation-grounding component of deep research than to the open-ended answer-quality setting studied here. 
Our RL objective intentionally uses only rubric-based judge rewards and does not include citation-specific rewards, so SQA-v2 would test a capability that our method is not designed to directly optimize. 
This does not mean that citation quality is absent from our evaluation: several of the remaining long-form benchmarks, such as ResearchQA and ResearchRubrics, still include citation-, grounding-, or evidence-use-related rubric items. 
However, these benchmarks evaluate citation use as one part of broader long-form answer quality rather than making precise academic citation the dominant objective. 
In addition, during SFT data generation, Gemini-3.1-Pro produces few and often unreliable academic citations even under strong prompting and rejection sampling, making it a weak teacher for this specific skill. 
We therefore leave citation-specific RL and citation-heavy academic evaluation as future work, and focus here on improving semantic quality, coverage, reasoning, and synthesis beyond verifiable rewards.

\textbf{Additional notes.}
For a fair comparison with baselines, the main results in~\Cref{tab:model_performance}, the RL ablations in~\Cref{fig:ablation_study}, and the short-form transfer results in~\Cref{tab:model_performance_2} are evaluated with direct zero-shot prompts: we do not append rubric-bank entries, previous reflections, or other experience examples at test time. 
The agent still follows the same benchmark protocol and uses the same external search tools when the task requires tool use. 
Thus, the reported gains are not simply due to giving \ours extra reflection context during evaluation. 
Although the task policy uses reflection context during RL training, the main evaluations test the resulting task policy directly, suggesting that reflection-conditioned training likely results in higher-quality rollouts. 
We separately study inference-time experience reuse in~\Cref{fig:ablation_scaffolding_scaling}(c), where rubric-bank entries are explicitly injected during evaluation.

\subsection{Infrastructure}

Our training and evaluation infrastructure builds on the DR~Tulu codebase~\citep{shao2025dr}, which provides the Ray-based distributed RL training loop, vLLM integration for rollout generation, asynchronous tool calling, sample packing, and one-step asynchronous training~\citep{noukhovitch2024asynchronous}. We use GRPO~\citep{shao2024deepseekmath} with token-level loss aggregation, tool output token masking~\citep{jin2025searchr1}, and a small KL penalty (0.001) for stability. Full hyperparameters are provided in the RL training table above.

We extend the base infrastructure in several ways to support \ours:

\begin{itemize}[nosep,leftmargin=*]
    \item \textbf{Stagewise scoring pipeline:} An asynchronous stagewise judge scoring module runs Gemini API calls in parallel with the training loop. Rubric generation (32 calls per step) and trajectory scoring (256 calls per step) are parallelized using \texttt{asyncio.gather} with exponential-backoff retries (see Appendix~\ref{app:stagewise_judge}).

    \item \textbf{Meta-policy training pipeline:} Reflection generation and meta-policy training are integrated via a three-thread architecture with one-step deferred reflection training, where reflection samples from step~$N$ are trained in Phase~A of step~$N{+}1$ while the inference engine generates new rollouts. This adds effectively no extra wall-clock overhead (see Appendix~\ref{app:async-implementation}).

    \item \textbf{Rubric bank module:} A thread-safe in-memory store with a FAISS index over query embeddings (Qwen3-Embedding-0.6B on CPU). Bank operations (retrieval, prompt injection, and background reflection generation) are integrated into the data preparation thread with one-step deferred execution (see Appendix~\ref{app:async-implementation}).

    \item \textbf{Checkpoint management:} In addition to standard model and optimizer checkpoints (via DeepSpeed), we save the rubric buffer state, rubric bank contents, and dataloader state at regular intervals to enable seamless training resumption with consistent memory and data ordering.
\end{itemize}

For evaluation, we use vLLM to serve the model checkpoint on a single GPU and run the agent pipeline with real-time tool execution via the MCP backend. Each benchmark evaluation varies around 2--18 hours depending on the number of examples, the complexity of the query, judging latency, and the average number of tool calls per trajectory.

\paragraph{Search tools.}
We expose two search tools to the agent. 
The primary tool is \texttt{google\_search}, implemented through Gemin-3-Flash (as the browsing model) with Google Search grounding enabled\footnote{https://ai.google.dev/gemini-api/docs/google-search}. 
It returns AI-synthesized summaries with grounding snippets, and is mainly used for general web search, fact-seeking questions, and broad real-world information gathering. 
The second tool is \texttt{snippet\_search}, implemented through the Semantic Scholar API\footnote{\url{https://api.semanticscholar.org/api-docs}}. 
It retrieves text snippets from academic papers and is mainly used when the query requires scholarly evidence, paper-level context, or scientific literature support. 
Both tools are called through the same MCP backend, and their outputs are wrapped in \texttt{<tool\_output>} blocks before being returned to the agent.

\subsection{Baselines}

We compare against baselines from three categories, following the selection of Dr.~Tulu~\citep{shao2025dr} and supplementing with additional recent models. Scores for existing baselines are taken from the Dr.~Tulu paper where available; for models not covered, we run the same evaluation pipeline.

\begin{itemize}[nosep,leftmargin=*]
    \item \textbf{Closed deep research models:} Proprietary systems with full deep research capabilities, including OpenAI Deep Research~\citep{openai2025deepresearch}, Gemini Deep Research~\citep{google2025gemini}, Perplexity Deep Research~\citep{perplexity2025sonardeepresearch}, Claude Sonnet Search, Gemini 3.1 Pro + Search, and GPT-5 + Search.

    \item \textbf{Fixed pipeline deep research models:} Models that use a fixed multi-stage pipeline (search $\rightarrow$ synthesis $\rightarrow$ report) without end-to-end RL training, including WebThinker~\citep{li2025webweaver} and Ai2 ScholarQA~\citep{singh2025ai2}.

    \item \textbf{Open deep research models:} Open-weight models trained end-to-end for deep research, including Search-R1-7B~\citep{jin2025searchr1}, WebExplorer-8B~\citep{liu2025webexplorer}, Tongyi DeepResearch-30B-A3B~\citep{tongyi2025deepresearch}, and DR Tulu-8B~\citep{shao2025dr} (both SFT and RL checkpoints).
\end{itemize}

DR~Tulu is the most direct comparison, since we use the same training data and operate in the same open deep research setting. Compared to Dr.~Tulu, \ours uses a different search tool (Google Search with Gemini grounding vs.\ Serper API) and a different SFT teacher (Gemini-3.1-Pro vs.\ GPT-5). Our search tool provides richer AI-synthesized summaries but fewer raw webpage URLs; their teacher is stronger but their search tool returns shorter snippets. We discuss this trade-off in the main text and control for it through ablation studies that isolate the contribution of our training recipe from search tool effects.

\section{Algorithm}

The algorithm of the paper is described in full in~\Cref{alg:rubricem}.


\algrenewcommand\algorithmiccomment[1]{\hfill{\color{teal}$\triangleright$ #1}}
\newcommand{\algsection}[1]{\Statex \vspace{1pt}\Statex \textcolor{teal}{$\triangleright$ #1}}

\algrenewcommand\algorithmiccomment[1]{\hfill{\color{teal}\scriptsize$\triangleright$ #1}}
\renewcommand{\algsection}[1]{\Statex \textcolor{teal}{\scriptsize$\triangleright$ #1}}

\begin{algorithm}[t]
\caption{\textsc{RubricEM}: Reinforcement Learning with Stage-Structured GRPO and Reflection Meta-Policy Training}
\label{alg:rubricem}
\small

\begin{algorithmic}[1]
\Require Structured SFT policy $\pi_{\theta}$, reference policy $\pi_{\rm ref}$, training queries $\mathcal{D}$,
tool environment $\mathcal{T}$, judge $\mathcal{J}$, stages $\mathcal{S}=\{\textsc{Plan},\textsc{Research},\textsc{Review},\textsc{Answer}\}$, stage matrix $\Lambda$, rollout size $n$, reflection samples $m$, active rubric caps $C_{1:K}$
\State Initialize active judge rubric buffers $\mathcal{B}^{\rm act}_{q,k}\leftarrow\emptyset$ for each query $q$ and stage $k$
\State Initialize persistent judge rubrics $\mathcal{B}^{\rm pers}_{q,k}$ when available, and agent rubric bank $\mathcal{M}\leftarrow\emptyset$
\State Initialize deferred reflection batch $\mathcal{P}^{\rm ref}_{0}\leftarrow\emptyset$
\For{RL step $t=1,\ldots,T$}
    \State $(Q_t,\mu_t)\leftarrow \Call{WindowedBatch}{\mathcal{D},t}$ \Comment{$\mu_t\in\{\textsc{cross},\textsc{within}\}$}
    \ForAll{$q\in Q_t$}
        \State $E_q\leftarrow \Call{Retrieve}{\mathcal{M},q,\mu_t}$ \Comment{retrieve reusable experience}
    \EndFor

    \algsection{Deferred reflection meta-policy update}
    \If{$\mathcal{P}^{\rm ref}_{t-1}\neq\emptyset$}
        \State Update shared $\pi_{\theta}$ on reflection tokens using GRPO rewards in $\mathcal{P}^{\rm ref}_{t-1}$
    \EndIf

    \algsection{Stage-structured task-policy rollout and judging}
    \State Initialize task batch $\mathcal{P}^{\rm task}_{t}\leftarrow\emptyset$
    \ForAll{$q\in Q_t$ in parallel}
        \State Sample tool-augmented rollouts $\{\tau_i\}_{i=1}^{n}\sim \pi_{\theta}(\cdot\mid q,E_q;\mathcal{T})$ under scaffold $\mathcal{S}$ 
        \Comment{task policy rollouts}
        \State $\Delta\mathcal{B}_{q,1:K}\leftarrow
        \Call{GenerateStageRubrics}{\mathcal{J},q,\{\tau_i\}_{i=1}^{n},\mathcal{B}^{\rm act}_{q,1:K}}$
        \Comment{contrast rollouts}
        \For{$k=1,\ldots,K$}
            \State $\mathcal{B}^{\rm act}_{q,k}\leftarrow
            \mathcal{B}^{\rm act}_{q,k}\cup \Delta\mathcal{B}_{q,k}$
            \State $\mathcal{B}_{q,k}\leftarrow
            \mathcal{B}^{\rm pers}_{q,k}\cup \mathcal{B}^{\rm act}_{q,k}$
        \EndFor
        \State $R_{i,k}\leftarrow
        \Call{ScoreStage}{\mathcal{J},q,\tau_i,\mathcal{B}_{q,k}}$ for all $i=1,\ldots,n$ and $k=1,\ldots,K$
        \For{$k=1,\ldots,K$}
            \State $G^{\Lambda}_{i,k}\leftarrow \sum_{j=k}^{K}\lambda_{k,j}R_{i,j}$ for all $i$
            \State $A_{i,k}\leftarrow
            \dfrac{G^{\Lambda}_{i,k}-{\rm mean}_{i'}(G^{\Lambda}_{i',k})}
            {{\rm std}_{i'}(G^{\Lambda}_{i',k})+\epsilon}$ for all $i$
            \State $\mathcal{B}^{\rm act}_{q,k}\leftarrow
            \Call{PruneByDiscrimination}{\mathcal{B}^{\rm act}_{q,k},\{R_{i,k}\}_{i=1}^{n},C_k}$
            \Comment{evolve stagewise rubric buffer}
        \EndFor
        \State $\mathcal{P}^{\rm task}_{t}\leftarrow
        \mathcal{P}^{\rm task}_{t}\cup\{(q,\tau_i,B_{i,1:K},A_{i,1:K})\}_{i=1}^{n}$
        \State Launch $\Call{PrepareReflectionBatch}{q,\{\tau_i,R_{i,1:K}\}_{i=1}^{n},\mathcal{B}_{q,1:K},\mathcal{M},m}$ asynchronously
    \EndFor

    \algsection{Task-policy update}
    \State Update $\pi_{\theta}$ on $\mathcal{P}^{\rm task}_{t}$ with SS-GRPO using stage advantages $A_{i,k}$ and KL to $\pi_{\rm ref}$ by Eq.~\ref{eqa:ss-grpo}
    \State $\mathcal{P}^{\rm ref}_{t}\leftarrow$ completed asynchronous reflection batches from step $t$
\EndFor
\State \Return trained policy $\pi_{\theta}$ and agent rubric bank $\mathcal{M}$

\Statex
\Function{PrepareReflectionBatch}{$q,\{\tau_i,R_{i,1:K}\}_{i=1}^{n},\mathcal{B}_{q,1:K},\mathcal{M},m$}
    \State Sample one trajectory $\tau_s$ uniformly from $\{\tau_i\}_{i=1}^{n}$ \Comment{fixed reflection context}
    \State Generate reflection candidates $\{s_{\ell}\}_{\ell=1}^{m}\sim \pi_{\theta}^{\rm refl}(\cdot\mid q,\tau_s)$
    \For{$\ell=1,\ldots,m$}
        \State $(u^{\rm within}_{\ell},u^{\rm cross}_{\ell})\leftarrow
        \Call{JudgeReflection}{\mathcal{J},q,\tau_s,s_{\ell},R_{s,1:K},\mathcal{B}_{q,1:K}}$
        \State $u_{\ell}\leftarrow \frac{1}{2}(u^{\rm within}_{\ell}+u^{\rm cross}_{\ell})$
    \EndFor
    \State $s^\star\leftarrow \arg\max_{\ell} u_{\ell}$ among valid reflection candidates
    \State $\mathcal{M}\leftarrow \Call{WriteBank}{\mathcal{M},q,s^\star}$ \Comment{store only the best accepted reflection}
    \State \Return $\{(q,\tau_s,s_{\ell},u_{\ell})\}_{\ell=1}^{m}$
\EndFunction
\end{algorithmic}
\end{algorithm}
\section{Limitations and Discussions}
\label{appen:limitation}

\paragraph{Limitations.}
Our experiments involve long-horizon agentic RL with search tool calls and external LLM judging, which makes the training loop more sensitive to infrastructure instability than standard offline RL or supervised fine-tuning. 
During training, we occasionally observed API delays and inconsistent network connections, so wall-clock execution and some rollout--judge latencies were not perfectly controlled across all RL steps. 
In the main large-scale RL run, the training server also had to be shut down and restarted several times. 
Although we restored training from checkpoints, such interruptions can introduce additional staleness in the asynchronous reflection branch, rubric bank, and judge-feedback pipeline beyond the intended one-step lag. 
We therefore view our reported results as reflecting a realistic but not fully ideal infrastructure setting; in principle, a more stable, uninterrupted training environment could reduce stale feedback and further improve the efficiency of our method. 
Another limitation is that we use Gemini Flash as a cost-effective judge for rubric generation, stagewise scoring, and reflection evaluation. 
A stronger or more specialized judge could likely provide more accurate stage-level credit and higher-quality reflection rewards, especially for subtle long-form research tasks, but would also increase cost and latency. 
Exploring the scaling behavior of \ours with stronger judges, multiple judges, or calibrated judge ensembles is an important direction for future work.

\paragraph{Discussion and broader impact.}
More broadly, our results suggest that LLM-generated rubrics should be treated not only as evaluation artifacts, but as a general interface for structuring agent behavior, assigning semantic credit, and accumulating reusable experience. 
They also point to a training-time view of Meta-RL for language agents: rather than using reflection, memory, or self-improvement only as inference-time prompting mechanisms, one can train a meta-policy during RL so that judged experience directly shapes both the agent's parameters and its reusable textual memory. 
While we study open-ended deep research beyond verifiable rewards, the same recipe may transfer to other domains where quality is multidimensional and hard to reduce to exact-answer correctness, such as writing assistance, data analysis, scientific review, tutoring, and complex tool-use workflows. 
At the same time, rubric-guided meta-policies inherit the risks of their judges and rubrics: poorly specified criteria can reinforce shallow preferences, biased standards, or overconfident synthesis, and reusable memories may propagate these errors across tasks. 
Future work should study more robust rubric generation, stronger or ensemble judges, human-auditable rubric banks, uncertainty-aware reflection training, and safety-aware criteria for domains where agentic research outputs may influence real decisions.


\end{document}